\newcommand{\commentout}[1]{}
\newcommand{\mk}[1]{{\color{blue}[MK: #1]}}
\newtheorem{mydef}{Definition}
\newtheorem{myprobl}{Problem}
\newcommand{\GP}{\bm{f}}
\newcommand{\dataset}{\mathcal{D}}
\newcommand{\defnotation}{:=}
\newcommand{\inputspace}{\mathbb{R}^d}
\newcommand{\outputspace}{\mathbb{R}^m}
\newcommand{\x}[1]{x^{(#1)}}
\newcommand{\y}[1]{y^{(#1)}}
\newcommand{\tss}[0]{{N}}
\newcommand{\ali}[0]{{a_L^{(i)}}}
\newcommand{\bli}[0]{{b_L^{(i)}}}
\newcommand{\aui}[0]{{a_U^{(i)}}}
\newcommand{\bui}[0]{{b_U^{(i)}}}
\newcommand{\covvec}{\mathbf{r}}
\newcommand{\piInf}[1]{\pi_{\min}(#1)}
\newcommand{\piSup}[1]{\pi_{\max}(#1)}
\newcommand{\piInfL}[1]{\pi_{\min}^L(#1)}
\newcommand{\piInfU}[1]{\pi_{\min}^U(#1)}
\newcommand{\piSupL}[1]{\pi_{\max}^L(#1)}
\newcommand{\piSupU}[1]{\pi_{\max}^U(#1)}
\DeclareMathOperator\erf{erf}
\DeclareMathOperator*{\argmax}{arg\,max}
\DeclareMathOperator*{\argmin}{arg\,min}
\newcommand{\LL}[1]{{\color{red}[LL: #1]}}
\newcommand\munderbar[1]{%
  \underaccent{\bar}{#1}}
\begin{document}

\title{Adversarial Robustness Guarantees for Gaussian Processes}

\author{\name Andrea Patan\`e \email andrea.patane@cs.ox.ac.uk \\
       \addr Department of Computer Science\\
       University of Oxford
       \AND
       \name Arno Blaas \email arno@robots.ox.ac.uk \\
       \addr Department of Engineering Science\\
       University of Oxford
              \AND
       \name Luca Laurenti\footnote{Current address: Delft Center for System and Control (DCSC), TU Delft.} \email luca.laurenti@cs.ox.ac.uk \\
       \addr Department of Computer Science\\
       University of Oxford
               \AND
       \name Luca Cardelli \email luca.cardelli@cs.ox.ac.uk \\
       \addr Department of Computer Science\\
       University of Oxford
              \AND
       \name Stephen Roberts \email  sjrob@robots.ox.ac.uk \\
       \addr Department of Engineering Science\\
       University of Oxford
       \AND
       \name Marta Kwiatkowska \email marta.kwiatkowska@cs.ox.ac.uk   \\
       \addr Department of Computer Science\\
       University of Oxford
       
       }

\editor{ }

\maketitle

\begin{abstract}


Gaussian processes (GPs) enable principled computation of model uncertainty, making them attractive for safety-critical applications. 
Such scenarios demand that GP decisions are not only accurate, but also robust to perturbations.
In this paper we present a framework to analyse adversarial robustness of GPs, defined as invariance of the model's decision to bounded perturbations.
Given a compact subset of the input space $T\subseteq \mathbb{R}^d$, a point $x^*$ and a GP, we provide provable guarantees of adversarial robustness of the GP by computing lower and upper bounds on its prediction range in $T$.
We develop a branch-and-bound scheme to refine the bounds and show, for any $\epsilon > 0$, that our algorithm is guaranteed to 
converge to values $\epsilon$-close to the actual values in finitely many iterations.
The algorithm is anytime and can handle 
both regression and classification tasks, with analytical formulation for most kernels used in practice. 
We evaluate our methods on a collection of synthetic and standard benchmark datasets, including SPAM, MNIST and FashionMNIST. We study the effect of approximate inference techniques on robustness and demonstrate how our method can be used for interpretability. 
Our empirical results suggest that the adversarial robustness of GPs increases with accurate posterior estimation.
\commentout{
We investigate the adversarial robustness of Gaussian Process (GP) models.  Given a compact subset of the input space $T\subseteq \mathbb{R}^d$ enclosing a test point $x^*$ and a GP trained on a dataset $\mathcal{D}$, we provide provable guarantees over the absence of adversarial examples in $T$ with respect to the GP optimal decision for both regression and classification tasks.
In order to do so, we show how functions lower- and upper-bounding the GP output in $T$ can be derived, and implement these using a branch and bound optimisation algorithm. For any error threshold $\epsilon > 0$, selected \emph{a priori}, we show that our algorithm is guaranteed to reach values $\epsilon$-close to the actual robustness values in finitely many iterations.
We apply our method to investigate the robustness of GP models on a 2D synthetic dataset, the SPAM dataset, a subset of the MNIST dataset, and a subset of the Fashion-MNIST dataset. We provide comparisons between different GP training techniques, including sparse methods, and show how our method can be used for interpretability analysis. 
}
\end{abstract}

\begin{keywords}
  Gaussian Processes, Adversarial Robustness, Non-linear Optimisation, Bayesian Learning, Branch-and-Bound methods
\end{keywords}

\section{Introduction}
Adversarial examples are input points intentionally crafted to trick a machine learning model into a misclassification.
Imperceptible perturbations that can fool deep learning models in computer vision have been popularised by \cite{szegedy2013intriguing} and, in the context of security, account for the growth in adversarial machine learning techniques, see review in  \citep{biggio2018wild}.
Since test accuracy fails to account for the behaviour of a model in adversarial settings, algorithmic techniques for quantifying \emph{adversarial robustness} of machine learning models are needed to aid their deployment in safety-critical scenarios. As a consequence, a number of methods that provide exact or approximate guarantees on the model output have been developed for neural networks, e.g., \citep{huang2017safety,katz2017reluplex,zhang2018efficient}.


Gaussian process (GP) models~\citep{williams2006gaussian} provide a flexible probabilistic framework for performing inference over functions, which integrates information from prior and data into a predictive posterior distribution that informs the optimal model decision.  
GPs are particularly attractive in view of their favourable analytical properties and support for Bayesian inference.  
One advantage of GPs compared to
neural network models is that they support the computation of uncertainty over model predictions, which can then be propagated through the decision-making pipelines.  
Various notions of robustness have been investigated for Gaussian process models, such as robustness against outliers \citep{kim2008outlier} or against labelling errors \citep{hernandez2011robust}. However, to the best of our knowledge,  studies of adversarial robustness of GPs have been limited to statistical (i.e., input distribution dependent) \citep{abdelaziz2017data} and heuristic analyses \citep{grosse2018limitations,bradshaw2017adversarial}.

\commentout{
Adversarial examples are input points intentionally crafted to trick a model into a misclassification. They have raised serious concerns about the security and robustness of models learned from data \citep{szegedy2013intriguing,biggio2018wild}.
Since test accuracy fails to account for the behaviour of a model in adversarial settings, algorithmic techniques for quantifying \emph{adversarial robustness} of machine learning models are needed to aid their deployment in safety-critical scenarios \citep{huang2017safety,katz2017reluplex}.
Gaussian process (GP) models provide a flexible probabilistic framework for performing inference over functions.  
GPs are particularly attractive in view of their favourable analytical properties, support for Bayesian inference and the computation of uncertainty over model predictions, which can then be propagated through the decision-making pipelines. 
However, while there has been an increase in availability of methods and tools for robustness evaluation of machine learning models,  
the majority have focused on neural networks, see e.g.,  \citep{huang2017safety,katz2017reluplex,zhang2018efficient}\mk{add refs}\LL{Added Crown and Reluplex},
and, 
studies of adversarial robustness of GP predictions have been limited to statistical (i.e., input distribution dependent) \citep{abdelaziz2017data} and heuristic analyses \citep{grosse2018limitations,bradshaw2017adversarial}. 
}

In this work, we develop a novel algorithmic framework to quantify the adversarial robustness of optimal predictions of Gaussian process models trained on a dataset $\mathcal{D}$. To this end, we adapt the notion of adversarial robustness commonly employed for neural networks models to the GP setting, 
defined as the invariance of the decision in a small neighbourhood of a test point~\citep{huang2017safety}, and thus study the worst-case effect of bounded perturbations of the input on the GP optimal decision. 
We represent bounded perturbations by a compact subset of the input space $T\subseteq \mathbb{R}^d$ enclosing a test point $x^* \in \inputspace$, 
and consider the prediction range of the GP over $T$.
Similarly to~\citep{ruan2018reachability}, we observe that, to provide provable guarantees on the model prediction over $T$, it suffices to compute the minimum and maximum of the reachable prediction range. 
Unfortunately,  exact direct computation of the minimum and maximum class probabilities over compact sets is not possible, as these would require providing an exact solution of a global non-linear optimisation problem, for which no general method exists \citep{neumaier2004complete}. 
Instead, we approximate each extremum of the prediction range by lower and upper bounds. 
We show how such upper and lower bounds for the minimum and maximum prediction probabilities of the GP can be computed on any given compact set $T$, and then iteratively refine these bounds in a branch-and-bound algorithmic optimisation scheme until convergence to the minimum and maximum is obtained.
The method we propose is anytime (the bounds provided are at every step an over-estimation of the actual classification ranges over $T$, and can thus be used to provide guarantees) and $\epsilon$-exact (the actual values are reached in finitely many steps up to an error $\epsilon$ selected a-priori).
Our framework can handle robustness for
both regression and classification tasks, with analytical formulation for most kernels used in practice, including generalised spectral kernels. 

\commentout{
Formally, given a compact subset of the input space $T\subseteq \mathbb{R}^d$ enclosing a test point $x^*$ and a GP trained on a dataset $\mathcal{D}$, we provide provable guarantees of adversarial robustness of the GP optimal decision by computing lower and upper bounds on the extrema of the prediction range of GP output with respect to input perturbations contained in $T$.
We develop a branch-and-bound algorithmic optimisation scheme to compute the bounds and show, for any error threshold $\epsilon > 0$ selected \emph{a priori}, that our algorithm is guaranteed to 
converge to values $\epsilon$-close to the actual values in finitely many iterations.
Our methods are anytime, incorporate prediction uncertainty, and can handle 
both regression and classification tasks, with analytical formulation for most kernels used in practice.

and a compact subset of the input space $T\subseteq \mathbb{R}^d$, we pose the problem of computing guarantees on the non-existence of adversarial examples in $T$.
{We derive results for both for the classification, including the multi-class case, and the regression setting.}
In order to do so for classification we compute the maximum and minimum of the GP class probabilities over all $x \in T$, from which adversarial guarantees directly follow and show that the regression case is obtained as a special case of this formalism.
Unfortunately,  exact direct computation of the minimum and maximum class probabilities over compact sets is not possible, as these would require providing an \textit{exact solution} of a global non-linear optimisation problem, for which no general method exists \citep{neumaier2004complete}. 
We show how upper and lower bounds for the maximum and minimum classification probabilities of the GP can be computed on any given compact set $T$, and then iteratively refine these bounds in a branch and bound algorithmic scheme until convergence to the minimum and maximum is obtained.
The method we propose is anytime (the bounds provided are at every step an over-estimation of the actual classification ranges over $T$, and can hence be used to provide guarantees) and $\epsilon$-exact (the actual values are retrieved in finitely many steps up to an error $\epsilon$ selected a-priori).
}

We implement the methods {in Matlab}\footnote{The code can be found at  \url{https://github.com/andreapatane/check-GPclass}.} and apply our approach to analyse the robustness profile of GP models on 
a synthetic two-dimensional dataset, the SPAM dataset and feature-based analysis of both binary and $3$-class subsets of the MNIST and Fashion-MNIST (F-MNIST) datasets. 
In particular, we compare the guarantees computed by our method with the robustness estimation approximated by adversarial attack methods for GPs  \citep{grosse2018limitations}, discussing in which settings the latter fails. Then, we analyse the effect of approximate Bayesian inference techniques and hyper-parameter optimisation procedures on the GP model robustness.
Across the four datasets analysed here, we observe that approximations based on Expectation Propagation \citep{minka2001expectation} give more robust classification models than {approximations based on} Laplace approximation. We further find that 
GP robustness increases with the number of hyper-parameter training epochs, and that sparse GP model robustness generally increases with the number of training points (for a fixed number of inducing points).
Finally, we show how our framework can be used to perform global interpretability analysis of GP predictions, highlighting differences over LIME~\citep{ribeiro2016should} 

To the best of our knowledge, ours is the first comprehensive framework that provides methods to compute provable guarantees for the adversarial robustness of Gaussian process models. In summary, the paper presents the following contributions:
\begin{itemize}
    \item We design a flexible framework for the bounding of the posterior mean and variance of GPs in compact subsets of the input space.
    \item Using the mean and variance bounds, we develop methods to lower- and upper-bound the minimum and maximum of a GP output over compact sets for the adversarial analysis of GPs in both classification and regression settings.
    \item We incorporate the bounding procedures in a branch-and-bound algorithmic optimisation scheme, which we show converges for any specified error  $\epsilon >0$ in finitely many steps. 
    \item We empirically evaluate the robustness of a variety of GP models on four datasets, for different training regimes including sparse approximations, and demonstrate how our method can be used for global interpretability analysis.
\end{itemize}

A preliminary version of this work appeared in \citep{blaas2020adversarial}. This paper extends previous work in several aspects. In \citep{blaas2020adversarial} we provide analytical bounds only for GP classification using a probit link function and consider GPs with the squared exponential kernels. Here, we also derive analytical bounds in the case of logit link function, show that regression models can be analysed using a subset of the methods developed for classification, and extend our framework to a class of kernel functions that  satisfy certain smoothness conditions (see Section \ref{sec:mean_and_var_opts}).
Furthermore, we extend the experimental evaluation to show that our framework can be employed to analyse the robustness of sparse GP approximations and additionally consider the Fashion-MNIST dataset.


This paper is structured as follows. In Section \ref{sec:background} we introduce background on GP regression and classification. The definition of adversarial robustness of GP models and the problem statements we consider are given in Section \ref{sec:prob_form}. Computation of adversarial robustness of a GP requires lower- and upper-bounding of the variation of the GP mean and variance in a neighbourhood of a test point. These bounds are presented in Section \ref{sec:mean_and_var_opts}  and then employed in Section \ref{sec:adv_rob_method} to compute adversarial robustness for both (binary and multiclass) classification and regression. A branch-and-bound algorithm that incorporates the bounding methods is presented in Section  \ref{sec:bnb_alg}, where we also show that it is guaranteed to converge to the true adversarial robustness of a GP model. Finally, empirical results on multiple datasets are discussed in Section \ref{sec:experiments_adversarial}.

\subsection{Related Works}
\label{Related Works}
Following on from seminal work that drew attention to deep learning models being susceptible to adversarial attacks in computer vision \citep{szegedy2013intriguing} and security \citep{biggio2018wild}, a range of techniques have been proposed for the analysis of adversarial robustness of machine learning models. The developed techniques mainly focus on neural networks and the prevailing approach is to compute worst-case guarantees on the model prediction at a given test point \citep{huang2017safety,katz2017reluplex}. Various approaches have been considered to compute such robustness measures, including  constraint solving~\citep{huang2017safety,katz2017reluplex}, optimisation~\citep{ruan2018reachability,DBLP:journals/jmlr/BunelLTTKK20},  convex relaxation~\citep{zhang2018efficient}, and abstract interpretation~\citep{gehr2018ai2}.
Such methods have also been extended to Bayesian Neural Networks (BNNs) (i.e., neural networks with a prior distribution over their weights and biases) with both sampling-based~\citep{cardelli2019statistical,wicker2021bayesian} and numerical~\citep{wicker20a,berrada2021verifying}  solution methods. However, these techniques rely on the parametric nature of neural networks, and therefore cannot be directly applied to GPs.

While various notions of robustness have been studied for Gaussian process models, such as robustness against outliers \citep{kim2008outlier} or against labelling errors \citep{hernandez2011robust},  studies of adversarial robustness of GPs have been limited to heuristic analyses \citep{grosse2018limitations,bradshaw2017adversarial} and binary classification \citep{smith2019adversarial}. In particular, in \citet{smith2019adversarial},  the authors give guarantees for GPs in a binary classification setting under the $\ell_0$-norm and  only consider the mean of the distribution in the latent space without taking into account the uncertainty intrinsic in the GP framework. In contrast, our approach also considers multi-class classification and regression, takes into account the full posterior distribution, and allows for exact (up to $\epsilon>0$) computation under any $\ell_p$-norm. 

Formal probabilistic guarantees for learning with GPs have been developed in the context of GP optimisation \citep{bogunovic2018adversarially} and GP regression \citep{cardelli2018robustness}.  \citet{cardelli2018robustness}  derive an upper bound on the probability that a function sampled from a trained GP is invariant to bounded perturbations at a specific test point, 
whereas \citet{bogunovic2018adversarially} consider a GP optimisation algorithm, in which the returned solution is guaranteed to be robust to adversarial perturbations with a certain probability. 
We note that our problem formulation is different, and the methods developed in the above papers cannot 
be applied to classification with GPs due to its non-Gaussian nature. Further, our approach yields guarantees on the optimal model decision rather than on the latent GP posterior, is  guaranteed to converge to  any given error $\epsilon >0$ in finite time, and is anytime (i.e., at any time it gives sound  upper and lower bounds of the classification probabilities).


\section{Bayesian Learning with Gaussian Processes}\label{sec:background}
%
%
%
This section provides background material on Gaussian process modelling for regression and classification. More information can be found in \citep{williams2006gaussian}. 
An $\mathbb{R}^m$-valued Gaussian process over a real-valued vector space $\mathbb{R}^d$ is a particular stochastic process $\GP : \Omega \times \mathbb{R}^d \rightarrow \mathbb{R}^m$, where $
\Omega$ is a suitable sample space, such that for every finite subset of input points their joint distribution under the GP is Gaussian.
Namely, denoting with  $\GP(x) := \GP(\cdot,x) : \Omega \rightarrow \mathbb{R}^m$ the random variable induced by the stochastic process in the input point $x$, and given a collection of input points  $\mathbf{x} = [\x{1},\ldots,\x{\tss}]$, with $\x{i} \in \inputspace$, a GP is such that $\GP(\mathbf{x}) \sim \mathcal{N}(\mu(\mathbf{x}),\Sigma_{\mathbf{x},\mathbf{x}})$, where $\mu : \inputspace \rightarrow \outputspace$ is the mean function and $\Sigma : \inputspace \times \inputspace \rightarrow \mathbb{R}^{m^2}$ is the covariance (or \textit{kernel}) function, which fully characterise the behaviour of the GP. 

Consider now a dataset $\dataset = \{ (x^{(i)},y^{(i)}) \; | \; x^{(i)} \in \inputspace, \; y^{(i)} \in \mathcal{Y}, \; i=1,\ldots,\tss  \}$ for some input space $\inputspace$ and output space $\mathcal{Y}$.
We denote with $\mathbf{x} = [\x{1},\ldots,\x{\tss}]$ the aggregate vector of input points, and similarly $\mathbf{y}= [\y{1},\ldots,\y{\tss}]$ is the aggregate vector of output points. 
We let $\mathcal{Y}$ to be (a subset of) $\outputspace$ for regression, and the discrete set $\{1,\ldots,m \}$ in case of an $m$-class classification problem.
%
Gaussian processes provide a probabilistic framework for performing inference over functions, where a prior is combined with data through an appropriate likelihood to obtain a posterior process that is consistent with the prior and data. 
In a Bayesian framework this is done by introducing a latent space $\mathcal{F}  = \outputspace$, and defining a GP prior $\GP$ over the latter by instantiating a specific form for its mean, $\mu$, and kernel, $\Sigma$, functions. 
%
The prior is updated to take into account the information contained in the dataset $\mathcal{D}$ by means of the Bayes formula
   $ p(f(\mathbf{x}) | \dataset) \propto p(\mathbf{y} | f(\mathbf{x})) p(f(\mathbf{x}))$, 
where $p(\mathbf{y} | f(\mathbf{x}))$ denotes the likelihood function,
resulting in the \emph{posterior} distribution,  $p(f(\mathbf{x}) | \dataset)$, over the latent space $\mathcal{F}$. 
%
%
Given a previously unseen point $x^*$, the \emph{predictive posterior} distribution over its associated output  $y^*$ can be obtained by 
marginalising the posterior evaluated on $x^*$ over the latent space, i.e., 
    $p(y^* | \dataset) = \int_{\mathcal{F}}{  p(y^* | f(x^*))  p(f(x^*) | \dataset)  df(x^*)}$.
For practical applications, we typically extract a point value, $\hat{y}(x^*)$,
from the posterior predictive distribution $p(y^* | \dataset)$ that satisfies specific criteria.  
In Bayesian decision theory, one proceeds by assuming a loss function, $L(y^*,\hat{y}^*)$, and minimising it with respect to the posterior distribution on the specific test point, that is,
\begin{align*}
    \hat{y}(x^*) = \argmin_{y \in \mathcal{Y}} \int_{\mathcal{Y}}{  L(y^*,y)  p(y^*  | \dataset)   dy^*}.
\end{align*} 

Since $y$ is a continuous variable for regression models and a discrete variable for classification, different likelihood and loss functions are used in each case, resulting in different treatment for the posterior distribution and the model decision. 
    Below, we review the specific details separately. 


\paragraph{Regression}
For regression models we typically assume a Gaussian likelihood function with uncorrelated noise $\sigma_{\text{noise}}^2$, i.e., $p(y|f) = \mathcal{N}(y|f,\sigma_{\text{noise}}^2 I_\tss)$. 
The posterior distribution over $\mathcal{F}$ is still Gaussian, and is characterised by the following inference equations for its posterior mean and variance:
\begin{align}
    \bar{\mu}(x^*) &= \mu(x^*) + \Sigma_{x^*, \mathbf{x}} \mathbf{t}  \label{eq:mean_posterior} \\
    \bar{\Sigma}({x^*})  &:= \bar{\Sigma}_{x^*,x^*} = \Sigma_{x^*,x^*} - \Sigma_{x^*,\mathbf{x}} S \Sigma_{\mathbf{x},x^*}, \label{eq:var_posterior}
\end{align}
where $S$ and $\mathbf{t}$ are computed using the conditioning formula for Gaussian distributions \citep{williams2006gaussian}.
Namely, $S$ is a matrix in $\mathbb{R}^{\tss \times \tss}$ with $S = (\Sigma_{\mathbf{x},\mathbf{x}} + \sigma^2 I_\tss )^{-1}  $ and $\mathbf{t}$ is a vector in $\mathbb{R}^{\tss}$ with $\mathbf{t} =  S (\mathbf{y}-\mu(\mathbf{x})) $. 
Furthermore, the predictive posterior distribution over $\mathcal{Y}$ has the same mean as the posterior 
and variance equal to that of the posterior plus the underlying noise $\sigma_{\text{noise}}^2$.
Assuming a symmetric loss (e.g.\ the squared distance loss), which we refer to as the canonical loss for regression, 
the optimal \emph{model decision} is simply given by the posterior mean, i.e., $\hat{y}(x^*) = \mu(x^*)$. 

\paragraph{Classification}
For classification models, the likelihood is generally defined in terms of a sigmoid function $p(y = i|f) = \sigma_i(f)$, for $i \in \{1,\ldots,m \}$, as the \textit{probit} or \textit{softmax} function. 
Unfortunately, this does not result in a Gaussian posterior and is intractable.
Instead, analytical approximations are applied to estimate a Gaussian distribution of the form $q(f | \mathcal{D})  = \mathcal{N}( f \, | \, \bar{\mu}(x^*),\bar{\Sigma}({x^*})  ) $, which approximates the true distribution  $p(f | \mathcal{D})$. 
In this paper we consider $q$ derived using the \textit{Laplace} approximation method \citep{williams1998bayesian}, the \emph{Expectation Propagation} (EP) method \citep{minka2001expectation}, as well as several \emph{sparse approximation} techniques \citep{snelson2005sparse}; more details can be found in Section \ref{sec:experiments_adversarial}.
We observe that, in all these settings, the inference equations for $q(f | \mathcal{D})$ have the same 
form as those given in Equations \eqref{eq:mean_posterior} and \eqref{eq:var_posterior},
with $S$ and $\mathbf{t}$ defined depending on the method chosen  \citep{williams2006gaussian}.\footnote{We remark that this form of inference equations is common for Gaussian approximations, as it results from  conditioning formulas for multivariate Gaussian distribution. Our method can thus be applied in any situation in which Gaussian approximations are used (i.e., not necessarily resulting from Laplace or EP techniques).
}
Once the approximate posterior $q$ has been computed, the predictive posterior distribution for class $i \in \{1,\ldots,m \}$  is:
\begin{align}\label{eq:predictive_posterior_gen}
   \pi_i(x^*) := p( y^* = i | \dataset) = \int_{\mathcal{F}}{   \sigma_i(\xi)  \mathcal N(\xi | \bar{\mu}(x^*),\bar{\Sigma}{(x^*)})  d\xi}.
\end{align}
Since Equation \eqref{eq:predictive_posterior_gen} includes a non-linear multi-dimensional integral, its solution cannot in general be found in closed form. 
However, 
when there are two classes, i.e.\ $\mathcal{Y} = \{1,2 \}$, it suffices to compute $\pi_1$ and then simply set $\pi_2 = 1 - \pi_1$.
This allows us to simplify 
the latent variable space as uni-dimensional, so that $\xi \in \mathbb{R}$.
Assuming standard $0$-$1$ loss\footnote{Our method is sufficiently general to accommodate other loss functions, e.g., weighted loss, which can be computed from the predictive posterior.}, which we consider canonical for classification, the optimal \emph{model decision} is the class that maximises the predictive posterior distribution, that is, $ \hat{y}(x^*) = \argmax_{i=1,\ldots,m}  p(y^* = i | \dataset)$.

\section{Problem Formulation}\label{sec:prob_form}
Let $\GP$ be a Gaussian process trained on a dataset $\dataset$. We wish to analyse its \emph{adversarial robustness}, in the sense of studying the worst-case effect of bounded perturbations on the model's optimal decision $\hat{y}(x)$.
For a generic test point $x^*$, we represent the possible adversarial perturbations by defining a compact neighbourhood $T$ around $x^*$, 
and measure the changes in the decisions caused by limiting the perturbations to lie within $T$.
\begin{mydef}[Adversarial Robustness w.r.t.\ Model Decision]\label{def:safety_general}
Let $T \subseteq \inputspace$  be a compact subset and $x^* \in T$.
Consider a GP $\GP$, a loss function $L$ and the resulting optimal decision $\hat{y}(\cdot)$.
Given an $\ell_p$ norm $|| \cdot ||$, we say that $\GP$ is $\delta$-\emph{adversarially robust} in $T$ at a point $x^*$ with respect to the optimal decision induced by $L$ iff 
\begin{align}\label{eq:general_condition}
|| \hat{y}(x^*) - \hat{y}(x) || \leq \delta \qquad \forall x \in T. 
\end{align}
\end{mydef}
In the remainder of this paper, we will formulate a method for the worst-case analysis of the GP decision function $\hat{y}(x)$, which enables computing provable guarantees on whether a given $\GP$ is adversarially robust around a test point $x^*$ (that is, whether it satisfies the condition in Equation \eqref{eq:general_condition}). 
Since regression and classification differ in how optimal decisions are made, which is reflected in the definition of the function $\hat{y}(\cdot)$, to simplify the presentation we will discuss the two cases separately.
\subsection{Classification}\label{sec:class_def}
For classification problems, adversarial robustness is customarily defined in terms of \emph{invariance} of the decision over the neighbourhood of an input~\citep{huang2017safety,ruan2018reachability}. 
This can be obtained by selecting $\delta = 0$ in Definition \ref{def:safety_general}, and noting that for classification $\hat{y}(x) = \argmax_{i \in \{1,\ldots,m\}} \pi_i(x)$.
\begin{mydef}[Adversarial Robustness in Classification]\label{def:safety_classification}
Let $T \subseteq \inputspace$ be a compact subset and $x^* \in T$. Consider a classification GP $\GP$ and its predictive posterior distribution  $\pi_i(x), i \in \{1,\ldots,m\}$, defined as in Equation \eqref{eq:predictive_posterior_gen}.
Then we say that $\GP$ is \emph{adversarially robust} in $T$ at a point $x^*$  iff
\begin{align}\label{eq:condition}
\argmax_{i \in \{1,\ldots,m\}} \pi_i(x) = \argmax_{i \in \{1,\ldots,m\}} \pi_i(x^*) \qquad \forall x \in T. 
\end{align}
\end{mydef}
Adversarial robustness therefore provides guarantees that the classification decision is not influenced by adversarial perturbations applied to $x^*$, as long as the perturbations are constrained to remain within $T$.
Recall that the optimal decision for classification accounts for model uncertainty by moderating class probabilities with respect to the posterior distribution. 
%
In general, the outcome differs from the most likely class, because the decisions are affected by variance. 
%
%

Similarly to \citep{ruan2018reachability}, we note that,
in order to check the condition in Equation \eqref{eq:condition}, it suffices to compute the minimum and maximum of the \textit{prediction ranges} in $T$, i.e.:
\begin{align}\label{eq:min_max_pi}
    \pi_{\min,i}(T) = \min_{x \in T} \pi_i(x) \qquad \pi_{\max,i}(T) = \max_{x \in T} \pi_i(x) 
\end{align}
for $i = 1,\ldots,m$.
%
It is easy to see that the knowledge of $\pi_{\min,i}(T)$ and $\pi_{\max,i}(T)$ for all $i = 1,\ldots,m$ can be used to provide guarantees on the absence of \emph{adversarial attacks} of the model output, where an adversarial attack is a point $x \in T$ that is classified differently from $x^*$, that is, such that $\argmax_{i \in \{1,\ldots,m\}} \pi_i(x) \neq \argmax_{i \in \{1,\ldots,m\}} \pi_i(x^*)$.
More specifically, by letting $\hat{y}^* = \argmax_{i \in \{1,\ldots,m\}} \pi_i(x^*)$ and  defining the vector:
\begin{align}\label{eq:vec4ver}
    \pi^*_{i}(T) = \begin{cases} \pi_{\max,i}(T) \quad \text{if} \; i \neq \hat{y}^* \\
                                       \pi_{\min,i}(T) \quad \text{if} \; i = \hat{y}^*,
                                     \end{cases}
\end{align}
we can check whether the (stronger) condition $\argmax_{i \in \{1,\ldots,m\}} \pi^*_{i}(T) = \argmax_{i \in \{1,\ldots,m\}} \pi_i(x^*)$ holds.
That is, in order to decide whether a GP classification model $\GP$ satisfies Definition \ref{def:safety_classification} around a point $x^*$ we need to solve the following problem.
\begin{myprobl}[Computation of Adversarial Prediction Ranges]\label{prob:adv_pred}
Let $T \subseteq \inputspace$ be a compact subset. Consider a classification GP $\GP$ and its predictive posterior distribution $\pi_i(x), i \in \{1,\ldots,m\}$, defined as in Equation \eqref{eq:predictive_posterior_gen}. For $i = 1,\ldots,m$, compute the adversarial prediction ranges for  $\pi_i(x)$ in $T$, that is:
\begin{align*}
    \pi_{\min,i}(T) = \min_{x \in T} \pi_i(x) \qquad \pi_{\max,i}(T) = \max_{x \in T} \pi_i(x). 
\end{align*}
\end{myprobl}

Unfortunately, the solution of Problem \ref{prob:adv_pred} requires solving $2m$ non-linear optimisation problems, for which no general solution method exists 
\citep{neumaier2004complete}.
We discuss the bounding of Problem \ref{prob:adv_pred} in Sections \ref{sec:bin_bound_class} and \ref{sec:multiclass}, and then show how to refine the bounds in Section \ref{sec:bnb_alg}.\footnote{While we focus on adversarial robustness w.r.t.\ the $0$-$1$ loss, the computation of the prediction ranges poses a more general problem and the methods developed here can be used for classifiers associated to different loss functions (e.g., a weighted classification loss) through an appropriate definition of a vector in Equation \eqref{eq:vec4ver}.}

\subsection{Regression}\label{sec:regr_def}
For regression models, since the output is a continuous variable, we define adversarial robustness in terms of a small, bounded variation of the decision over a compact neighbourhood $T$ of a test point $x^*$.
This follows from Definition \ref{def:safety_general}, since for regression  $\hat{y}(x) = \bar{\mu}(x)$. 
Formally, we have the following.
\begin{mydef}[Adversarial Robustness in Regression]\label{def:adversarial_regr}
Let $T \subseteq \inputspace$ be a compact subset, $x^* \in T$ and consider a GP $\GP$.
We say that $\GP$ is \emph{adversarially $\delta$-robust} in  $T$ at a point $x^*$ with respect to $\ell_p$ norm $||\cdot||$ iff
\begin{align}
    || \bar{\mu}(x^*) - \bar{\mu}(x) ||  \leq \delta \qquad \forall x \in T,
\end{align}
where $\bar{\mu}(x) = \mathbb{E}[\GP(x)]$  is the posterior mean of the GP.
\end{mydef}
This definition is analogous to the computation of the reachable set of outputs (or confidence values) for neural networks~\citep{ruan2018reachability}. 
Since for a GP the mean corresponds to the maximum of the distribution, it thus follows, under the assumption of convergence, that it can be computed by a deterministic scheme that relies on regularised maximum likelihood estimation.
We remark that, in contrast to classification, adversarial robustness for GP regression does not take into consideration model variance, and analyses only the most likely model among those obtained by Bayesian inference.
As a consequence, the computation of adversarial robustness for regression reduces to the adversarial robustness of the posterior mean function.
More specifically, Definition \ref{def:adversarial_regr} can be checked once the value of  $\sup_{x \in T} || \bar{\mu}(x^*) - \bar{\mu}(x) ||$ is known. 
That is, in order to decide whether a GP regression model $\GP$ satisfies Definition \ref{def:adversarial_regr} around a point $x^*$ we need to solve the following problem.
\begin{myprobl}[Computation of Posterior Mean Ranges]\label{prob:mean_pred}
Let $T \subseteq \inputspace$ be a compact subset. Consider a regression GP $\GP$ and its posterior mean $\mu_i(x), i \in \{1,\ldots,m\}$, defined as in Equation \eqref{eq:mean_posterior}. For $i = 1,\ldots,m$, compute the minimum and maximum of the posterior mean  $\mu_i(x)$ in $T$, that is:
\begin{align*}
    \mu_{\min,i}(T) = \min_{x \in T} \mu_i(x) \qquad \mu_{\max,i}(T) = \max_{x \in T} \mu_i(x). 
\end{align*}
\end{myprobl}
As for Problem \ref{prob:adv_pred}, solving Problem \ref{prob:mean_pred} requires the solution of 2m non-linear optimisation problems.
Similarly to classification, for regression we will develop a bound for Problem \ref{prob:mean_pred} in Section \ref{sec:regression_bound} and refine it through a branch-and-bound technique in Section \ref{sec:bnb_alg}.

\subsection{Outline of the Approach}\label{sec:outline_of_approach}


We now give an outline of a computational scheme to solve Problems \ref{prob:adv_pred} and \ref{prob:mean_pred} introduced in Sections \ref{sec:class_def} and \ref{sec:regr_def}, respectively, which will be developed in detail in Section \ref{sec:adv_rob_method}.
We first discuss classification, and then show how the regression scenario can be obtained as a special case of classification.
\paragraph{Classification}
For Problem \ref{prob:adv_pred}, we devise a branch-and-bound optimisation scheme for the lower- and upper-bounding computation of the prediction ranges of a GP classification model over the input region $T$.
In particular, for $i=1,\ldots,m$, we first compute lower and upper bounds for  $ \pi_{\min,i}(T)$ and $ \pi_{\max,i}(T)$, that is, we compute a set of real values
 $\pi_{\min,i}^L(T)$, $\pi_{\min,i}^U(T)$, $\pi_{\max,i}^L(T)$ and $\pi_{\max,i}^U(T)$ such that:
\begin{align}\label{eq:lower_upper_condition1}
    \pi_{\min,i}^L(T) \leq & \pi_{\min,i}(T) \leq \pi_{\min,i}^U(T) \\ 
 \label{eq:lower_upper_condition2}  \pi_{\max,i}^L(T) \leq & \pi_{\max,i}(T)\leq \pi_{\max,i}^U(T).
\end{align}
We refer to $ \pi_{\min,i}^L(T)$ and $ \pi_{\max,i}^U(T)$ as \textit{over-approximations} of the ranges, as they provide pessimistic estimation of the actual values of $\pi_{\min,i}(T)$ and $\pi_{\max,i}(T)$ for the purpose of adversarial robustness, and hence tighter guarantees.
On the other hand, we refer to $\pi_{\min,i}^U(T)$ and $\pi_{\max,i}^L(T)$ as \textit{under-approximations}, because they provide an optimistic estimation of the actual values that we want to compute. 

The branch-and-bound scheme proceeds by iterative refinement of lower and upper bounds for the minimum and maximum of prediction ranges and is illustrated
in Figure~\ref{fig:branch_and_bound} for the simplified case of a GP with a single output value. 
First, 
we compute a lower- and an upper-bound function (the lower-bound function is depicted with a dashed red curve in Figure \ref{fig:branch_and_bound}) for the GP output (solid blue curve) in the region $T$. 
We then find the minimum of the lower-bound function, $\piInfL{T}$ (shown in the plot), and the maximum of the upper bound function, $\piSupU{T}$ (not shown).
Then, valid values for $\piInfU{T}$ and $\piSupL{T}$ can be computed by evaluating the GP predictive distribution on any point in $T$ (a specific $\piInfU{T}$ is depicted in Figure \ref{fig:branch_and_bound}). 
Next, the region $T$ is iteratively subdivided into sub-regions ($R_1$ and $R_2$ in the plot), for which we compute new (tighter) bounds by repeating the procedure previously applied to $T$.
This procedure repeats until the bounds converge up to a desired tolerance $\epsilon > 0$. 
For each iteration, the bounds computed are valid, and therefore our method is anytime and can be terminated after a fixed number of iterations, at a cost of precision.
\begin{figure*}
	\centering
	 \includegraphics[width = 0.99\textwidth]{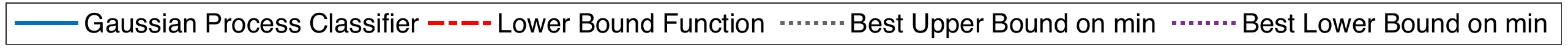} \\
	\includegraphics[width = 0.49\textwidth]{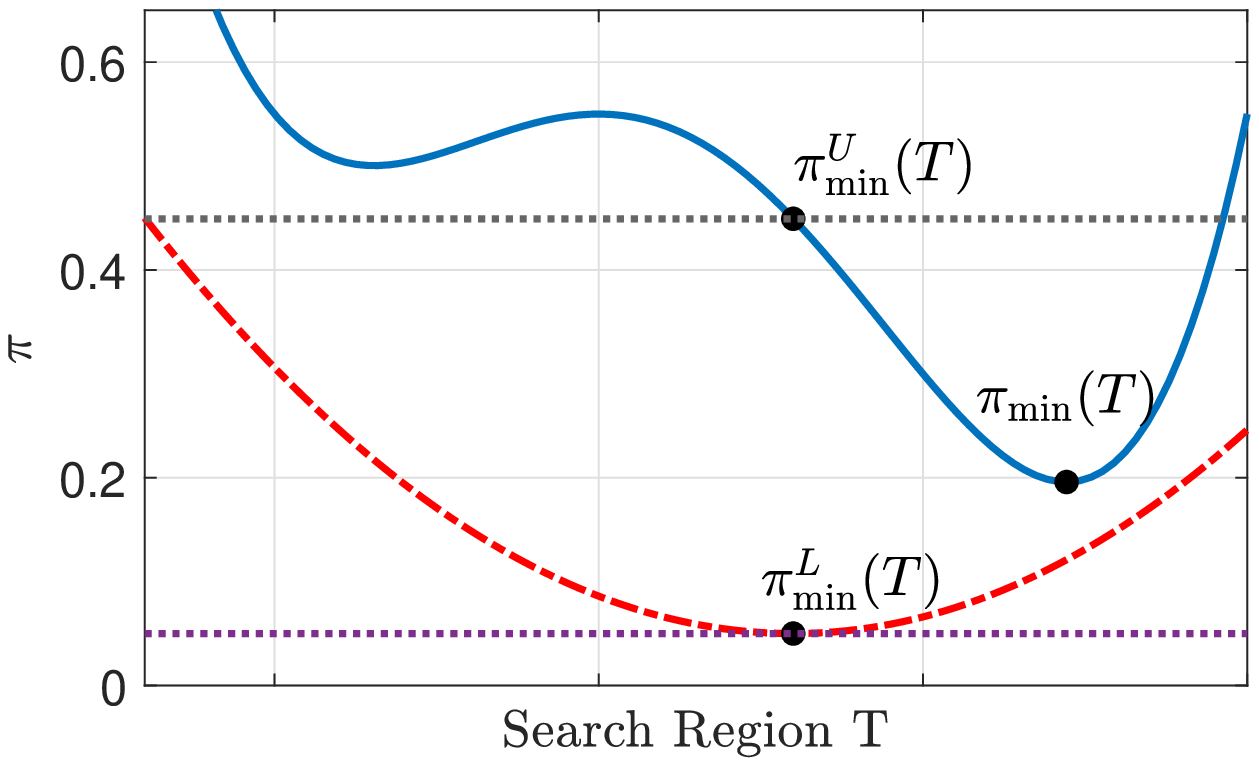}  
	\includegraphics[width = 0.49\textwidth]{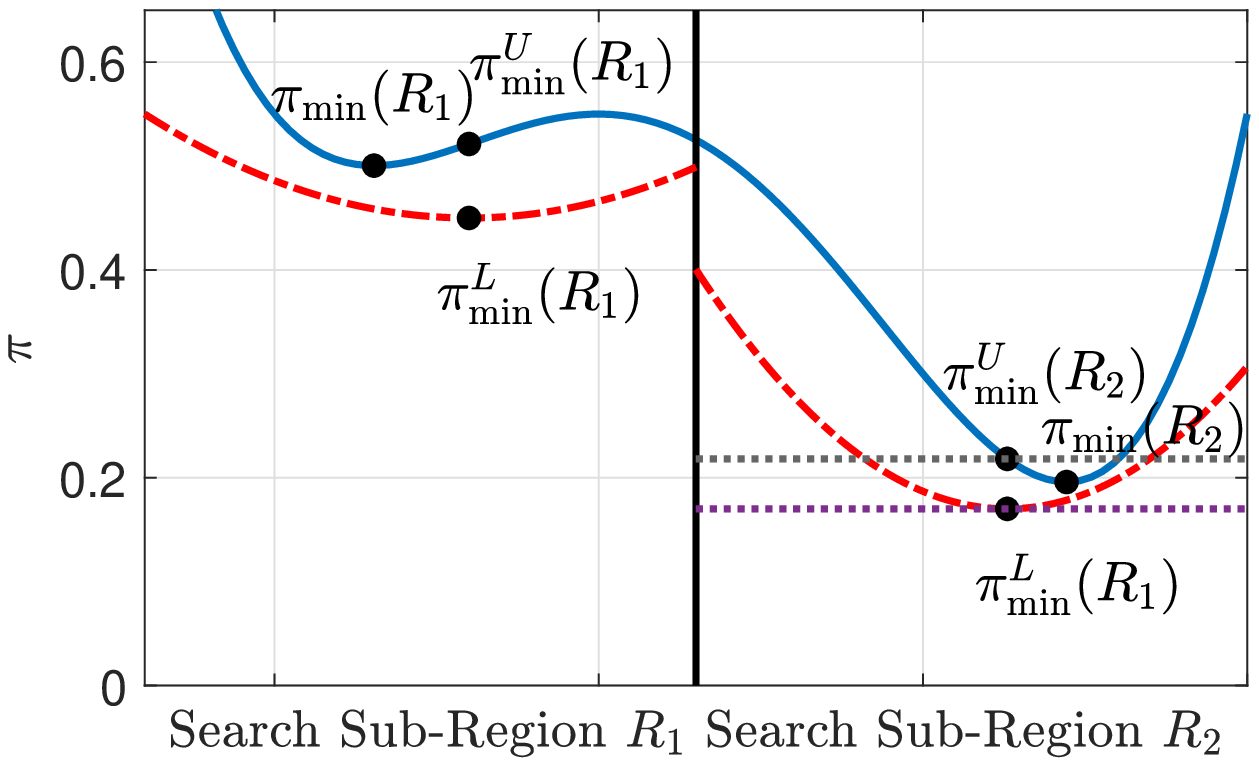}
	\caption{\textbf{Left:} Computation of upper and lower bounds on $\piInf{T}$, i.e., the minimum  of the classification range on   $T$. \textbf{Right:} The search region is repeatedly partitioned into sub-regions (only first partitioning is visualised), reducing the gap between best lower and upper bounds until convergence (up to $\epsilon$). }
	\label{fig:branch_and_bound}
\end{figure*}

The bounds on the predictive distribution depend analytically on the maximum variations of the posterior mean and variance over the region $T$, which we therefore need to compute beforehand.
For this purpose, in Section \ref{sec:mean_and_var_opts}, we develop an optimisation framework  for the computation of a set of real values $\mu^L_{T,i}$, $\mu^U_{T,i}$, $\Sigma^L_{T,i,j}$ and $\Sigma^U_{T,i,j}$  that under- and over-approximate the posterior mean and variance in $T$, i.e.: 
\commentout{Crucially, the computation of the bounds on the predictive distribution that we compute are dependent on upper and lower bounds on the a-posteriori mean and variance over the region $T$.
That is, in the next section, we first develop an optimisation-based framework  for the computation of a set of real values $\mu^L_{T,i}$, $\mu^U_{T,i}$, $\Sigma^L_{T,i,j}$ and $\Sigma^U_{T,i,j}$ such that:
}
\begin{align} \label{eq:mean_variance_bounds1}
\mu^L_{T,i} &\leq \min_{x \in T} \bar{\mu}_i(x) \quad \; \mu^U_{T,i}\geq \max_{x \in T}\bar{\mu}_i(x)\quad \\ 
\label{eq:mean_variance_bounds2} \Sigma^L_{T,i,j} &\leq \min_{x \in T} \bar{\Sigma}_{i,j}(x) \quad \Sigma^U_{T,i,j} \geq \max_{x \in T}\bar{\Sigma}_{i,j}(x)
\end{align}
for a general GP.
We will utilise this framework in Section \ref{sec:adv_rob_method} to compute the desired upper and lower bounds on the ranges of the  predictive posterior distribution.
\paragraph{Regression}
For regression, in Section \ref{sec:regression_bound} we develop a similar branch-and-bound approach to that for classification, except that (see Problem \ref{prob:mean_pred}) we only need to consider the mean of the predictive posterior distribution (discussed in the next section).

\section{Bounding Posterior Mean and Variance Function}\label{sec:mean_and_var_opts}
In Section \ref{sec:adv_rob_method} we will develop a method for the computation of  adversarial robustness guarantees for GPs.
This method utilises upper and lower bounds on the variation of the mean and variance in the compact region $T$.
Therefore, in this section, we formulate a general framework for the computation of lower and upper bounds on the posterior mean (Section \ref{sec:mean_opts}) and variance (Section \ref{sec:var_opts}) of a GP model.
Hence, we propose a method for the computation of $\mu^L_{T,i}$, $\mu^U_{T,i}$, $\Sigma^L_{T,i,j}$ and $\Sigma^U_{T,i,j}$ that satisfy Equations \ref{eq:mean_variance_bounds1} and \ref{eq:mean_variance_bounds2}, which will be used in Section \ref{sec:adv_rob_method}.

To simplify the presentation,  we consider a GP with a single output value, eliding the explicit dependence on $i$. 
\commentout{
Notice that the required quantities depend only on separate class indexes $i \in \{1,\ldots,m\}$.
Hence, for simplicity of presentation, in this section we consider a GP with a single output value, and we will omit the explicit dependence from $i$, with the understanding that this represents just one component of the actual posterior GP that we are interested in bounding. 
}
Since $T$ is compact and therefore bounded, it can be covered by
a finite union of hyper-boxes $T_l$, $l=1,\ldots,n_L$, i.e., 
    $T \subseteq \bigcup_{l=1}^{n_L} T_l$,
and furthermore the over-approximation error can be made vanishingly small.
The bounds can thus be computed for each of the boxes, $T_l$, and the minimum and maximum across $l=1,\ldots,n_L$ can be used as bounds for the infimum and supremum over the original set $T$.
Thus, without loss of generality, in the following we assume that $T$ is a box in the input space, i.e., $T= [x^L, x^U]$.


We proceed by restricting the setting to kernel functions that admit an upper-bounding function $U$, which we propagate through inference equations to obtain bounds on mean and variance. 
Our construction admits analytical bounds for a large class of kernels.
\begin{mydef}[Bounded Kernel Decomposition]\label{def:kernel_decomposition}
Consider a one-dimensional kernel function $\Sigma : \inputspace \times \inputspace \rightarrow \mathbb{R}$ and a compact set $T$.
We say that $\left( \varphi ,  \psi , U   \right)$ is a \emph{bounded decomposition} for $\Sigma$ in $T$ if $\Sigma_{x',x''} = \psi \left( \varphi \left( x',x'' \right) \right) $ and the following conditions are satisfied:
\begin{enumerate}
    \item $\varphi  : \inputspace \times \inputspace \rightarrow \mathbb{R} $  is linearly separable and continuously differentiable along each coordinate, so that $\varphi(x',x'') = \sum_{j=1}^{d} \varphi_j(x'_j,x''_j)$;
    \item $\psi : \mathbb{R} \rightarrow \mathbb{R}$ is continuously differentiable and with a finite number of flex points;
    \item $U$  is an upper bounding function such that, for any vector of coefficients $\mathbf{c} = [c_1,\ldots,c_N] \in \mathbb{R}^\tss$ and finite set of associated input points $[\x{1},\ldots,\x{\tss}]$, with $\tss \in \mathbb{N}$, we have that $U(\mathbf{c}) \geq \sup_{x \in T} \sum_{i=1}^{\tss} c_i \varphi(x,\x{i})$.
\end{enumerate}
\end{mydef}
Intuitively, a kernel decomposition separates the part of the kernel function that depends on the two inputs (represented by $\varphi$) with the part of the kernel that relates their dependence to the variance of the GP (represented by $\psi$).
Assumptions $1$ and $2$ usually follow immediately from the smoothness of kernel functions\footnote{The finite number of flex points can be guaranteed, for example, by inspecting the function derivatives.}.  
Assumption $3$ guarantees that we are able to upper bound the kernel function. 
The key idea is that, in view of the linearity of the inference equations for GPs, we can then propagate this bound through the inference equations  to obtain bounds on the posterior mean and variance of the GP.
We remark that, although not all kernel functions $\Sigma$  admit kernel decomposition (for example if they are not smooth), the majority of kernel functions used in practice do. 
In Appendix \ref{sec:decomposition}, we provide explicit computations for the \emph{squared exponential, the Matern, rational quadratic}, and the \emph{periodic} families of kernels, as well as \emph{sums and products} thereof. 
Further, we describe the computation of bounded kernel decompositions for \textit{generalised} (\textit{stationary} and \textit{non-stationary}) \textit{spectral kernels}, which by means of Bochner's theorem can be shown to define a dense subset of the set of all the possible covariance functions \citep{samo2015generalized}.
In the remainder of the paper we assume that we are dealing with a kernel that admits a bounded decomposition.

Before computing bounds on mean and variance, we state the following result (proved in Appendix \ref{sec:lemmas_and_proofs}) that ensures that the knowledge of a kernel decomposition allows us to compute a Lower Bounding Function (LBF) and an Upper Bounding Function (UBF) on the kernel values, linearly on $\varphi$.
\begin{lemma}\label{prop:lbf_ubf_kernel}
Let $\Sigma$ be a kernel and $\left( \varphi ,  \psi , U   \right)$ be a bounded decomposition. 
Let $T = [x^L,x^U] \subseteq \inputspace$ be a box in the input space, then for every $\bar{x} \in \inputspace$ there exists a set of real coefficients  $\bar{a}_L$, $\bar{b}_L$, $\bar{a}_U$ and $\bar{b}_U$ such that:
\begin{align*}
    g_L(x) \defnotation \bar{a}_L + \bar{b}_L \varphi \left( x, \bar{x} \right) \leq \Sigma_{x,\bar{x}} \leq \bar{a}_U + \bar{b}_U \varphi \left( x, \bar{x} \right) =: g_U(x) \quad \forall x \in T.
\end{align*}
In other words, $g_L$ and $g_U$ respectively represent  an LBF and a UBF for the kernel function, given a fixed input point.
\end{lemma}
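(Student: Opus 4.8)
The plan is to reduce the two-dimensional claim to a purely one-dimensional statement about the outer function $\psi$, and then construct affine bounds on $\psi$ over the relevant interval.

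First I would fix $\bar{x}$ and study the map $x \mapsto \varphi(x,\bar{x})$ on the box $T = [x^L, x^U]$. By Assumption~1 this map is continuous (it is a sum of the $C^1$ maps $\varphi_j(\cdot,\bar{x}_j)$), and $T$ is compact and connected, so its image is a compact interval $[t^L, t^U] \subseteq \mathbb{R}$; moreover, by linear separability $t^L = \sum_{j=1}^{d} \min_{x_j \in [x_j^L, x_j^U]} \varphi_j(x_j, \bar{x}_j)$ and $t^U = \sum_{j=1}^{d} \max_{x_j \in [x_j^L, x_j^U]} \varphi_j(x_j,\bar{x}_j)$, each one-dimensional extremum being attained at an endpoint or a stationary point of the $C^1$ function $\varphi_j(\cdot,\bar{x}_j)$. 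Since $\Sigma_{x,\bar{x}} = \psi(\varphi(x,\bar{x}))$ and the candidate LBF and UBF $\bar{a} + \bar{b}\,\varphi(x,\bar{x})$ depend on $x$ only through $t \defnotation \varphi(x,\bar{x})$, proving the statement for all $x \in T$ is equivalent to exhibiting affine functions $t \mapsto \bar{a}_L + \bar{b}_L t$ and $t \mapsto \bar{a}_U + \bar{b}_U t$ that respectively lower- and upper-bound $\psi$ on $[t^L, t^U]$.

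Next I would construct such affine envelopes for $\psi$. The key elementary fact is that for \emph{any} chosen slope $b$ the shifted function $t \mapsto \psi(t) - b t$ is continuous on the compact interval $[t^L, t^U]$, hence attains a minimum $\alpha$ and a maximum $\beta$; setting $\bar{b}_L = \bar{b}_U = b$, $\bar{a}_L = \alpha$ and $\bar{a}_U = \beta$ then gives $\bar{a}_L + \bar{b}_L t \le \psi(t) \le \bar{a}_U + \bar{b}_U t$ on the whole interval. This already establishes existence (taking $b = 0$ yields constant bounds). To obtain the tight, computable bounds that the later branch-and-bound requires, I would instead pick $b$ from the geometry of $\psi$ --- for example the secant slope $(\psi(t^U) - \psi(t^L))/(t^U - t^L)$, or a tangent slope on a convex or concave piece --- and then locate $\alpha$ and $\beta$ by solving $\psi'(t) = b$.

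Here Assumption~2 does the real work: because $\psi$ is $C^1$ with only finitely many flex points, $\psi'$ is continuous and changes monotonicity only finitely often, so $[t^L, t^U]$ splits into finitely many subintervals on which $\psi$ is convex or concave; on each piece tangent lines (resp.\ the chord) give exact supporting bounds, and the equation $\psi'(t) = b$ has finitely many locatable roots, making the extrema $\alpha,\beta$ effectively computable. The main obstacle is precisely the case in which $\psi$ is neither globally convex nor concave on $[t^L, t^U]$, since a single affine function cannot then touch $\psi$ everywhere and naive tangent/secant reasoning fails. I would resolve this with the slope-shift construction above, which produces valid affine enclosures irrespective of convexity, relying on the finite-flex-point structure only to render those enclosures tight and to guarantee that the one-dimensional optimizations terminate.
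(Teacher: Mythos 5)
Your proof is correct, but it takes a genuinely different route from the paper's. Both arguments reduce the claim to a one-dimensional statement about $\psi$, though even this step differs: you take $[t^L,t^U]$ to be the exact image of $\varphi(\cdot,\bar{x})$ over $T$, computed coordinate-wise via linear separability, whereas the paper over-approximates the range of $\varphi$ using the bounding function $U$ evaluated at coefficients $c=\pm 1$. The real divergence is in how the affine envelopes of $\psi$ are built. The paper's proof is a constructive case analysis on curvature: the chord on concave pieces, the tangent at the midpoint on convex pieces, and a somewhat delicate merging argument (its Case 3) to stitch these together across a flex point, iterated over the finitely many flex points. Your slope-shift argument --- for any fixed slope $b$, set $\bar{a}_L = \min_{t\in[t^L,t^U]}(\psi(t)-bt)$ and $\bar{a}_U = \max_{t\in[t^L,t^U]}(\psi(t)-bt)$ --- bypasses convexity entirely and settles existence in one line (already $b=0$ works), using only continuity of $\psi$ and compactness of the interval; the finite-flex-point hypothesis of Definition \ref{def:kernel_decomposition} is demoted to a tool for making the bounds tight and the one-dimensional optimisations computable. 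What each buys: your argument is shorter, avoids the case-merging verification, and shows that existence holds under weaker hypotheses than the paper assumes; the paper's construction, in exchange, directly produces the specific secant/tangent bounds that its later machinery relies on --- in particular, the convergence proof of Lemma \ref{lemma:convergence_lbf} argues case-by-case about exactly those chords and tangents, and the tightness of the bounds matters for the speed of the branch-and-bound refinement. (Your generic enclosures would also converge as the interval shrinks, by uniform continuity of $\psi$, so they could in principle replace the paper's construction, but supporting tangents and chords give tighter bounds per region and hence fewer branch-and-bound splits.)
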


The above proposition allows us to explicitly compute coefficients of an LBF and a UBF on the overall kernel value, for any fixed point $\bar{x}$ in the input space. 
The main idea is that, since the posterior mean and variance are defined in terms of the summation and multiplication of pieces of the form $\Sigma_{x,\x{i}}$, for all $\x{i}$ in the training dataset $\dataset$, we can compute LBFs and UBFs corresponding to each point in the training set, and propagate them through the inference equations for any unseen test point in $T$. 
By the design of the upper-bounding function $U$, we can then use the resulting LBFs and UBFs to bound the overall mean and variance functions. 
This is formalised in the following two subsections.
\subsection{Bounding the Posterior Mean}\label{sec:mean_opts}
Let $T \subseteq \inputspace$ be an axis aligned hyper-rectangle. In this section we show how to compute a lower bound $\mu^L_T$  for the posterior mean function in $T$, i.e.\ such that $\mu^L_T\leq \inf_{x \in T} \bar{\mu}(x)$, for a kernel $\Sigma$ with an associated bounded kernel decomposition $\left(\varphi,\psi,U\right)$.
Analogous techniques can be used to compute an upper bound $\mu^U_T$ by considering the function $-\bar{\mu}(x)$. 
We will then show that the bounds provided on the mean converge to the actual values as the diameter of the input region $T$ tends to $0$.

For simplicity, we assume that the prior mean function is identically null \citep{williams2006gaussian}.
Then, the posterior mean (Equation \eqref{eq:mean_posterior}) can be written down as:
\begin{align}\label{eq:posterior_with_null_mean}
    \bar{\mu}(x) = \Sigma_{x, \mathbf{x}} \mathbf{t} = \sum_{i=1}^\tss \Sigma_{x, \x{i}} t_i.
\end{align}
A lower bound for the mean function can thus be computed analytically, as shown in the following proposition.
\begin{proposition}\label{prop:mean_bound}
    Let $\Sigma$ be a kernel with bounded decomposition $(\varphi,\psi,U)$.
    Consider $\ali$, $\bli$, $\aui$ and $\bui$, the set of coefficients for LBFs and UBFs associated to each training point $\x{i}$, $i=1,\ldots,\tss$ in an axis-aligned hyper-rectangle $T \subseteq \inputspace$ (computed as for Lemma \ref{prop:lbf_ubf_kernel}).
    Define:
    \begin{align*}
    (\bar{a}^{(i)}_{L} , \bar{b}^{(i)}_{L}) = \begin{cases}
    (\ali , \bli), & \text{if $t_i \geq 0$}\\
    (\aui , \bui), & \text{otherwise}
  \end{cases}
    \end{align*}
    Then
    \begin{align*}
        \mu^L_T \defnotation \sum_{i=1}^\tss \bar{a}^{(i)}_{L} + U([\bar{b}^{(1)}_{L},\ldots,\bar{b}^{(\tss)}_{L}]) \leq \inf_{x \in T}\bar{\mu}(x).
    \end{align*}
\end{proposition}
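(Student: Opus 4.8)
The plan is to lower-bound the posterior mean $\bar\mu(x)=\sum_{i=1}^{\tss}\Sigma_{x,\x{i}}t_i$ from Equation~\eqref{eq:posterior_with_null_mean} one summand at a time, using the LBF/UBF guaranteed by Lemma~\ref{prop:lbf_ubf_kernel}, and then to collapse the resulting $\varphi$-affine expression over $T$ with the upper-bounding function $U$ of Definition~\ref{def:kernel_decomposition}.

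First I would fix $x\in T$ and bound each term $\Sigma_{x,\x{i}}t_i$ separately. Lemma~\ref{prop:lbf_ubf_kernel} supplies $\ali+\bli\,\varphi(x,\x{i})\le\Sigma_{x,\x{i}}\le\aui+\bui\,\varphi(x,\x{i})$ for all $x\in T$. The crux is that multiplying by $t_i$ preserves the inequality direction when $t_i\ge 0$ and reverses it when $t_i<0$; hence a lower bound on $\Sigma_{x,\x{i}}t_i$ comes from the LBF in the first case and the UBF in the second. This is precisely the selection encoded by $(\bar a^{(i)}_{L},\bar b^{(i)}_{L})$ in the statement, so that in either case $\Sigma_{x,\x{i}}t_i\ge\big(\bar a^{(i)}_{L}+\bar b^{(i)}_{L}\,\varphi(x,\x{i})\big)t_i$ holds for every $x\in T$.

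Summing over $i$ then gives, for all $x\in T$, a bound of the form
\begin{align*}
\bar\mu(x)\ \ge\ \sum_{i=1}^{\tss}\bar a^{(i)}_{L}\,t_i\ +\ \sum_{i=1}^{\tss}\bar b^{(i)}_{L}\,t_i\,\varphi(x,\x{i}),
\end{align*}
where the first sum is constant in $x$ and passes through the infimum unchanged. For the second sum I would take the infimum over $x\in T$ and invoke condition~3 of Definition~\ref{def:kernel_decomposition}, which bounds $\sup_{x\in T}\sum_i c_i\,\varphi(x,\x{i})$ for an arbitrary coefficient vector $\mathbf c$. Writing the infimum of a $\varphi$-linear form as minus the supremum of its negation, an application of $U$ to the (sign-adjusted) coefficients $\bar b^{(i)}_{L}t_i$ produces exactly $\mu^L_T$ as claimed; here the factors $t_i$ and the overall sign introduced by this $\inf$-to-$\sup$ conversion are what the compact notation of the statement folds into the definitions of $\bar a^{(i)}_{L}$, $\bar b^{(i)}_{L}$ and $U$.

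The main obstacle is the sign bookkeeping, which enters in two places: the per-term case split on the sign of $t_i$ that dictates the LBF-versus-UBF choice, and the negation of the $\varphi$-coefficients needed to recast the infimum of $\bar\mu$ as a supremum so that $U$ --- stated as an upper bound on a supremum --- applies. A secondary point worth stressing is that $U$ bounds the entire $\varphi$-sum jointly across all training indices rather than term by term; this is what lets the argument exploit the linear separability of $\varphi$ (condition~1) and compute the infimum coordinatewise without sacrificing tightness, so that replacing the true minimum of $\bar\mu$ by $\mu^L_T$ loses nothing beyond the slack already present in the LBF/UBF.
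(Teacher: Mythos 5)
Your proof is correct and follows essentially the same route as the paper's: bound each summand $\Sigma_{x,\x{i}}t_i$ via Lemma \ref{prop:lbf_ubf_kernel} with the LBF/UBF choice dictated by the sign of $t_i$ (the paper delegates this bookkeeping to Lemma \ref{lemmma:linear_prop}), sum over $i$, and collapse the resulting $\varphi$-affine lower bound over $T$ using the kernel bounding function $U$. If anything, you are more explicit than the paper about the two notational subtleties --- the $t_i$ factors absorbed into the coefficients and the inf-to-sup sign conversion needed to invoke $U$ --- which the paper compresses into ``follows directly from the definition of $U$''.
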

\begin{proof}
By construction of the coefficients $\ali$, $\bli$, $\aui$ and $\bui$ we have that:
\begin{align*}
   {a}^{(i)}_{L} + {b}^{(i)}_{L} \varphi(x,\x{i})  \leq   \Sigma_{x,\x{i}} \leq  {a}^{(i)}_{U} + {b}^{(i)}_{U} \varphi(x,\x{i}).
\end{align*}
We can propagate the bounding functions through linear transformations (see Lemma \ref{lemmma:linear_prop} in Appendix \ref{sec:lemmas_and_proofs}), 
so that we obtain:
\begin{align}\label{eq:mean_bit}
    \Sigma_{x,\x{i}} t_i \geq  \bar{a}^{(i)}_{L} + \bar{b}^{(i)}_{L} \varphi(x,\x{i}) \quad \forall x \in T.
\end{align}
Hence, we have that $\sum_{i=1}^\tss \left( \bar{a}^{(i)}_{L} + \bar{b}^{(i)}_{L} \varphi(x,\x{i}) \right)$ is an LBF for the posterior mean. The statement of the theorem then follows directly from the definition of $U$.
\end{proof}


\subsubsection{Convergence of Mean Bounds}
We are able to show, importantly,
that the bounds provided for the mean converge uniformly to the actual mean function, when the input region $T$ is small enough.
We first state the following lemma that proves that the LBFs and UBFs given by Proposition  \ref{prop:lbf_ubf_kernel} yield converging bounds.
The proof is provided in Appendix \ref{sec:lemmas_and_proofs}.
\begin{lemma}\label{lemma:convergence_lbf}
Let $\Sigma$ be a kernel with bounded decomposition $(\varphi,\psi,U)$.
Let $T = [x^L,x^U] \subseteq \inputspace$, $\bar{x} \in T$,  and let, for every axis-aligned hyper-rectangle $R \subseteq T$, $g_L^R(x)$ and $g_U^R(x)$ be the LBF and UBF computed on $R$ for $\Sigma_{\bar{x},x}$ using Lemma \ref{prop:lbf_ubf_kernel}.
Then we have that $g_L^R$ and $g_U^R$ converge uniformly to $\Sigma_{\bar{x},x}$ as $\text{diam}(R)\to0$.
\end{lemma}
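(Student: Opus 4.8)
The plan is to exploit the composition structure $\Sigma_{\bar{x},x} = \psi(\varphi(x,\bar{x}))$ to reduce uniform convergence in $x$ to the convergence of a one-dimensional linear sandwich of $\psi$. Fix $\bar{x}$ and, for an axis-aligned box $R \subseteq T$, set $v \defnotation \varphi(x,\bar{x})$ and let $I_R \defnotation \{\varphi(x,\bar{x}) : x \in R\}$. Since $R$ is a connected compact box and $\varphi(\cdot,\bar{x})$ is continuous, $I_R = [\varphi_m^R, \varphi_M^R]$ is a compact interval. By Lemma \ref{prop:lbf_ubf_kernel}, $g_L^R(x) = \bar{a}_L + \bar{b}_L\,v$ and $g_U^R(x) = \bar{a}_U + \bar{b}_U\,v$ are, respectively, a linear lower and upper bound for $\psi$ on $I_R$. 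Writing $\ell_L(v) \defnotation \bar{a}_L + \bar{b}_L v$ and $\ell_U(v) \defnotation \bar{a}_U + \bar{b}_U v$, we obtain $\sup_{x \in R} |g_L^R(x) - \Sigma_{\bar{x},x}| \le \sup_{v \in I_R}|\ell_L(v) - \psi(v)|$, and similarly for $g_U^R$. Hence it suffices to prove that the sandwich $\ell_L \le \psi \le \ell_U$ on $I_R$ tightens to $\psi$ as $|I_R| \to 0$.

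Next I would show that the length $|I_R| = \varphi_M^R - \varphi_m^R$ vanishes as $\mathrm{diam}(R) \to 0$. The map $\varphi(\cdot,\bar{x})$ is continuous on the compact set $T$ and hence uniformly continuous there, so its oscillation over any sub-box $R$ is controlled by its modulus of continuity evaluated at $\mathrm{diam}(R)$; consequently $|I_R| \to 0$. The separability assumption of Definition \ref{def:kernel_decomposition} makes this explicit, since $\varphi(x,\bar{x}) = \sum_j \varphi_j(x_j,\bar{x}_j)$ with each $\varphi_j$ being $C^1$, hence Lipschitz on the compact projection of $T$.

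It remains to bound $\sup_{v \in I_R}|\ell_L(v) - \psi(v)|$ and the analogous UBF error. The key structural fact is that $\psi$ is $C^1$ with finitely many flex points, so $\psi'$ is uniformly continuous on the compact value range $\Phi_T \defnotation \{\varphi(x,\bar{x}) : x \in T\}$; let $\omega$ denote its modulus of continuity. Each of $\ell_L, \ell_U$ produced by Lemma \ref{prop:lbf_ubf_kernel} touches $\psi$ at a point of $\overline{I_R}$ (the tangent contact point, respectively the two secant endpoints), and its slope equals $\psi'$ evaluated at a point of $\overline{I_R}$ (directly for the tangent, and by the mean value theorem for the secant). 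Thus for a touch point $v^\ast$ and slope $b = \psi'(\xi)$ with $\xi \in \overline{I_R}$, the mean value theorem gives $|\ell(v) - \psi(v)| = |b - \psi'(\eta)|\,|v - v^\ast| \le \omega(|I_R|)\,|I_R|$ for every $v \in I_R$, which tends to $0$. Combining the three steps yields the claimed uniform convergence of $g_L^R$ and $g_U^R$ to $\Sigma_{\bar{x},x}$.

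The hard part will be the last step when $I_R$ straddles a flex point, since there the secant/tangent sandwich is not immediate (convexity changes sign inside $I_R$). I would dispose of this using finiteness: as the flex points are finitely many and isolated, once $|I_R|$ drops below the minimal gap between consecutive ones, $I_R$ contains at most one flex point. The common limit value $v_0$ of the shrinking intervals is then either a point of strict convexity or concavity, so eventually $I_R$ lies in a single convexity region and the touch-point estimate applies verbatim, or it is a single flex point, handled by splitting $I_R$ at it and applying the estimate to each piece (whose lengths still vanish). A secondary point to verify is that the slopes $\bar{b}_L, \bar{b}_U$ remain bounded and in fact converge to $\psi'(v_0)$; this follows from the same modulus-of-continuity control and rules out any blow-up in the linear coefficients.
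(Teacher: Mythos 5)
Your proposal is correct and follows essentially the same route as the paper's proof: both reduce the problem to a one-dimensional linear sandwich of $\psi$ over a shrinking $\varphi$-interval (whose length vanishes by continuity/Lipschitzness of $\varphi$), and both then run the same convexity case analysis as in Lemma~\ref{prop:lbf_ubf_kernel} --- secant on concave pieces, tangent on convex pieces, and at most one flex point once $\text{diam}(R)$ is small, by finiteness of the flex points. Your unified touch-point/mean-value estimate via the modulus of continuity of $\psi'$ is a cleaner quantitative packaging of what the paper argues case by case (Lipschitz oscillation for the concave secant, Taylor truncation error for the convex tangent, and combination in the flex case), but it is the same argument in substance.
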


As the lower bound that we compute on the mean  over $T$ is obtained by summing together the individual LBFs $g_L^R$ computed over each training point $\x{i}$ on $R$, it then follows that convergence of all LBFs $g_L^R$ combined with a tight bounding function $U$ implies convergence of the posterior mean lower bound, and similarly for the upper bound.
This is formally shown in the proposition below.
\begin{proposition}\label{prop:mean_convergence}
Let $\Sigma$ be a kernel with bounded decomposition $(\varphi,\psi,U)$.
Then bounds for the posterior mean $\mu^{L}_{R}$ and $\mu^{U}_{R}$ computed through the application of Proposition \ref{prop:mean_bound} converge if the bounds provided by $U$ do so.
\end{proposition}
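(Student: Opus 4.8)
The plan is to express the total slack $\inf_{x\in R}\bar\mu(x) - \mu^L_R$ as a sum of two non-negative error terms and show that each vanishes as $\text{diam}(R)\to0$: one term measuring how far the affine-in-$\varphi$ lower-bounding function sits below the true mean, and one term measuring the looseness of the bounding function $U$. First I would recall from Proposition~\ref{prop:mean_bound} that $\ell_R(x) \defnotation \sum_{i=1}^{\tss}\big(\bar{a}^{(i)}_{L} + \bar{b}^{(i)}_{L}\,\varphi(x,\x{i})\big)$ is a valid LBF for $\bar\mu$ on $R$, and that $\mu^L_R = \sum_{i=1}^{\tss}\bar{a}^{(i)}_{L} + U\big([\bar{b}^{(1)}_{L},\ldots,\bar{b}^{(\tss)}_{L}]\big)$ is obtained from $\ell_R$ by bounding its affine-in-$\varphi$ part with $U$.

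Next I would establish that $\ell_R\to\bar\mu$ uniformly on $R$. By the construction in Proposition~\ref{prop:mean_bound}, each summand $\bar{a}^{(i)}_{L} + \bar{b}^{(i)}_{L}\,\varphi(x,\x{i})$ is, up to the scalar $t_i$, either the LBF or the UBF of $\Sigma_{x,\x{i}}$ from Lemma~\ref{prop:lbf_ubf_kernel}, with the branch selected by the sign of $t_i$. Lemma~\ref{lemma:convergence_lbf} guarantees that both the LBF and the UBF converge uniformly to $\Sigma_{x,\x{i}}$, so each summand converges uniformly to $\Sigma_{x,\x{i}}t_i$; since the sum ranges over the finitely many training points, $\ell_R$ converges uniformly to $\bar\mu=\sum_{i=1}^{\tss}\Sigma_{x,\x{i}}t_i$.

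I would then invoke the elementary bound $0\le \inf_{x\in R}\bar\mu(x) - \inf_{x\in R}\ell_R(x) \le \sup_{x\in R}\big(\bar\mu(x)-\ell_R(x)\big)$, whose right-hand side tends to $0$ by the uniform convergence just shown. The remaining gap $\inf_{x\in R}\ell_R(x) - \mu^L_R$ is exactly the error incurred when the true minimisation of $\sum_{i}\bar{b}^{(i)}_{L}\varphi(x,\x{i})$ over $R$ is replaced by the estimate returned by $U$; this vanishes precisely under the hypothesis that the bounds provided by $U$ converge. Adding the two vanishing contributions yields $\mu^L_R\to\inf_{x\in R}\bar\mu(x)$, and the argument for $\mu^U_R\to\sup_{x\in R}\bar\mu(x)$ is identical after replacing $\bar\mu$ by $-\bar\mu$ (equivalently, swapping the LBF/UBF branches).

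The only delicate point, and hence the main obstacle, is the clean separation of the two independent error sources and verifying that both are controlled simultaneously: the kernel-level uniform convergence is supplied by Lemma~\ref{lemma:convergence_lbf}, the convergence of $U$ is an explicit hypothesis, and the finiteness of the training set is what lets summation preserve uniform convergence, so that the two pieces combine additively through the $1$-Lipschitz dependence of the infimum on the sup-norm.
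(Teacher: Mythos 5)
Your proof is correct and follows essentially the same route as the paper's: both rely on Lemma \ref{lemma:convergence_lbf} for uniform convergence of the kernel-level LBFs/UBFs, note that scalar multiplication by $t_i$ and summation over the finitely many training points preserve uniform convergence, and then appeal to the hypothesis on $U$ to close the argument. Your explicit two-term error decomposition, with the bound $0\le \inf_{x\in R}\bar\mu(x) - \inf_{x\in R}\ell_R(x) \le \sup_{x\in R}\bigl(\bar\mu(x)-\ell_R(x)\bigr)$, merely spells out in detail the final step that the paper compresses into ``the statement then follows by the definition of $U$.''
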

\begin{proof}
We discuss the case of $\mu^{L}_{R}$; the arguments are analogous for $\mu^{U}_{R}$.

We have that $\bar{\mu}(x) = \sum_{i = 1}^\tss \Sigma_{x,\x{i}} t_i$. By Proposition \ref{prop:mean_bound}, we obtain that:
\begin{align}\label{eq:temp_prop_stuff_to_prove}
    \sum_{i=1}^\tss t_i \bar{g}_L^{(i)}(x) \leq   \sum_{i = 1}^\tss \Sigma_{x,\x{i}} t_i 
\end{align}
where $\bar{g}_L^{(i)}(x) = g_L^{(i)}(x)$ if $t_i \geq 0$ and $\bar{g}_L^{(i)}(x) = g_U^{(i)}(x)$ otherwise. 
For Lemma \ref{lemma:convergence_lbf} we have that each $g_L^{(i)}$ converges uniformly to $\Sigma_{x,\x{i}}$ for each $\x{i}$.
As $t_i$ is a scalar quantity, we also have that each $ t_i \bar{g}_L^{(i)}(x)$ converges uniformly to $\Sigma_{x,\x{i}} t_i$.
Hence, we obtain that the bounds in Equation \eqref{eq:temp_prop_stuff_to_prove} converge uniformly as $\text{diam}(R) = r\to0$,  by virtue of being a linear combination of bounds that converge uniformly.
The statement of the proposition then follows by the definition of $U$.
\end{proof}
Therefore, convergence of the bounds for the posterior mean is reduced to a property of the kernel bounding function $U$.
%
In Appendix \ref{sec:decomposition} we show how explicit kernel decomposition can be computed for many kernel functions used in practice, where the derived functions $U$ converge to the actual desired values. 
%
%
%
\subsection{Bounding the Posterior Variance}\label{sec:var_opts}
We now show how to find a lower and an upper bound for the posterior variance from Equation \eqref{eq:var_posterior}.
For simplicity, we assume that $\Sigma_{x,x} = \sigma_p^2$ 
for all $x \in \inputspace$,\footnote{This is always the case for stationary kernels. In the general case $\Sigma_{x,x}$ can be replaced by either its maximum or minimum value depending on whether we want to compute the minimum or the maximum of the posterior variance.} so that we need only to compute:
\begin{align}\label{eq:min_var}
     \min_{x \in T}  \bar{\Sigma} ({x}) &= \sigma_p^2 + \min_{x \in T} - \Sigma_{x,\mathbf{x}} S   \Sigma_{x,\mathbf{x}}^T \\
     \label{eq:max_var} \max_{x \in T}  \bar{\Sigma}({x}) &=  \sigma_p^2 - \min_{x \in T} \Sigma_{x,\mathbf{x}} S   \Sigma_{x,\mathbf{x}}^T.
\end{align}
We first show how an upper bound for Equation \ref{eq:max_var} can be computed by means of convex quadratic programming.
\subsubsection{Variance Upper Bound}
The key observation is that $S$ 
given in Equation \eqref{eq:var_posterior} is a positive semi-definite matrix, so that the objective function to optimise in the case of the upper-bounding computation is a quadratic convex function on the variables $\Sigma_{x,\mathbf{x}}$ (but not on the optimisation variable $x$). 
In the following proposition, we show how the problem can be relaxed to obtain a quadratic convex program on the variable $x$ and  a suitably defined vector of slack variables.
\begin{proposition}\label{prop:variance_min}
    Let $\Sigma$ be a kernel with bounded decomposition $(\varphi,\psi,U)$ and $T = [x^L,x^U]$ a box of the input space $\inputspace$.
    Consider $\ali$, $\bli$, $\aui$ and $\bui$, a set of coefficients for LBFs and UBFs associated to each training point $\x{i}$, $i=1,\ldots,\tss$, computed according to Lemma \ref{prop:lbf_ubf_kernel}.
    Let $\mathbf{r} = [r^{(1)},\ldots,r^{(\tss)}]$, $\varphi^{(i)}$, $\varphi^{(i)}_j$, for $ i = 1,\ldots, \tss$ and $ j = 1,\ldots, d$, be slack continuous variables.
    Let $\bar{\sigma}^2$ be the solution of the following convex quadratic programming problem:
    \begin{align*}
        &\min_{x \in T}  \mathbf{r} S \mathbf{r}^T \\
        &\textrm{subject to: } \quad r^{(i)} + \ali + \bli \varphi^{(i)}  \leq 0 \quad i = 1,\ldots, \tss \\
        & \qquad \qquad \quad \; \; \, r^{(i)} - \aui - \bui \varphi^{(i)}  \leq 0 \quad i = 1,\ldots, \tss \\
        & \qquad \qquad \quad \; \; \, a_{j,L}^{(i)} + b_{j,L}^{(i)} x_j - \varphi^{(i)}_j \leq 0  \quad i = 1,\ldots, \tss \quad j = 1,\ldots, d \\
        & \qquad \qquad \quad \; \; \, \varphi^{(i)}_j -a_{j,U}^{(i)} - b_{j,U}^{(i)} x_j  \leq 0 \quad i = 1,\ldots, \tss \quad j = 1,\ldots, d \\
        & \qquad \qquad \quad \; \; \, \varphi^{(i)} = \sum_{j=1}^{d} \varphi^{(i)}_j  \qquad  \qquad \; \; \; i = 1,\ldots, \tss \quad j = 1,\ldots, d.
    \end{align*}
    Then $\Sigma^U_T \defnotation \sigma_p^2 - \bar{\sigma}^2$ is an upper bound for the posterior variance $\bar{\Sigma} ({x}) $ in $T$. 
\end{proposition}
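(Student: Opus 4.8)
The plan is to reduce the claim to a soundness argument for a convex relaxation. Starting from Equation \eqref{eq:max_var}, upper-bounding $\max_{x\in T}\bar{\Sigma}(x)$ is equivalent to lower-bounding the quadratic form $Q(x) := \Sigma_{x,\mathbf{x}} S \Sigma_{x,\mathbf{x}}^T$ over $T$: if $\bar{\sigma}^2 \le \min_{x\in T} Q(x)$, then $\Sigma^U_T = \sigma_p^2 - \bar{\sigma}^2 \ge \sigma_p^2 - \min_{x\in T} Q(x) = \max_{x\in T}\bar{\Sigma}(x)$. So the whole proposition amounts to showing that the optimal value $\bar{\sigma}^2$ of the stated program is a valid lower bound on $\min_{x\in T} Q(x)$, and that this program is indeed a convex quadratic program.

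First I would set up the relaxation by following the compositional structure $x \mapsto \varphi_j \mapsto \varphi \mapsto \psi \mapsto \Sigma$ of the kernel decomposition. For each training point $\x{i}$ and coordinate $j$, the slack variable $\varphi^{(i)}_j$ stands for $\varphi_j(x_j, \x{i}_j)$; since $\varphi_j$ is continuously differentiable on the box $T=[x^L,x^U]$ (Definition \ref{def:kernel_decomposition}), it is sandwiched on each interval $[x_j^L,x_j^U]$ between two affine functions of $x_j$, which are exactly the constraints $a_{j,L}^{(i)} + b_{j,L}^{(i)} x_j \le \varphi^{(i)}_j \le a_{j,U}^{(i)} + b_{j,U}^{(i)} x_j$. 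Linear separability of $\varphi$ then gives $\varphi^{(i)} = \sum_{j=1}^d \varphi^{(i)}_j$ as the representative of $\varphi(x,\x{i})$, and the slack variable $r^{(i)}$ stands for the kernel value $\Sigma_{x,\x{i}}$, constrained through the affine LBF and UBF envelope of Lemma \ref{prop:lbf_ubf_kernel} (the first two families of constraints, which are affine in $\varphi^{(i)}$). All constraints are thus linear in the decision variables $(x, \mathbf{r}, \{\varphi^{(i)}\}, \{\varphi^{(i)}_j\})$, and since $S$ is positive semi-definite the objective $\mathbf{r} S \mathbf{r}^T$ is convex; hence the program is a convex quadratic program with a well-defined minimum $\bar{\sigma}^2$.

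Next I would prove soundness: the relaxed feasible region contains the image of every genuine point $x \in T$. Fixing $x \in T$ and assigning $\varphi^{(i)}_j = \varphi_j(x_j,\x{i}_j)$, $\varphi^{(i)} = \varphi(x,\x{i})$, and $r^{(i)} = \Sigma_{x,\x{i}}$, the coordinate-wise affine bounds on $\varphi_j$ and Lemma \ref{prop:lbf_ubf_kernel} place this assignment within every constraint, and for it the objective equals $\mathbf{r} S \mathbf{r}^T = \Sigma_{x,\mathbf{x}} S \Sigma_{x,\mathbf{x}}^T = Q(x)$. Thus the genuine assignments already range over all values $Q(x)$, $x\in T$, while the program minimises over a superset of these feasible points; minimising over a larger set can only decrease the value, so $\bar{\sigma}^2 \le \min_{x\in T} Q(x)$, and combining with the first paragraph yields $\Sigma^U_T \ge \max_{x\in T}\bar{\Sigma}(x)$.

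The main obstacle is precisely this soundness step: one must check that the chain of linear relaxations is assembled with the inequality directions consistent, so that every true triple $(\Sigma_{x,\x{i}}, \varphi(x,\x{i}), \varphi_j(x_j,\x{i}_j))$ remains feasible. This containment is what guarantees we are \emph{enlarging}, never shrinking, the feasible set, and hence that $\bar{\sigma}^2$ under-estimates rather than over-estimates $\min_{x\in T} Q(x)$ — a sign error here would invert the bound. A secondary point requiring care is convexity, which holds only because $S \succeq 0$ (the observation opening the subsection): this ensures the relaxed problem is a genuine convex QP with attainable global minimum, the nonconvex dependence of $Q$ on $x$ having been entirely absorbed into the affine slack constraints, which is exactly what makes the bound tractable.
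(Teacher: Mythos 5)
Your proof is correct and follows essentially the same route as the paper's: both relax the exact constraint $\mathbf{r} = \Sigma_{x,\mathbf{x}}$ into the linear sandwich constraints obtained from Lemma~\ref{prop:lbf_ubf_kernel} (applied once to $\psi$ as a function of $\varphi$, and once coordinate-wise to $\varphi_j$ as a function of $x_j$), use positive semi-definiteness of $S$ for convexity, and conclude via Equation~\eqref{eq:max_var}. The only difference is presentational: you spell out the feasible-set containment argument (every genuine $x \in T$ with its true slack values remains feasible, so minimising over the enlarged set can only decrease the value), which the paper compresses into the phrase ``over-approximation'' and ``follows by the definition of minimum.''
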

\begin{proof}
By setting $\mathbf{r} = \Sigma_{x,\mathbf{x}}$ in the minimum computation in Equation \eqref{eq:max_var}, we obtain the objective function of the problem statement, $\mathbf{r} S \mathbf{r}^T$, which is quadratic on the vector variable $\mathbf{r}$.
Since $S$ is symmetric and positive semi-definite it follows that the objective function is a quadratic convex function in the slack variable vector $\mathbf{r}$.
In order to obtain a convex program we then need to linearise the constraint $\mathbf{r} = \Sigma_{x,\mathbf{x}}$
We show how this is done for a generic index $i = 1,\ldots,\tss$.

We have that $r^{(i)} = \Sigma_{x,\x{i}} = \psi(\varphi(x,\x{i}))$.
By Lemma \ref{prop:lbf_ubf_kernel} we obtain that:
\begin{align*}
    \ali + \bli \varphi \left( x, \x{i} \right) \leq \Sigma_{x,\x{i}} \leq \aui + \bui \varphi \left( x, \x{i} \right).
\end{align*}
Hence, the dependence of $\psi$ on the constraints can be linearised by considering the following over-approximation for the definition of $r^{(i)}$:
\begin{align*}
     &r^{(i)} + \ali + \bli \varphi \left( x, \x{i} \right) \leq 0 \\
      &r^{(i)} - \aui - \bui \varphi \left( x, \x{i} \right)  \leq 0.
\end{align*}
The final step is to linearise the dependency over  $\varphi \left( x, \x{i} \right)$.
We introduce slack variables $\varphi^{(i)} = \varphi(x,\x{i})$, and $\varphi^{(i)}_j = \varphi_j(x_j,\x{i}_j)$.
For Assumption $1$ of Definition \ref{def:kernel_decomposition}  we have that $\varphi(x,\x{i}) = \sum_{j=1}^{d} \varphi_j(x_j,\x{i}_j)$.
Let $i \in \{1,\ldots,\tss\}$ and let $j \in \{1,\ldots, d \}$, then by applying Lemma \ref{prop:lbf_ubf_kernel} with $\psi \defnotation \varphi_j(\cdot,\x{i}_j)$ and $\varphi \defnotation x$, we obtain that there exists a set of coefficients $  a^{(i)}_{j,L}$, $b^{(i)}_{j,L}$, $  a^{(i)}_{j,U}$ and $b^{(i)}_{j,U}$ such that:
\begin{align*}
    a^{(i)}_{j,L} + b^{(i)}_{j,L} x_j \leq \varphi_j(x_j,\x{i}_j) \leq a^{(i)}_{j,U} + b^{(i)}_{j,U} x_j.
\end{align*}
Hence, we can over-approximate the set of constraints $\varphi^{(i)} = \varphi(x,\x{i})$ and $\varphi^{(i)}_j = \varphi(x_j,\x{i}_j)$ with the following set of linear constraints:
\begin{align*}
    a_{j,L}^{(i)} + b_{j,L}^{(i)} x_j - \varphi^{(i)}_j \leq 0 \\
    \varphi^{(i)}_j -a_{j,U}^{(i)} - b_{j,U}^{(i)} x_j \leq 0 \\
    \varphi^{(i)} = \sum_{j=1}^{d} \varphi^{(i)}_j.
\end{align*}
The formula for $\Sigma^U_T$ then follows by the definition of minimum and by Equation \eqref{eq:max_var}.
\end{proof}
Crucially, the proposition above casts the computation of the quantity $\Sigma^U_T$ as the solution of a convex quadratic programming problem, for which ready-made solver software exists \citep{rosen1986global}.
\subsubsection{Variance Lower Bound}
The situation is, unfortunately, more complicated for the lower-bounding computation of $\min_{x \in T} - \Sigma_{x,\mathbf{x}} S   \Sigma_{x,\mathbf{x}}^T$. 
In fact, though we can write down an optimisation problem akin to that of Proposition \ref{prop:variance_min}, since $S$ is positive definite we have that $-S$ is negative definite, which implies that the function we want to optimise is quadratic concave.
Thus, a number of local minima may exist, and simple quadratic optimisation is not guaranteed to yield the global solution.
However, as we are interested in worst-case scenario analysis, we need to compute the global minimum. 
Unfortunately, this is an NP-hard problem, whose exact solution would be impractical to compute.

Instead, we apply the methods proposed in \citep{rosen1986global} and proceed by computing a safe lower bound to the global minimum, that is, we want to compute a lower bound to the solution of: 
    \begin{align}
        &\min_{x \in T}  - \mathbf{r} S \mathbf{r}^T \label{eq:initial_vairance_opt_problem}\\
        &\textrm{subject to: } \quad r^{(i)} + \ali + \bli \varphi^{(i)}  \leq 0 \quad i = 1,\ldots, \tss \nonumber\\
        & \qquad \qquad \quad \; \; \, r^{(i)} - \aui - \bui \varphi^{(i)}  \leq 0 \quad i = 1,\ldots, \tss \nonumber \\
        & \qquad \qquad \quad \; \; \, a_{j,L}^{(i)} + b_{j,L}^{(i)} x_j - \varphi^{(i)}_j \leq 0  \quad i = 1,\ldots, \tss \quad j = 1,\ldots, d \nonumber\\
        & \qquad \qquad \quad \; \; \, \varphi^{(i)}_j -a_{j,U}^{(i)} - b_{j,U}^{(i)} x_j  \leq 0 \quad i = 1,\ldots, \tss \quad j = 1,\ldots, d \nonumber\\
        & \qquad \qquad \quad \; \; \, \varphi^{(i)} = \sum_{j=1}^{d} \varphi^{(i)}_j  \qquad  \qquad \; \; \; i = 1,\ldots, \tss \quad j = 1,\ldots, d \nonumber
    \end{align}
We highlight the details of the procedure applied to our specific setting below.
First, we start by re-writing the constraints of the optimisation problem above in matrix form.
Next, we introduce the aggregate variable vector $\mathbf{z} = [x_1,\ldots,x_d,\varphi^{(1)},\ldots,\varphi^{(\tss)},\varphi^{(1)}_1,\ldots,\varphi^{(\tss)}_d]$. 
Since the constraints are linear, it is possible to define two matrices $A_r$ and $A_z$ such that the optimisation problem above can be equivalently written down as:
\begin{align}\label{prob:concave}
    \min - \covvec^T S \covvec \\
    \textrm{Subject to:} \quad &A_r \covvec + A_z \mathbf{z} \leq b \nonumber \\
    &\covvec^L \leq \covvec \leq \covvec^U  \nonumber  \\
    & \mathbf{z}^L \leq \mathbf{z} \leq \mathbf{z}^U   \nonumber  
\end{align}
for suitably defined vectors $b$, $\covvec^L$, $\covvec^U$, $\mathbf{z}^L$, $\mathbf{z}^U$.
Now, as $S$ is symmetric and positive definite, there exists a matrix of  eigenvectors $U = [ \mathbf{u}^{(1)}, \ldots ,\mathbf{u}^{(\tss)} ] $ and a diagonal matrix  $\Lambda $ of the associated eigenvalues $\lambda^{(i)}$, for $i = 1,\ldots,\tss$, such that $S = U \Lambda U^T$.
We then define $\hat{r}^{(i)} =  \mathbf{u}^{(i)} \cdot \covvec$ for $i = 1,\ldots,\tss$, the rotated variables, and $\hat{\covvec}$  the aggregated vector of rotated variables, and compute their ranges $[\hat{r}^{(i),L},\hat{r}^{(i),U}]$ by solving the following $2\tss$ linear programming problems:
\begin{align*}
    \min / \max \quad  & \mathbf{u}^{(i)} \cdot \covvec \\
    \textrm{Subject to:} \quad \quad &A_r \covvec + A_z \mathbf{z} \leq b \\
    &\covvec^L \leq \covvec \leq \covvec^U  \\
    & \mathbf{z}^L \leq \mathbf{z} \leq \mathbf{z}^U.   
\end{align*}
Implementing the change of variables into the optimisation problem defined in Equation \eqref{prob:concave}, we obtain:
\begin{align*}
    \min - \hat{\mathbf{r}}^T \Lambda \hat{\mathbf{r}} \\
    \textrm{Subject to:} \quad &A_{\hat{r}} \hat{\mathbf{r}} + A_z \mathbf{z} \leq b  \\
    &\hat{\covvec}^L \leq \hat{\covvec} \leq \hat{\covvec}^U   \\
    & \mathbf{z}^L \leq \mathbf{z} \leq \mathbf{z}^U  
\end{align*}
where we have set $A_{\hat{r}}  = A_{r}  U $. 
We then notice that $\hat{\covvec}^T \Lambda \hat{\covvec} =  \sum_{i=1}^\tss \lambda^{(i)} \hat{r}^{(i) 2}$.
Each summand is a simple one-dimensional quadratic function, for which we can find a linear LBF by relying on Lemma \ref{prop:lbf_ubf_kernel}.  
Let $\alpha^{(i)}$ and $\beta^{(i)}$ be coefficients of such LBFs, then we have that $\alpha^{(i)} + \beta^{(i)} \hat{r}^{(i)} \leq - \lambda^{(i)} \hat{r}^{(i),2}$ for all $i = 1,\ldots,\tss$.
Let $\bm{\beta} = [\beta^{(1)},\ldots,\beta^{(\tss)}]$ and $\hat{\alpha} = \sum_{i = 1}^\tss \alpha^{(i)}$, then we can  lower-bound the optimisation problem defined in Equation \eqref{prob:concave} with the following linear programming problem:
\begin{align}
    \min \left( \hat{\alpha} + \bm{\beta}^T \hat{\covvec} \right) \label{eq:lp_approx} \\
    \textrm{Subject to:} \quad & A_{\hat{r}} \hat{\covvec} + A_z \mathbf{z} \leq b  \nonumber \\
    &\hat{\covvec}^L \leq \hat{\covvec} \leq \hat{\covvec}^U   \nonumber \\
    & \mathbf{z}^L \leq \mathbf{z} \leq \mathbf{z}^U.   \nonumber
\end{align}
Hence, we have that  a solution of the latter problem yields a lower bound for the solution of the optimisation problem in Equation \eqref{eq:initial_vairance_opt_problem}. That is, we have proved the following statement.
\begin{proposition}\label{prop:variance_max}
    Let $\munderbar{\sigma}^2$ be the solution of the linear programming problem defined in Equation \eqref{eq:lp_approx}. 
     Then $\Sigma^L_T \defnotation \sigma_p^2 + \munderbar{\sigma}^2$ is a lower bound for the posterior variance $\bar{\Sigma} ({x}) $ in $T$. 
\end{proposition}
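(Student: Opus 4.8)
The plan is to verify that the chain of reformulations leading to Equation \eqref{eq:lp_approx} is \emph{sound}, in the sense that each step either enlarges the feasible set or replaces the objective by a pointwise under-estimator, so that the value returned by the final linear program cannot exceed the true minimum of $-\mathbf{r} S \mathbf{r}^T$ over $T$. By Equation \eqref{eq:var_posterior} with $\Sigma_{x,x}=\sigma_p^2$ we have $\bar{\Sigma}(x) = \sigma_p^2 - \Sigma_{x,\mathbf{x}} S \Sigma_{x,\mathbf{x}}^T$, and by Equation \eqref{eq:min_var} it therefore suffices to prove $\munderbar{\sigma}^2 \leq \min_{x \in T}\left(-\Sigma_{x,\mathbf{x}} S \Sigma_{x,\mathbf{x}}^T\right)$; adding $\sigma_p^2$ then yields $\Sigma^L_T \leq \min_{x\in T}\bar{\Sigma}(x) \leq \bar{\Sigma}(x)$ for every $x\in T$, which is exactly the claimed bound in Equation \eqref{eq:mean_variance_bounds2}.

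First I would argue that the constrained problem in Equation \eqref{eq:initial_vairance_opt_problem} is already a lower bound for $\min_{x\in T}\left(-\Sigma_{x,\mathbf{x}} S \Sigma_{x,\mathbf{x}}^T\right)$. By Lemma \ref{prop:lbf_ubf_kernel}, any genuine test point $x\in T$ together with $\mathbf{r}=\Sigma_{x,\mathbf{x}}$, $\varphi^{(i)}=\varphi(x,\x{i})$ and $\varphi^{(i)}_j=\varphi_j(x_j,\x{i}_j)$ satisfies every inequality constraint (the same over-approximation of the equality $\mathbf{r}=\Sigma_{x,\mathbf{x}}$ used in Proposition \ref{prop:variance_min}). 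Hence the relaxed feasible set contains the image of $T$, and minimising the identical objective over a superset can only decrease its optimal value.

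Next I would note that passing to the matrix form \eqref{prob:concave}, the spectral factorisation $S = U\Lambda U^T$, and the rotated coordinates $\hat{r}^{(i)}=\mathbf{u}^{(i)}\cdot\covvec$ are all \emph{exact}, invertible linear changes of variable: they leave the feasible set and the optimal value unchanged while rewriting $-\mathbf{r} S \mathbf{r}^T$ as $-\sum_{i=1}^{\tss}\lambda^{(i)}\hat{r}^{(i)2}$. The $2\tss$ auxiliary linear programs return valid outer bounds $[\hat{r}^{(i),L},\hat{r}^{(i),U}]$ on each rotated variable, since they optimise $\mathbf{u}^{(i)}\cdot\covvec$ over precisely the same constraints. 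Positive definiteness of $S$ guarantees $\lambda^{(i)}>0$, so each summand $-\lambda^{(i)}\hat{r}^{(i)2}$ is a smooth concave one-dimensional quadratic, and Lemma \ref{prop:lbf_ubf_kernel} supplies a linear LBF with coefficients $\alpha^{(i)},\beta^{(i)}$ that lies weakly below it on $[\hat{r}^{(i),L},\hat{r}^{(i),U}]$.

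The final step is to replace the concave objective by $\hat{\alpha}+\bm{\beta}^T\hat{\covvec}$. Because $\alpha^{(i)}+\beta^{(i)}\hat{r}^{(i)}\leq -\lambda^{(i)}\hat{r}^{(i)2}$ holds for every feasible point (each rotated coordinate lying inside its computed range), the linear objective under-estimates the concave one throughout the feasible region, so minimising it produces a value no larger than the concave minimum. Chaining the three inequalities gives $\munderbar{\sigma}^2 \leq \min_{x\in T}\left(-\Sigma_{x,\mathbf{x}} S \Sigma_{x,\mathbf{x}}^T\right)$, completing the argument. I expect the only delicate point to be the validity of the per-eigenvalue secant under-estimator: it is sound precisely because the LP-derived ranges are guaranteed to enclose the rotated variables over the \emph{whole} feasible set, so the linear LBF of each concave quadratic remains below it exactly where it is actually evaluated.
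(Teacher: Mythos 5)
Your proposal is correct and follows essentially the same route as the paper, whose proof of Proposition \ref{prop:variance_max} is precisely the derivation preceding its statement: relaxing the equality $\mathbf{r}=\Sigma_{x,\mathbf{x}}$ via the LBF/UBF constraints of Lemma \ref{prop:lbf_ubf_kernel}, rewriting the problem in the rotated eigenbasis of $S$, bounding the rotated variables by the $2\tss$ auxiliary LPs, and replacing each concave summand $-\lambda^{(i)}\hat{r}^{(i)2}$ by a linear under-estimator. You merely make explicit the soundness of each relaxation step (feasible-set inclusion, exactness of the change of variables, validity of the secant under-estimators on the LP-derived ranges), which the paper leaves implicit.
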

\subsubsection{Convergence of Variance Bounds}
The convergence of the bounds computed for the variance to the actual values in hyper-rectangles $R \subseteq T$, with $\text{diam}(R)\to0$, is an immediate consequence of Lemma \ref{lemma:convergence_lbf}, and proceeds similarly to what we have shown for the posterior mean.
In fact, the objective function for the upper bound (Proposition \ref{prop:variance_min}) is exact, and the over-approximation results only from the feasible region of the optimisation problem.
This is relaxed by using LBFs and UBFs introduced in Lemma \ref{prop:lbf_ubf_kernel}, so that their uniform convergence implies that the over-approximated feasible region converges to the actual one in the limit of the diameter $\text{diam}(R)$ tending to $0$.
Similarly, for the lower-bounding of the variance the only difference arises from the use of Lemma \ref{prop:lbf_ubf_kernel}, also for the lower-bounding of the optimisation function. 
However, this also converges to the actual objective function.
Thus, the exact solution of both optimisation problems converges uniformly to the actual values, for $R$ small enough.
We summarise the discussion as the following proposition.
The proof is a straightforward generalisation of the proof of Proposition \ref{prop:mean_convergence} and is therefore omitted.
\begin{proposition}\label{prop:variance_convergence}
Let $\Sigma$ be a kernel with bounded decomposition $(\varphi,\psi,U)$.
Then bounds on the posterior variance, $\Sigma^{L}_{R}$ and $\Sigma^{U}_{R}$, computed through the application of Propositions \ref{prop:variance_min} and \ref{prop:variance_max} converge if the bounds provided by $U$ do so.
\end{proposition}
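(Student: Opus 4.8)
The plan is to mirror the structure of the proof of Proposition \ref{prop:mean_convergence}, treating the upper and lower variance bounds separately since they arise from different optimisation problems. For the upper bound $\Sigma^U_R$ (Proposition \ref{prop:variance_min}), the key observation to exploit is that the objective function $\mathbf{r} S \mathbf{r}^T$ is \emph{exact}: the only source of over-approximation is the feasible region, which is enlarged by replacing the nonlinear equalities $r^{(i)} = \Sigma_{x,\x{i}}$ and $\varphi^{(i)}_j = \varphi_j(x_j,\x{i}_j)$ with the linear LBF/UBF sandwich constraints from Lemma \ref{prop:lbf_ubf_kernel}. So first I would argue that, by Lemma \ref{lemma:convergence_lbf}, each pair of LBF and UBF constraints collapses uniformly onto the true constraint surface as $\text{diam}(R)\to0$, hence the over-approximated feasible region shrinks to the true one. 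Since $\mathbf{r} S \mathbf{r}^T$ is continuous and the minimisation is over a region converging (in, say, Hausdorff distance) to the exact feasible set, the optimal value $\bar{\sigma}^2$ converges to the true minimum, and therefore $\Sigma^U_R = \sigma_p^2 - \bar{\sigma}^2$ converges to $\max_{x\in R}\bar{\Sigma}(x)$.

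For the lower bound $\Sigma^L_R$ (Proposition \ref{prop:variance_max}) there is one extra layer of relaxation: in addition to the feasible-region relaxation, the concave objective $-\hat{\mathbf{r}}^T\Lambda\hat{\mathbf{r}} = \sum_i \lambda^{(i)}\hat{r}^{(i)2}$ is itself replaced by its linear LBF $\hat{\alpha} + \bm{\beta}^T\hat{\covvec}$, obtained by applying Lemma \ref{prop:lbf_ubf_kernel} to each one-dimensional quadratic summand. Here I would invoke Lemma \ref{lemma:convergence_lbf} a second time, now on the quadratic pieces, to conclude that each linear LBF $\alpha^{(i)} + \beta^{(i)}\hat{r}^{(i)}$ converges uniformly to $-\lambda^{(i)}\hat{r}^{(i)2}$ over the shrinking variable range $[\hat{r}^{(i),L},\hat{r}^{(i),U}]$, since that range itself collapses as $\text{diam}(R)\to0$. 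Combining the uniform convergence of the objective relaxation with the uniform convergence of the feasible region, the linear-programming optimum $\munderbar{\sigma}^2$ converges to the true minimum of the concave program, so $\Sigma^L_R = \sigma_p^2 + \munderbar{\sigma}^2$ converges to $\min_{x\in R}\bar{\Sigma}(x)$.

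Both arguments hinge on a continuity-of-optimal-value statement: if the objective converges uniformly and the feasible region converges (in an appropriate set-convergence sense), then the optimal values converge. I expect the main obstacle to be making this set-convergence argument clean, rather than the convergence of the individual LBFs, which is handed to us directly by Lemma \ref{lemma:convergence_lbf}. The subtlety is that the variable ranges for the slack variables $\mathbf{r}$, $\varphi^{(i)}$, $\varphi^{(i)}_j$ (and the rotated $\hat{r}^{(i)}$) must themselves be shown to shrink uniformly with $\text{diam}(R)$; this follows because each is pinned between an LBF and a UBF that converge to a common limit, but it requires care to state that the Hausdorff-distance-to-the-true-feasible-set vanishes so that the continuous (respectively, uniformly convergent) objective attains a limiting optimum. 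Since the proposition is stated conditionally — the bounds converge \emph{provided} the bounds from $U$ do — and since $U$ enters the variance computations only through the same linear constraints governed by Lemma \ref{prop:lbf_ubf_kernel}, the whole argument reduces, exactly as for the mean, to the convergence property of $U$, which is precisely why the authors deem the proof a straightforward generalisation and omit it.
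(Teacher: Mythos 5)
Your proposal is correct and follows essentially the same route as the paper's (omitted) argument: the paper likewise observes that for the upper bound the objective is exact and only the feasible region is relaxed via the LBF/UBF constraints of Lemma \ref{prop:lbf_ubf_kernel}, whose uniform convergence (Lemma \ref{lemma:convergence_lbf}) makes the relaxed feasible region collapse onto the true one, while for the lower bound the additional linearisation of the objective also converges, so both optimal values converge whenever the bounds from $U$ do. Your extra care about Hausdorff convergence of the feasible set and the shrinking slack-variable ranges is a more explicit rendering of exactly the argument the paper sketches and then omits as a straightforward generalisation of Proposition \ref{prop:mean_convergence}.
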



\section{Bounds on Adversarial Robustness}\label{sec:adv_rob_method}
    %
%
%
In this section we show how the lower and upper bounds for the posterior mean and variance can be propagated through the predictive distribution of a GP to compute adversarial robustness guarantees, in the sense of ensuring invariance of the GP decision to perturbations constrained to a small neighbourhood around a test point. 
Thus developed bound will then be included in a branch-and-bound scheme in Section \ref{sec:bnb_alg} for its iterative refinement.
 Recall from Section \ref{sec:class_def} and Problem \ref{prob:adv_pred} that for classification this reduces to bounding the minimum and maximum of the prediction ranges over the neighbourhood. 
We first discuss the bound for the two-class classification problem, 
and then show how the two-class bound can be extended to the multi-class setting.
Finally, we discuss how to obtain guarantees for regression (Problem \ref{prob:mean_pred}) as a particular case of the techniques derived for classification.
\subsection{Bounds for Two-class Classification}\label{sec:bin_bound_class}
As discussed in Section \ref{sec:background}, for a two-class GP it suffices to consider a one-dimensional output space, which greatly simplifies the computations.
Namely, we have that the predictive posterior distribution of Equation \eqref{eq:predictive_posterior_gen} evaluated on a generic point $x$ can be simplified to  one-dimensional integral, i.e.:
\begin{align}\label{eq:post_pred_dist_analytic}
    \pi(x) =  \int_\mathbb{R} \sigma(\xi) \mathcal{N}( \xi | \bar{\mu}(x),\bar{\Sigma}(x))d\xi 
\end{align}
where $\bar{\mu}$ and $\bar{\Sigma}$ are the posterior mean and variance functions, respectively, and $\sigma(\cdot)$ denotes the likelihood function.
We give analytical bounds for the case where the likelihood function is either the probit function or the logistic sigmoid, which entail the majority of applications for GP classification \citep{williams2006gaussian}.
A general bound based on latent space discretisation is discussed for the multi-class problem in Section \ref{sec:multiclass}, and can also be used for a generic two-class likelihood function. 

Let $\mu^L_{T}$, $\mu^U_{T}$, $\Sigma^L_{T}$ and $\Sigma^U_{T}$ be lower and upper bounds for the posterior mean and variance of the GP, computed according to the methods discussed Section \ref{sec:mean_and_var_opts}.
We consider the function that describes the dependence of the predictive posterior distribution directly on the mean and variance by dropping their dependence on $x$: 
\begin{align}\label{eq:Pi_def}
    \Pi(\mu,\Sigma) =  \int_\mathbb{R} \sigma(\xi) \mathcal{N}( \xi | \mu, \Sigma)d\xi \quad \text{for} \quad  \mu \in [\mu^L_{T} , \mu^U_{T}], \Sigma \in [\Sigma^L_{T} , \Sigma^U_{T}].
\end{align}
Then, by definition of lower and upper bounds we have that:
\begin{align*}
    \min_{\substack{  \mu \in [\mu^L_T,\mu^U_T] \\ \Sigma \in [{\Sigma}^L_T ,{\Sigma}^U_T]  }}  \Pi(\mu,\Sigma) \leq \min_{x \in T} \pi(x) \quad \text{and} \quad \max_{\substack{  \mu \in [\mu^L_T,\mu^U_T] \\ \Sigma \in [{\Sigma}^L_T ,{\Sigma}^U_T]  }}  \Pi(\mu,\Sigma) \geq \max_{x \in T} \pi(x),
\end{align*}
that is, over-approximations of the prediction ranges can be found by optimising the function $\Pi$ over the mean/variance box domain $[\mu^L_T,\mu^U_T] \times  [\Sigma^L_{T} , \Sigma^U_{T}]$.
In the next two subsections we show how this can be done depending on the particular form of the chosen  likelihood $\sigma$.
%
%
%
\subsubsection{Classification with the Probit Likelihood}
\label{app:probit}
We first consider the probit likelihood, i.e.,  $\sigma (\xi)= \Phi ( \lambda \xi)$ is the cdf of the univariate standard Gaussian distribution scaled by $\lambda > 0$.
In this case, the predictive distribution can be written down in closed form, which greatly simplifies the computation of the bounds:
\begin{equation*}
    \pi(x) = \Phi \left( \frac{\bar{\mu}(x)}{ \sqrt{ \lambda^{-2} + \bar{\Sigma} (x) }} \right).
\end{equation*} 
We can use this explicit form to derive analytic upper and lower bounds by direct inspection of the predictive distribution derivatives with respect to the induced mean and variance variables.
The following proposition provides a solution for Problem \ref{prob:adv_pred} in the case of two-class classification with the probit likelihood.
\begin{proposition}\label{proposition:bounds_probit}
Consider a predictive posterior distribution $\pi(x)$ defined as in Equation \eqref{eq:post_pred_dist_analytic}, input box $T \subseteq \inputspace$, and  $\mu^L_{T}$, $\mu^U_{T}$, $\Sigma^L_{T}$ and $\Sigma^U_{T}$, lower and upper bounds on the GP posterior variance, computed as detailed in Section \ref{sec:mean_and_var_opts}.
Let $\sigma (\xi)= \Phi ( \lambda \xi)$, with $\lambda > 0$, then we have that
\begin{align} \label{eq:lb_probit1}
   \piInfL{T} &\defnotation \Phi \left( \frac{\mu_T^L}{\sqrt{\lambda^{-2} + \munderbar{\Sigma}^*  }} \right) \leq \piInf{T} \quad \\
    \label{eq:lb_probit2}
    \piSup{T} &\leq \Phi \left( \frac{\mu_T^U}{\sqrt{\lambda^{-2} + \bar{\Sigma}^*  }} \right) =: \piSupU{T}
\end{align}
with
\begin{align*}
    \munderbar{\Sigma}^* = \begin{cases}  
                                {\Sigma}^U_T \quad &\text{if} \quad \mu_T^L \geq 0 \\
                                {\Sigma}^L_T \quad &\text{otherwise}
                            \end{cases} \qquad
    \bar{\Sigma}^* = \begin{cases}  
                                {\Sigma}^L_T \quad &\text{if} \quad \mu_T^U \geq 0 \\
                                {\Sigma}^U_T \quad &\text{otherwise}.
                            \end{cases}
\end{align*}
\end{proposition}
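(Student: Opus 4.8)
The plan is to reduce both bounds to a coordinate-wise monotone optimisation of the auxiliary function $\Pi(\mu,\Sigma) = \Phi\bigl(\mu / \sqrt{\lambda^{-2}+\Sigma}\bigr)$ over the box $[\mu^L_T,\mu^U_T]\times[\Sigma^L_T,\Sigma^U_T]$, exploiting the closed form available for the probit likelihood. Since the bounds computed in Section \ref{sec:mean_and_var_opts} guarantee $\bar\mu(x)\in[\mu^L_T,\mu^U_T]$ and $\bar\Sigma(x)\in[\Sigma^L_T,\Sigma^U_T]$ for every $x\in T$, the image of $T$ under $x\mapsto(\bar\mu(x),\bar\Sigma(x))$ is contained in this box, so $\min_{\mathrm{box}}\Pi\le\piInf{T}$ and $\max_{\mathrm{box}}\Pi\ge\piSup{T}$. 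It then suffices to locate the box corners at which $\Pi$ attains its extrema.

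The core of the argument is a sign analysis of the two partial derivatives of $\Pi$. Because $\Phi$ is strictly increasing, $\partial\Pi/\partial\mu = \Phi'(\cdot)\,(\lambda^{-2}+\Sigma)^{-1/2} > 0$, so $\Pi$ is strictly increasing in $\mu$ for every fixed $\Sigma$; hence the minimum over $\mu$ is always at $\mu=\mu^L_T$ and the maximum at $\mu=\mu^U_T$, uniformly in $\Sigma$. This uniformity is what licenses a separable, coordinate-wise optimisation: I can first pin the optimal $\mu$ and only then optimise over $\Sigma$. For the second coordinate, $\partial\Pi/\partial\Sigma = -\tfrac12\,\Phi'(\cdot)\,\mu\,(\lambda^{-2}+\Sigma)^{-3/2}$, whose sign is opposite to that of $\mu$: $\Pi$ is decreasing in $\Sigma$ when $\mu>0$ and increasing when $\mu<0$ (and constant when $\mu=0$).

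Combining these, for the lower bound I would fix $\mu=\mu^L_T$ and then choose $\Sigma$ to minimise $\Pi(\mu^L_T,\cdot)$: if $\mu^L_T\ge0$ the function is non-increasing in $\Sigma$ so the minimiser is $\Sigma=\Sigma^U_T$, whereas if $\mu^L_T<0$ it is increasing so the minimiser is $\Sigma=\Sigma^L_T$ — exactly the definition of $\munderbar{\Sigma}^*$, giving $\piInfL{T}=\Phi\bigl(\mu^L_T/\sqrt{\lambda^{-2}+\munderbar{\Sigma}^*}\bigr)$. Symmetrically, for the upper bound I fix $\mu=\mu^U_T$ and maximise over $\Sigma$: $\mu^U_T\ge0$ forces $\Sigma=\Sigma^L_T$ and $\mu^U_T<0$ forces $\Sigma=\Sigma^U_T$, which is $\bar\Sigma^*$, yielding $\piSupU{T}$. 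Chaining these corner values with the containment inequality from the first paragraph closes both \eqref{eq:lb_probit1} and \eqref{eq:lb_probit2}.

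I expect the only delicate point to be justifying the separation of the two one-dimensional optimisations rather than optimising jointly over the box: this is legitimate precisely because the monotonicity in $\mu$ has a fixed sign independent of $\Sigma$, so $\min_{\mu,\Sigma}\Pi=\min_\Sigma\Pi(\mu^L_T,\Sigma)$, after which the $\Sigma$-monotonicity — whose direction itself depends on the sign of $\mu$ — is applied with $\mu$ already fixed at a corner, making its sign (and hence the case split) unambiguous. The boundary case $\mu^L_T=0$ (or $\mu^U_T=0$) is harmless, since $\Pi$ is then constant in $\Sigma$ and any corner choice, including the one dictated by the stated convention, returns the correct value $\Phi(0)=\tfrac12$.
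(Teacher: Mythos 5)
Your proposal is correct and follows essentially the same argument as the paper: both exploit the closed-form probit expression, use the strict positivity of the $\mu$-derivative (uniformly in $\Sigma$) to pin the extremal $\mu$ at a box edge, and then resolve the $\Sigma$-coordinate by the sign of the $\Sigma$-derivative, which matches the stated case split for $\munderbar{\Sigma}^*$ and $\bar{\Sigma}^*$. The only cosmetic difference is that the paper first strips off the outer $\Phi$ and does the sign analysis on the argument $\phi(\mu,\Sigma)=\mu/\sqrt{\lambda^{-2}+\Sigma}$, whereas you differentiate $\Pi$ itself; the two are equivalent since $\Phi'>0$.
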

\begin{proof}
We have:
\begin{align*}
    \Pi(\mu,\Sigma) = \Phi \left( \frac{\mu}{ \sqrt{ \lambda^{-2} + {\Sigma} }} \right).
\end{align*}
As $\Phi$ is monotonically increasing, it suffices to optimise for the argument $\phi (\mu, \Sigma) = \frac{\mu}{\sqrt{\lambda^{-2} + \Sigma}}$. 
By computing the partial derivatives it is easy to see that $\frac{\partial \phi (\mu,\Sigma)  }{\partial \mu} > 0$ for all values of $\mu$ and $\Sigma$. Therefore, for every value of $\Sigma$ the minimum is obtained for $\mu = \mu_T^L$.
On the other hand, for the derivative wrt to $\Sigma$ we have that:
\begin{align*}
    \frac{\partial \phi (\mu_T^L,\Sigma)  }{\partial \Sigma}
    \begin{cases} < 0  \quad \textrm{if} \quad  \mu_T^L > 0 \\
    = 0 \quad \textrm{if} \quad  \mu_T^L = 0 \\
    > 0 \quad \textrm{if} \quad  \mu_T^L < 0 
    \end{cases}
\end{align*}
as $\Sigma > 0$. Hence, given $\mu_T^L$, we have that $\phi$ is monotonic in $\Sigma$ and the proposition follows.
\end{proof}

%
%
%
\subsubsection{Classification via Logistic Likelihood}
We now consider the case where $\sigma$ is defined as the logistic sigmoid.
We will show that the minimum and maximum are to be found in the same extrema as for the probit likelihood.
However, as the predictive distribution cannot be expressed in closed form \citep{williams2006gaussian}, we first show that the derivative of the predictive distribution can be computed by passing the sign of the derivative under the integral sign.

First, we note that upper and lower bounds on the variance $\Sigma$ naturally induce upper and lower bounds on the standard deviation $s = \sqrt{\Sigma}$, which we denote $s^L_T$ and $s^U_T$. 
By substituting $s$ in the definition of $\Pi$ in Equation \eqref{eq:Pi_def}, which yields $\Phi(\mu,s) := \Pi(\mu,s^2) = \Pi(\mu,\Sigma)$,  and changing the integration variable to $t = (\xi - \mu)/s $, we have:
\begin{align*}
\Pi(\mu,\Sigma) =:     \Phi(\mu,s) = \int_{\mathbb{R}} h(t,\mu,s) dt \quad \text{where} \quad  h(t,\mu,s) = \sigma(s t+\mu) \mathcal{N}(t|0,1).
\end{align*}
We now want to compute $\frac{\partial \Phi}{\partial \mu} $ and $\frac{\partial \Phi}{\partial s} $.
It is easy to show that all the conditions to apply differentiation under the integral sign theorem are satisfied.
Thus, we have:
\begin{align*}
    \frac{\partial \Phi (\mu,s)}{\partial \mu} = \int \sigma'(s t + \mu) \mathcal{N}(t|0,1)dt, \quad
     \frac{\partial \Phi (\mu,s)}{\partial s} = \int t \sigma'(s t + \mu) \mathcal{N}(t|0,1)dt.
\end{align*}
By relying on the derivatives, we can establish the following bounds.
Specifically, the following proposition provides a solution for Problem \ref{prob:adv_pred} for two-class classification with the logistic likelihood.
\begin{proposition}\label{prop:bound_logistic}
Consider $T$, $\pi(x)$, $\mu^L_{T}$, $\mu^U_{T}$, $\munderbar{\Sigma}^*$ and $\bar{\Sigma}^*$ defined as in Proposition \ref{proposition:bounds_probit}. 
Let $\sigma (\xi)$ be the sigmoid, then we have that:
\begin{align} \label{eq:lb_sigmoid1}
   \piInfL{T} &\defnotation \Pi \left( \mu_T^L,  \munderbar{\Sigma}^*  \right) \leq \piInf{T} \quad \\
    \label{eq:lb_sigmoid2}
    \piSup{T} &\leq \Pi \left( \mu_T^U, \bar{\Sigma}^*   \right) =: \piSupU{T}.
\end{align}
\end{proposition}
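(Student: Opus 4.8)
The plan is to follow the same monotonicity argument used in the proof of Proposition \ref{proposition:bounds_probit}, but since $\Pi$ has no closed form for the logistic likelihood I would work directly with the integral expressions for $\partial \Phi/\partial \mu$ and $\partial \Phi/\partial s$ derived above, and show that $\Phi(\mu,s) = \Pi(\mu, s^2)$ has exactly the same sign structure for its partial derivatives as the argument $\phi(\mu,\Sigma)$ in the probit case. Once such a match is established, the extremizers over the box $[\mu_T^L,\mu_T^U]\times[\Sigma_T^L,\Sigma_T^U]$ coincide with those identified in Proposition \ref{proposition:bounds_probit}, and the claimed bounds follow immediately by evaluating $\Pi$ at the appropriate corner, using the over-approximation inequalities for $\Pi$ over the mean/variance box from Section \ref{sec:bin_bound_class}.

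First I would handle the dependence on $\mu$. Since the logistic sigmoid is strictly increasing, $\sigma' > 0$ everywhere, so the integrand $\sigma'(st+\mu)\mathcal{N}(t|0,1)$ in the expression for $\partial\Phi/\partial\mu$ is strictly positive; hence $\partial\Phi/\partial\mu > 0$ for all $(\mu,s)$. This gives that, for every fixed $\Sigma$, the minimum over $\mu$ is attained at $\mu = \mu_T^L$ and the maximum at $\mu = \mu_T^U$, exactly as in the probit case.

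The main obstacle is the sign of $\partial\Phi/\partial s$, which determines the optimal variance corner. Here I would exploit the fact that, for the logistic sigmoid, $\sigma'$ is even (since $\sigma(z)+\sigma(-z)=1$ implies $\sigma'(-z)=\sigma'(z)$) and unimodal, with its maximum at $0$ and strictly decreasing in $|z|$. I would symmetrize the integral $\partial\Phi/\partial s = \int t\,\sigma'(st+\mu)\mathcal{N}(t|0,1)\,dt$ by folding the $t<0$ half onto $t>0$ via $t\mapsto -t$, obtaining
\begin{align*}
\frac{\partial \Phi(\mu,s)}{\partial s} = \int_0^\infty t\,\bigl[\sigma'(st+\mu) - \sigma'(\mu - st)\bigr]\,\mathcal{N}(t|0,1)\,dt.
\end{align*}
For $\mu>0$, $t>0$, $s>0$ one has $|st+\mu| > |\mu - st|$, so by unimodality $\sigma'(st+\mu) < \sigma'(\mu-st)$ and the bracket is strictly negative, whence $\partial\Phi/\partial s < 0$. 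The symmetric argument gives $\partial\Phi/\partial s > 0$ for $\mu<0$, and evenness of $\sigma'$ makes the bracket vanish identically when $\mu = 0$. This reproduces precisely the case split defining $\munderbar{\Sigma}^*$ and $\bar{\Sigma}^*$ in Proposition \ref{proposition:bounds_probit}.

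Finally I would assemble the bounds. Combining the two monotonicities, the minimum of $\Pi$ over the box is attained at $\mu=\mu_T^L$ together with $\Sigma = \munderbar{\Sigma}^*$ (equal to $\Sigma_T^U$ when $\mu_T^L\ge 0$ and $\Sigma_T^L$ otherwise), giving $\piInfL{T} = \Pi(\mu_T^L,\munderbar{\Sigma}^*)\le \piInf{T}$; the maximum is attained at $\mu=\mu_T^U$ with $\Sigma = \bar{\Sigma}^*$, giving $\piSup{T}\le \Pi(\mu_T^U,\bar{\Sigma}^*) = \piSupU{T}$. The only step requiring real care is the symmetrization/unimodality argument for the sign of $\partial\Phi/\partial s$; everything else is a direct transcription of the probit proof, the $\mu$-monotonicity being immediate from positivity of $\sigma'$.
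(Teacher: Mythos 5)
Your proposal is correct and follows essentially the same route as the paper: positivity of $\partial\Phi/\partial\mu$ from $\sigma'>0$, symmetrization of $\partial\Phi/\partial s$ onto $[0,\infty)$, and a sign analysis of the bracket $\sigma'(\mu+st)-\sigma'(\mu-st)$, after which the probit case split for the extremizing corners carries over. Your unimodality-in-$|z|$ argument for the bracket's sign is just an inlined restatement of the paper's Lemma \ref{lemma:sign_of_sigmoid_derivative} (evenness of $\sigma'$ plus monotonicity on $[0,+\infty)$), so there is no substantive difference.
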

\begin{proof}
We show that the derivatives have the same sign as the probit, and then the proof follows as for probit.
More specifically, we have that:
\begin{align*}
    \frac{\partial \Phi (\mu,s)}{\partial \mu} &=  \int \sigma'(s t + \mu) \mathcal{N}(t | 0,1)dt > 0
\end{align*}
since the sigmoid is a monotonically increasing function. 

For the derivative with respect to $s$ we want to show that:  
\begin{align*}
    \frac{\partial \Phi (\mu,s)}{\partial s} = \int t \sigma'(s t + \mu) \mathcal{N}(t|0,1)dt = \begin{cases} < 0 \quad \text{if} \quad \mu > 0  \\  = 0 \quad \text{if} \quad \mu = 0  \\ > 0 \quad \text{if} \quad \mu < 0   \end{cases}.
\end{align*}
The case for $\mu = 0$ is trivial. 
For the remaining cases we have:
\begin{align*}
    &\int t \sigma'(s t + \mu) \mathcal{N}(t|0,1)dt =  \int_{- \infty}^0 t \sigma'(s t + \mu) \mathcal{N}(t|0,1)dt + \int^{+ \infty}_0 t \sigma'(s t + \mu) \mathcal{N}(t|0,1)dt \\
    =& \int^{+ \infty}_0 t \sigma'(-s t + \mu) \mathcal{N}(t|0,1)dt + \int^{+ \infty}_0 t \sigma'(s t + \mu) \mathcal{N}(t|0,1)dt \\
    =&  \int^{+ \infty}_0 t \left( \sigma'(\mu + s t) -  \sigma'(\mu - s t) \right) \mathcal{N}(t|0,1)dt
\end{align*}
and Lemma \ref{lemma:sign_of_sigmoid_derivative} (see Appendix \ref{sec:lemmas_and_proofs}) can be applied to get the sign of the integral, since $t$ and $\mathcal{N}(t|0,1)$ are always positive in $[0, + \infty)$.
\end{proof}

Though the methods provided in this section suffice for the solution of Problem \ref{prob:adv_pred} in the two-class case, in Section \ref{sec:bnb_alg} we will show how the bounds described above can be utilised to develop a branch-and-bound scheme for their refinement to ensure convergence to $\piInf{T}$ and $\piSup{T}$. 
Before we do this, we show in the next subsection how to compute bounds for multi-class classification.
%

%

\subsection{Bounds for Multi-class Classification}
\label{sec:multiclass}
In this section we generalise the results for two-class classification.
Given a class index $i \in \{1,\ldots, m \},$ we are interested in computing upper and lower bounds on the $i$th component of the predictive posterior distribution $\pi_i(x )$ (see Equation \eqref{eq:predictive_posterior_gen}) for every $x \in T$, with $T$ an axis-aligned hyper-rectangle in the input space.
For simplicity, we explicitly tackle only the softmax likelihood, but similar arguments can be applied to the case of the multi-dimensional probit, as well as other likelihood functions that have similar monotonicity properties.

%
In the following we show that bounds on the multi-class predictive distribution
can be computed by discretising the integral over the latent space.
\begin{proposition}
\label{Theorem:BOundsGenericMulticlass}
Consider a predictive posterior distribution $\pi(x)$  defined as in Equation \eqref{eq:predictive_posterior_gen}, an input box $T \subseteq \outputspace$, and define $\pi_{\min,i}(T)$ and $\pi_{\max,i}(T)$ as in Equation \eqref{eq:min_max_pi}.
Let $\mathcal{S}=\{S_l = [a_l,b_l] \mid l \in \{1,\ldots,M \}\}$  be a finite partition of the latent space $\mathcal{F} = \mathbb{R}^{m}$, with $[a_l,b_l] = [a_{l,1},b_{l,1}] \times \ldots \times  [a_{l,m},b_{l,m}] $.
Then, for $i \in \{1,\ldots,m \}$:
\begin{align}
    &\pi_{\min,i}(T) \geq  \sum_{l=1}^{M} \sigma_i(\munderbar{\xi}^{l}) \min_{x \in T} \int_{S_l} \mathcal{N}(\xi | \bar{\mu}(x),\bar{\Sigma}(x))d\xi \nonumber
   \\
    &\pi_{\max,i}(T) \leq  \sum_{l=1}^{M} \sigma_i(\bar{\xi}^{l}) \max_{x \in T} \int_{S_l} \mathcal{N}(\xi  | \bar{\mu}(x),\bar{\Sigma}(x))d\xi. \nonumber
\end{align}
where
\begin{align*}
    \munderbar{\xi}^{l} &=  [b_{l,1},\ldots,b_{l,i-1},a_{l,i},b_{l,i+1},\ldots,b_{l,m}] \\
    \bar{\xi}^{l} &=   [a_{l,1},\ldots,a_{l,i-1},b_{l,i},a_{l,i+1},\ldots,a_{l,m}].
\end{align*}
\end{proposition}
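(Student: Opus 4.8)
The plan is to exploit two structural facts: the softmax $\sigma_i$ attains its extrema over any axis-aligned box at a prescribed corner determined by its monotonicity, and the Gaussian density is nonnegative, which together let us pull the per-cell extremal values of $\sigma_i$ out of the integral and, finally, move the $\min$/$\max$ over $x$ inside the sum over cells. First I would use additivity of the integral to split the predictive posterior of Equation \eqref{eq:predictive_posterior_gen} over the partition $\mathcal{S}$, writing, for every $x \in T$,
\begin{align*}
\pi_i(x) = \sum_{l=1}^M \int_{S_l} \sigma_i(\xi) \, \mathcal{N}(\xi \mid \bar{\mu}(x),\bar{\Sigma}(x)) \, d\xi.
\end{align*}

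Next I would identify the extremal corners, i.e.\ show $\sigma_i(\munderbar{\xi}^{l}) = \min_{\xi \in S_l} \sigma_i(\xi)$ and $\sigma_i(\bar{\xi}^{l}) = \max_{\xi \in S_l} \sigma_i(\xi)$. This is immediate from the monotonicity of the softmax: since $\partial \sigma_i / \partial \xi_i = \sigma_i(1-\sigma_i) > 0$ while $\partial \sigma_i / \partial \xi_j = -\sigma_i\sigma_j < 0$ for $j \neq i$, the function $\sigma_i$ is strictly increasing in its $i$th coordinate and strictly decreasing in every other coordinate. Hence its minimum over the box $S_l = [a_{l,1},b_{l,1}] \times \cdots \times [a_{l,m},b_{l,m}]$ is attained by setting $\xi_i = a_{l,i}$ and $\xi_j = b_{l,j}$ for $j \neq i$ --- exactly the corner $\munderbar{\xi}^{l}$ --- and dually for the maximising corner $\bar{\xi}^{l}$. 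This is the only place where the specific likelihood enters, which is why the monotonicity property is the natural hypothesis to extend the argument to other link functions.

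Because $\mathcal{N}(\xi \mid \bar{\mu}(x),\bar{\Sigma}(x)) \geq 0$, I can then bound $\sigma_i$ pointwise by these constants under each integral, $\sigma_i(\munderbar{\xi}^{l}) \leq \sigma_i(\xi) \leq \sigma_i(\bar{\xi}^{l})$ on $S_l$, which yields for every $x \in T$
\begin{align*}
\sum_{l=1}^M \sigma_i(\munderbar{\xi}^{l}) \int_{S_l} \mathcal{N}(\xi \mid \bar{\mu}(x),\bar{\Sigma}(x)) \, d\xi \;\leq\; \pi_i(x) \;\leq\; \sum_{l=1}^M \sigma_i(\bar{\xi}^{l}) \int_{S_l} \mathcal{N}(\xi \mid \bar{\mu}(x),\bar{\Sigma}(x)) \, d\xi.
\end{align*}

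The final step, and the only point requiring care, is to move the optimisation over $x$ inside the sum. For the lower bound, each weight $\sigma_i(\munderbar{\xi}^{l})$ is nonnegative, so replacing every cell integral by its minimum over $T$ can only decrease the right-hand side; the resulting quantity $\sum_l \sigma_i(\munderbar{\xi}^{l}) \min_{x \in T} \int_{S_l} \mathcal{N}(\xi \mid \bar{\mu}(x),\bar{\Sigma}(x))\,d\xi$ therefore lower-bounds $\pi_i(x)$ uniformly in $x$, and minimising the chain over $x \in T$ gives the claimed bound on $\pi_{\min,i}(T)$. The dual argument, again using nonnegativity of the weights together with $\max_x \sum_l f_l(x) \leq \sum_l \max_x f_l(x)$, yields the bound on $\pi_{\max,i}(T)$. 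The subtlety to flag is precisely this interchange: it rests on the elementary inequality $\min_x \sum_l f_l(x) \geq \sum_l \min_x f_l(x)$ (and its dual), which holds here because all the $\sigma_i$ weights are nonnegative and the cell integrals are being optimised independently over $T$.
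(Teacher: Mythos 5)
Your proof is correct and follows essentially the same route as the paper's: split the integral over the partition, use the sign of the softmax partial derivatives to locate its extrema at the corners $\munderbar{\xi}^{l}$ and $\bar{\xi}^{l}$, pull those constant bounds out of each cell integral (using nonnegativity of the Gaussian density), and interchange the optimisation over $x$ with the sum over cells. The only difference is one of ordering --- you establish the bound pointwise in $x$ before optimising, whereas the paper interchanges $\min$ and $\sum$ first and then bounds the softmax --- and your ordering actually makes the direction of that interchange ($\min_{x}\sum_{l} \geq \sum_{l}\min_{x}$) more transparent than it is in the paper's write-up.
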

\begin{proof}
We prove the statement for the minimum; the arguments for the maximum are analogous.
By simple properties of integrals and definition of the minimum we have that:
\begin{align*}
    \pi_{\min,i}(T) =& \min_{x \in T} \int_{\outputspace} \sigma(\xi) \mathcal{N}(\xi | \bar{\mu}(x),\bar{\Sigma}(x))d\xi = \min_{x \in T} \sum_{l=1}^M \int_{S_l} \sigma(\xi) \mathcal{N}(\xi | \bar{\mu}(x),\bar{\Sigma}(x))d\xi \\
    \leq & \sum_{l=1}^M   \min_{x \in T} \int_{S_l} \sigma(\xi) \mathcal{N}(\xi | \bar{\mu}(x),\bar{\Sigma}(x))d\xi.
\end{align*}
Taking the partial derivatives of the softmax function with respect to coordinate $k \in \{1,\ldots,m\}$ we have that:
\begin{align*}
    \frac{\partial \sigma_i(\xi)}{\partial \xi_k} = \begin{cases} \sigma_i(\xi) (1 - \sigma_i(\xi)) \quad &\text{if} \; k=i\\  - \sigma_i(\xi) \sigma_k(\xi)   \quad &\text{if} \; k\neq i \end{cases}
\end{align*}
and hence we obtain that the $i$-th component of the softmax function is monotonically increasing along the direction $i$ and monotonically decreasing along all the other dimensions $k \neq i$.
Thus, its minimum in a generic axis-aligned hyper-rectangle $[a_{l,1},b_{l,1}] \times \ldots \times  [a_{l,m},b_{l,m}]$ will be found in the vertex defined as $\munderbar{\xi}^{l} = [b_{l,1},\ldots,b_{l,i-1},a_{l,i},b_{l,i+1},\ldots,b_{l,m}]$.
Therefore, we have that the chain of inequalities above can be lower-bounded by computing the softmax on $\munderbar{\xi}^{l}$ and taking it outside of the integral computation, which yields: 
\begin{align*}
    \sum_{l=1}^{M} \sigma_i(\munderbar{\xi}^{l}) \min_{x \in T} \int_{S_l} \mathcal{N}(\xi | \bar{\mu}(x),\bar{\Sigma}(x))d\xi.
\end{align*}
\end{proof}
Summing up, Proposition \ref{Theorem:BOundsGenericMulticlass} guarantees that, for all $x\in T$, $\pi_i(x)$  can be upper- and lower-bounded by solving $M$ optimisation problems over a multi-dimensional Gaussian integral.
In Proposition \ref{Prop:MultiCLass} below, we show that upper and lower bounds for the integral of a multi-dimensional Gaussian distribution, such as those appearing in Proposition \ref{Theorem:BOundsGenericMulticlass}, can be obtained by optimising a marginalised product of uni-dimensional Gaussian integrals over both the input and the latent space.

We first introduce the following notation.
We denote with $\bar{\mu}_{i:j}(x)$ the subvector of $\bar{\mu}(x)$ containing only the components from $i$ to $j$, with $i\leq j$, and similarly we define $\bar{\Sigma}_{i:k,j:l}(x)$ to be the submatrix of $\bar{\Sigma}(x)$ containing rows from $i$ to $k$ and columns from $j$ to $l$, with $i\leq k$ and $j \leq l$. 
\begin{proposition}
\label{Prop:MultiCLass}
Let $S=\prod_{i=1}^{m}[ a_{i}, b_{i}] \subseteq \outputspace$ be an axis-aligned hyper-rectangle in the latent space, and consider the posterior mean and variance functions $\bar{\mu}(x)$ and $\bar{\Sigma}(x)$.
For $i \in \{1,\ldots,m-1 \}$ and $f_\mathcal{I} \in \mathbb{R}^{m-i-1}$, define $\mathcal{I}=(i+1):m$ and 
\begin{align}
    \bar{\mu}^f_i(x) &=\bar{\mu}_i(x)-\bar{\Sigma}_{i,\mathcal{I}}(x)\bar{\Sigma}_{\mathcal{I},\mathcal{I}}^{-1}(x)(f_\mathcal{I}-\bar{\mu}_{\mathcal{I}}(x)) \label{eq:marg_latent_mean}\\
    \bar{\Sigma}^f_i(x) &= \bar{\Sigma}_{i,i}(x)-\bar{\Sigma}_{i,\mathcal{I}}(x)\bar{\Sigma}_{\mathcal{I},\mathcal{I}}^{-1}(x)\bar{\Sigma}_{i,\mathcal{I}}^T(x). \label{eq:marg_latent_var}
\end{align}
Let $S_\mathcal{I}=\prod_{j=i+1}^{m} [ a_{i},b_{i}]$, then we have that:
\begin{align}
\max_{x\in T}\int_{S} \mathcal{N}(\xi|\bar{\mu}(x),\bar{\Sigma}(x)) d\xi &\leq \nonumber \\ \max_{x\in T} \int_{a_m}^{b_m}&\mathcal{N}(\xi|\bar{\mu}_{m}(x), \bar{\Sigma}_{m,m}(x) )d\xi  \prod_{i=1}^{m-1}  \max_{\substack{x\in T \\ f\in S_\mathcal{I}}} \int_{a_i}^{b_i} \mathcal{N}(\xi|\bar{\mu}^f_i(x),\bar{\Sigma}^f_i(x) )d\xi \label{eq:max_multi_int_form}\\
\min_{x\in T}\int_{S} \mathcal{N}(\xi|\bar{\mu}(x),\bar{\Sigma}(x))d\xi &\geq \nonumber\\ \min_{x\in T} \int_{a_m}^{b_m} &\mathcal{N}(\xi|\bar{\mu}_{m}(x), \bar{\Sigma}_{m,m}(x) )d\xi  \prod_{i=1}^{m-1}  \min_{\substack{x\in T \\ f\in S_\mathcal{I}}}  \int_{a_i}^{b_i}  \mathcal{N}(\xi|\bar{\mu}^f_i(x),\bar{\Sigma}^f_i(x) )d\xi. \label{eq:min_multi_int_form}
\end{align}
\end{proposition}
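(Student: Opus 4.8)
The plan is to exploit the chain-rule factorisation of the joint Gaussian density $\mathcal{N}(\xi | \bar{\mu}(x),\bar{\Sigma}(x))$ into a marginal for the last coordinate times a product of one-dimensional conditionals, to rewrite the integral over the box $S$ as an iterated integral, and then to bound each inner integral using the nonnegativity of the Gaussian factors. I will establish the upper bound \eqref{eq:max_multi_int_form}; the lower bound \eqref{eq:min_multi_int_form} is entirely symmetric, with every $\max$ replaced by $\min$ and every upper bound on an inner integral replaced by the corresponding lower bound.

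I begin by fixing $x \in T$. Since marginals and conditionals of a multivariate Gaussian are again Gaussian, the standard conditioning formulas yield the telescoping factorisation
\[
\mathcal{N}(\xi | \bar{\mu}(x),\bar{\Sigma}(x)) = \mathcal{N}(\xi_m | \bar{\mu}_m(x), \bar{\Sigma}_{m,m}(x)) \prod_{i=1}^{m-1} \mathcal{N}\big(\xi_i | \bar{\mu}^{\xi_{\mathcal{I}}}_i(x), \bar{\Sigma}^{\xi_{\mathcal{I}}}_i(x)\big),
\]
where for each $i$, with $\mathcal{I} = (i+1):m$, the quantities $\bar{\mu}^{f}_i(x)$ and $\bar{\Sigma}^{f}_i(x)$ are the conditional mean and variance of $\xi_i$ given $\xi_\mathcal{I}=f$, i.e.\ precisely the functions in Equations \eqref{eq:marg_latent_mean} and \eqref{eq:marg_latent_var}. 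I would derive this by marginalising out $\xi_{1:(i-1)}$ to obtain the joint Gaussian over $(\xi_i, \xi_\mathcal{I})$ and then conditioning on $\xi_\mathcal{I}$; this is the step that fixes the exact form of the mean and variance functions appearing in the statement.

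Next I would write $\int_S \mathcal{N}(\xi | \bar{\mu}(x),\bar{\Sigma}(x))\,d\xi$ as an iterated integral, integrating $\xi_1$ innermost and $\xi_m$ outermost. Integrating $\xi_1$ first gives $\int_{a_1}^{b_1} \mathcal{N}(\xi_1 | \bar{\mu}^{\xi_\mathcal{I}}_1(x), \bar{\Sigma}^{\xi_\mathcal{I}}_1(x))\,d\xi_1$, whose conditioning value $\xi_\mathcal{I}=\xi_{2:m}$ ranges over $S_\mathcal{I}=\prod_{j=2}^m[a_j,b_j]$; hence this inner integral is bounded above by the constant $M_1 := \max_{x\in T,\,f\in S_\mathcal{I}} \int_{a_1}^{b_1}\mathcal{N}(\xi | \bar{\mu}^{f}_1(x),\bar{\Sigma}^{f}_1(x))\,d\xi$, independent of $x$ and of the remaining integration variables. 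Because every other factor is a nonnegative density, I may replace this inner integral by $M_1$ and factor it out; the leftover integrand is exactly the marginal density of $\xi_{2:m}$, so the argument telescopes. Repeating coordinate by coordinate extracts the product $\prod_{i=1}^{m-1}M_i$ and leaves $\int_{a_m}^{b_m}\mathcal{N}(\xi_m | \bar{\mu}_m(x),\bar{\Sigma}_{m,m}(x))\,d\xi_m$.

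Finally, since each $M_i$ already incorporates the maximum over $x\in T$, it is a constant and factors out of the remaining $\max_{x\in T}$, so taking the maximum over $x$ of the derived bound reproduces exactly the right-hand side of \eqref{eq:max_multi_int_form}. I expect the main obstacle to be the careful bookkeeping in the iterated-integration step: one must check that at each stage the conditioning variable genuinely ranges over the corresponding $S_\mathcal{I}$ (so the maximisation over $f\in S_\mathcal{I}$ is a legitimate bound) and that the nonnegativity of the Gaussian factors licenses pulling the constant bounds through the remaining integrals without reversing the inequality. The Gaussian factorisation and conditioning identities are standard, so the genuine content is concentrated in this monotone-bounding argument.
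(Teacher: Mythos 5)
Your proof is correct and follows essentially the same route as the paper's: the paper writes the box integral as $P(\GP(x)\in S)$, factorises it by the probability chain rule over the same coordinate ordering, and then applies its Lemma~\ref{Lemma:supProbab} (namely $P(X\in A\mid Y\in B)\geq \inf_{y\in B}P(X\in A\mid Y=y)$) to each conditional factor, with the Gaussian conditioning formulas identifying the one-dimensional means and variances of Equations~\eqref{eq:marg_latent_mean}--\eqref{eq:marg_latent_var}. Your telescoping iterated-integral step---bounding the innermost conditional integral by its extremum over $x\in T$ and $f\in S_\mathcal{I}$, then pulling that constant through the remaining nonnegative marginal density---is exactly the content (and the proof) of that lemma, just written at the level of densities rather than conditional probabilities of events.
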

\begin{proof}
We consider the case of the minimum; the maximum follows similarly.

Consider the latent posterior process $\GP$, whose mean and variance function we denote with $\bar{\mu}(x)$  and  $\bar{\Sigma}(x)$.
Then, we have 
\begin{align*}
\min_{x\in T}&\int_{S}\mathcal{N}(\xi|\bar{\mu}(x),\bar{\Sigma}(x)) d\xi = \min_{x \in T} {P(\GP(x)\in S)} = \min_{x \in T}  P(a_i\leq \GP_i(x)\leq b_i , i=1,\ldots,m) = \\ 
&\min_{x \in T} P(a_m\leq \GP_m(x)\leq b_m) \prod_{i=1}^{m-1} {P(a_i\leq \GP_i(x)\leq b_i| \GP_{\mathcal{I}}(x) \in  S_\mathcal{I} )} \geq  \\
&\quad\quad \text{(By Lemma \ref{Lemma:supProbab} included in the Appendix \ref{sec:lemmas_and_proofs})}\\
& \min_{x \in T} P(a_m\leq \GP_m(x)\leq b_m) \prod_{i=1}^{m-1} \min_{f_\mathcal{I} \in S_\mathcal{I}} P(a_i\leq \GP_i(x)\leq b_i | \GP_\mathcal{I}(x)=f_\mathcal{I} ) \geq \\
& \min_{x \in T} P(a_m\leq \GP_m(x)\leq b_m) \prod_{i=1}^{m-1}\min_{\substack{x \in T \\f_\mathcal{I} \in S_\mathcal{I}}} P(a_i\leq \GP_i(x)\leq  b_i  | \GP_\mathcal{I}(x)=f_\mathcal{I} ). 
\end{align*}
Notice that, for each $i \in \{1,\ldots,m-1 \}$, $ P(a_i\leq \GP_i(x)\leq  b_i  | \GP_\mathcal{I}(x)=f_\mathcal{I} ) $ is the integral of a uni-dimensional Gaussian random variable conditioned  on a jointly Gaussian random variable.
The statement of the proposition then follows by the application of the conditioning equations for Gaussian distributions.
\end{proof}

Proposition \ref{Prop:MultiCLass} reduces the computation of the multi-class bounds to a product of extrema computations over univariate Gaussian distributions. 
To solve this, we first need to compute lower and upper bounds for the conditional latent mean and the conditional latent variance defined in Equations \eqref{eq:marg_latent_mean} and \eqref{eq:marg_latent_var}.
Observe that Equations \eqref{eq:marg_latent_mean} and \eqref{eq:marg_latent_var} can be expressed as a rational function in the entries of the mean vector, variance matrix and latent variable vector.
We can thus propagate the upper and lower bound of each entry from the mean vector and covariance matrix down through the rational function equations by simple interval bound propagation techniques, which results in an upper and lower bound on $\bar{\mu}^f_i(x) $ and $\bar{\Sigma}^f_i(x) $ for $x\in T$ and $f\in S_\mathcal{I}$, which we denote with $\mu^{L,f}_{i,T}$, $\mu^{U,f}_{i,T}$,  $\Sigma^{L,f}_{i,T}$ and $\Sigma^{U,f}_{i,T}$.
This process can then be iterated backward from $i=m$ to $i=1$, up until all the required bounds are computed.
Unfortunately, because of the need to symbolically compute a matrix inversion, the explicit formulas for the computation of  $\mu^{L,f}_{i,T}$, $\mu^{U,f}_{i,T}$,  $\Sigma^{L,f}_{i,T}$ and $\Sigma^{U,f}_{i,T}$ in general are rather convoluted and long (though still in the form of a simple ratio between polynomials). 

Once those bounds are computed, we rely on the following lemma for the solution of the optimisation problem over the Gaussian integrals. 

\begin{lemma}\label{Lemma:Gaussian}
Consider $ S_\mathcal{I}$, $a_i$, $b_i$, $\bar{\mu}_i^f(x)$ and $\bar{\Sigma}_i^f(x)$ defined as in Proposition \ref{Prop:MultiCLass}, an input box $T \subseteq \inputspace$, and $\mu^{L,f}_{i,T}$,  $\mu^{U,f}_{i,T}$, $\Sigma^{L,f}_{i,T}$ and $\Sigma^{U,f}_{i,T}$, lower and upper bounds on $\bar{\mu}_i^f(x)$ and $\bar{\Sigma}_i^f(x)$ in $T$ computed as discussed above.
Define $\zeta \defnotation [x,f]$ and its input region as $Z = T \times S_\mathcal{I}$.
Let $i=\{1,\ldots,m \}$, $\mu^c_i = \frac{a_i+b_i}{2}$ and $\Sigma^c_i(\mu) = \frac{{(\mu-a_i)^2-(\mu-b_i)^2}}{{2 \log \frac{\mu-a_i}{\mu-b_i}}} $.
Then it holds that:
\begin{align}
    &\max_{\zeta \in Z}\int_{a_i}^{b_i} \mathcal{N}( \xi | \bar{\mu}_i^f(x),\bar{\Sigma}_i^f(x))d \xi \leq \int_{a_i}^{b_i} \mathcal{N}( \xi \vert \bar{\mu}^* , \bar{\Sigma}^* ) d\xi \nonumber \\
    &= \frac{1}{2} \left( \erf \left(\frac{\bar{\mu}^*-a_i}{\sqrt{2\bar{\Sigma}^*}}\right) - \erf \left(\frac{\bar{\mu}^*-b_i}{\sqrt{2\bar{\Sigma}^*}}\right)  \right) \label{eq:gen_like2}\\
    &\min_{\zeta \in Z}\int_{a_i}^{b_i} \mathcal{N}(\xi | \bar{\mu}_i^f(x),\bar{\Sigma}_i^f(x))d\xi  \geq \int_{a_i}^{b_i} \mathcal{N}( \xi \vert \munderbar{\mu}^* , \munderbar{\Sigma}^* ) d\xi \nonumber \\
    &= \frac{1}{2} \left ( \erf \left(\frac{\munderbar{\mu}^*-a_i}{\sqrt{2\munderbar{\Sigma}^*}}\right) - \erf \left(\frac{\munderbar{\mu}^*-b_i}{\sqrt{2\munderbar{\Sigma}^*}}\right)  \right) \label{eq:inf_gauss_int}
\end{align}
where we have:
\begin{align*}
    \bar{\mu}^* &= \argmin_{\mu \in [ \mu^{L,f}_{i,T}, \mu^{U,f}_{i,T} ]} \vert \mu^c_i - \mu \vert,  \quad
    \bar{\Sigma}^* = \begin{cases}
                        \Sigma^{L,f}_{i,T} \quad &\text{if} \quad \bar{\mu}^* \in [a_i,b_i] \\
                        \argmin_{\Sigma \in [\Sigma^{L,f}_{i,T}, \Sigma^{U,f}_{i,T}]} \vert \Sigma^c_i(\bar{\mu}^*) - \Sigma \vert \quad &\text{otherwise}
                        \end{cases} \\
    \munderbar{\mu}^* & = \argmax_{\mu \in [  \mu^{L,f}_{i,T},  \mu^{U,f}_{i,T} ]} \vert \mu^c_i - \mu \vert,  \quad
    \munderbar{\Sigma}^* = \argmin_{\Sigma \in \{\Sigma^{L,f}_{i,T}, \Sigma^{U,f}_{i,T}\}} [\erf(b_i \vert \munderbar{\mu}^*, \Sigma) - \erf(a_i \vert \munderbar{\mu}^*, \Sigma)]. 
\end{align*}
\end{lemma}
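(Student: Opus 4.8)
The plan is to recognise the integral as a function of the conditional mean and variance alone, and to optimise it over the rectangle in which those two quantities are confined. Write $G(\mu,\Sigma) \defnotation \int_{a_i}^{b_i}\mathcal{N}(\xi\mid\mu,\Sigma)\,d\xi = \tfrac12\bigl(\erf(\tfrac{\mu-a_i}{\sqrt{2\Sigma}}) - \erf(\tfrac{\mu-b_i}{\sqrt{2\Sigma}})\bigr)$, where the closed form comes from the standard CDF identity $\Phi(z)=\tfrac12(1+\erf(z/\sqrt2))$ and the oddness of $\erf$. Since by hypothesis $\bar\mu_i^f(x)\in[\mu^{L,f}_{i,T},\mu^{U,f}_{i,T}]$ and $\bar\Sigma_i^f(x)\in[\Sigma^{L,f}_{i,T},\Sigma^{U,f}_{i,T}]$ for every $\zeta=[x,f]\in Z$, we have $\max_{\zeta\in Z}G(\bar\mu_i^f(x),\bar\Sigma_i^f(x))\le \max_{B}G$ and $\min_{\zeta\in Z}G\ge\min_{B}G$, where $B$ is the mean--variance box. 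So it suffices to locate the extrema of the smooth two-variable function $G$ on $B$.

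Next I would analyse $G$ in the $\mu$ direction. Differentiating gives $\partial_\mu G = \tfrac1{\sqrt\Sigma}[\phi(\tfrac{a_i-\mu}{\sqrt\Sigma}) - \phi(\tfrac{b_i-\mu}{\sqrt\Sigma})]$, with $\phi$ the standard normal density; this vanishes iff $|a_i-\mu|=|b_i-\mu|$, i.e. at $\mu=\mu^c_i=\tfrac{a_i+b_i}2$, and is positive for $\mu<\mu^c_i$, negative for $\mu>\mu^c_i$. Hence for each fixed $\Sigma$, $G(\cdot,\Sigma)$ is strictly unimodal with peak at $\mu_i^c$. The decisive observation is that $\mu_i^c$ is independent of $\Sigma$: the $\mu$-maximiser over the interval is always the feasible point nearest $\mu_i^c$, namely $\bar\mu^*=\argmin_{\mu}|\mu_i^c-\mu|$, and the $\mu$-minimiser is always the feasible point farthest from $\mu_i^c$, namely $\munderbar\mu^*=\argmax_\mu|\mu_i^c-\mu|$, regardless of $\Sigma$. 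This decouples the problem: $\max_B G=\max_\Sigma G(\bar\mu^*,\Sigma)$ and $\min_B G=\min_\Sigma G(\munderbar\mu^*,\Sigma)$, reducing each to a one-dimensional problem in $\Sigma$ with the mean pinned.

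The technical heart is the $\Sigma$ direction with $\mu$ fixed. Writing $s=\sqrt\Sigma$, $u=\tfrac{b_i-\mu}{s}$, $v=\tfrac{a_i-\mu}{s}$, one obtains $\partial_s G = -\tfrac1s(g(u)-g(v))$ with $g(z)=z\phi(z)$. When $\mu\in[a_i,b_i]$ we have $u\ge0\ge v$, so $g(u)\ge0\ge g(v)$ and $\partial_s G\le0$: $G$ is monotonically decreasing in $\Sigma$. When $\mu\notin[a_i,b_i]$, say $\mu>b_i$, both $u,v$ are negative; using $g'(z)=\phi(z)(1-z^2)$ one checks $\partial_sG>0$ for small $s$ (both arguments below $-1$) and $\partial_sG<0$ for large $s$ (both arguments in $(-1,0)$), so $G$ is strictly unimodal in $\Sigma$ with a single interior maximum. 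Setting $g(u)=g(v)$, i.e. $(b_i-\mu)e^{-(b_i-\mu)^2/2\Sigma}=(a_i-\mu)e^{-(a_i-\mu)^2/2\Sigma}$, and solving for $\Sigma$ yields exactly $\Sigma_i^c(\mu)=\tfrac{(\mu-a_i)^2-(\mu-b_i)^2}{2\log\frac{\mu-a_i}{\mu-b_i}}$. The main obstacle is precisely making this single-crossing/unimodality claim rigorous; the rest is elementary.

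Finally I would assemble the pieces. For the maximum, fix $\mu=\bar\mu^*$: if $\bar\mu^*\in[a_i,b_i]$ then $G$ decreases in $\Sigma$, so the optimum is $\bar\Sigma^*=\Sigma^{L,f}_{i,T}$; otherwise $G$ peaks at $\Sigma_i^c(\bar\mu^*)$, so the optimum is the feasible variance nearest that peak, $\bar\Sigma^*=\argmin_\Sigma|\Sigma_i^c(\bar\mu^*)-\Sigma|$, matching the statement. For the minimum, fix $\mu=\munderbar\mu^*$: whether $\munderbar\mu^*$ lies inside $[a_i,b_i]$ (giving monotone $G$) or outside (giving quasi-concave $G$), a function that is monotone or unimodal on a closed interval attains its minimum at an endpoint, so $\munderbar\Sigma^*\in\{\Sigma^{L,f}_{i,T},\Sigma^{U,f}_{i,T}\}$ is whichever endpoint gives the smaller integral, matching the displayed $\argmin$. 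Substituting $(\bar\mu^*,\bar\Sigma^*)$ and $(\munderbar\mu^*,\munderbar\Sigma^*)$ into the closed form for $G$ produces the two stated $\erf$ expressions, which completes the bounds.
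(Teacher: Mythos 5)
Your proposal is correct and follows essentially the same route as the paper's own proof: reduce the optimisation over $Z$ to the mean--variance box via the computed bounds, use the closed-form $\erf$ expression, and establish unimodality in $\mu$ (peak at $\mu^c_i$) and in $\Sigma$ (monotone decreasing when $\mu \in [a_i,b_i]$, peaked at $\Sigma^c_i(\mu)$ otherwise) by derivative sign analysis. Your version is in fact slightly more thorough than the paper's, which only writes out the minimum case and asserts the $\Sigma$-derivative sign equivalence without the substitution $s=\sqrt{\Sigma}$ and the $g(z)=z\phi(z)$ single-crossing argument you supply.
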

By iterating the computation of Lemma \ref{Lemma:Gaussian} for each integral in Proposition \ref{Prop:MultiCLass}, we obtain the bounds on the predictive distribution.
The discretised bound can also be used for two-class classification, in cases where a likelihood function different from the probit and the logistic sigmoid is desired.

\subsection{Bounds for Regression}\label{sec:regression_bound}
While computing adversarial robustness guarantees for classification models involves the computation of upper and lower bounds on the GP posterior predictive distribution, the analysis is much simpler for regression.
As stated in Section \ref{sec:background} and formalised in Problem \ref{prob:mean_pred}, for the canonical loss function the optimal decision corresponds to the posterior latent mean function $\bar{\mu}(x)$ of the posterior GP distribution, whose computation is given in Section \ref{sec:prob_form}.
Guarantees over the decision can then be made simply by relying on upper and lower bounds for the mean function, that is, $\mu^{L}_{i,T}$ and $\mu^{U}_{i,T}$ for every $i=1,\ldots,m$, which makes over-approximation of Definition \ref{def:adversarial_regr} much faster and simpler in practice.
\begin{proposition}
Consider a box $T \subseteq \inputspace$ of the input space, a test point $x^* \in T$, an $\ell_p$ metric $||\cdot||$ in the output space $\outputspace$ and a $\delta > 0$. 
Let $\bar{\mu}$ be the predictive posterior mean, and $\mu^{L}_{i,T}$ and $\mu^{U}_{i,T}$, for every $i=1,\ldots,m$, its upper and lower bounds computed according to Proposition \ref{prop:mean_bound}.
Define $\mu^*_T$ as the vector of entries:
\begin{align*}
    \mu^*_{T,i} = \begin{cases} \mu^{L}_{i,T} \quad \text{if} \quad  | \bar{\mu}_i(x^*) - \mu^{L}_{i,T} |  \geq | \bar{\mu}_i(x^*) - \mu^{U}_{i,T} | \\
    \mu^{U}_{i,T} \quad \text{if}  \quad | \bar{\mu}_i(x^*) - \mu^{L}_{i,T} |  < | \bar{\mu}_i(x^*) - \mu^{U}_{i,T} |. \end{cases}
\end{align*}
Then:
\begin{align*}
    \sup_{x \in T} || \bar{\mu}(x^*) - \bar{\mu}(x)   || \leq || \bar{\mu}(x^*) - \mu^*_T   ||.
\end{align*}
Consequently, if:
\begin{align*}
    || \bar{\mu}(x^*) - \mu^*_T   || \leq \delta
\end{align*}
then the GP is $\delta$-robust in $x^*$ w.r.t.\ $T$ and norm $|| \cdot ||$. 
\end{proposition}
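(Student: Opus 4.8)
The plan is to establish the pointwise inequality $\| \bar{\mu}(x^*) - \bar{\mu}(x) \| \le \| \bar{\mu}(x^*) - \mu^*_T \|$ for every $x \in T$, after which taking the supremum over $x \in T$ delivers the displayed bound on $\sup_{x \in T} \| \bar{\mu}(x^*) - \bar{\mu}(x) \|$, and the $\delta$-robustness conclusion follows at once by chaining this with the hypothesis $\| \bar{\mu}(x^*) - \mu^*_T \| \le \delta$ and invoking Definition \ref{def:adversarial_regr}. So the entire content is in the pointwise inequality.

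First I would reduce the vector inequality to a family of scalar inequalities, one per output coordinate. Since $\| \cdot \|$ is an $\ell_p$ norm, it is monotone in the componentwise absolute values: if $|v_i| \le |w_i|$ for all $i$, then $\|v\| \le \|w\|$ (valid for every $p \in [1,\infty]$). Hence it suffices to show, for each $i = 1,\ldots,m$ and each $x \in T$, that $|\bar{\mu}_i(x^*) - \bar{\mu}_i(x)| \le |\bar{\mu}_i(x^*) - \mu^*_{T,i}|$.

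For a fixed coordinate $i$, the bounds from Proposition \ref{prop:mean_bound} give $\mu^{L}_{i,T} \le \bar{\mu}_i(x) \le \mu^{U}_{i,T}$ for all $x \in T$; in particular this containment holds at $x = x^*$ as well. The scalar map $z \mapsto |\bar{\mu}_i(x^*) - z|$ is convex, so over the interval $[\mu^{L}_{i,T}, \mu^{U}_{i,T}]$ its maximum is attained at one of the two endpoints, namely the one farther from $\bar{\mu}_i(x^*)$. By the case definition of $\mu^*_{T,i}$, that farther endpoint is exactly $\mu^*_{T,i}$, so $\max_{z \in [\mu^{L}_{i,T}, \mu^{U}_{i,T}]} |\bar{\mu}_i(x^*) - z| = |\bar{\mu}_i(x^*) - \mu^*_{T,i}|$. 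Applying this with $z = \bar{\mu}_i(x)$, which lies in the interval, yields the required scalar bound.

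Reassembling the coordinates through the monotonicity of the norm closes the pointwise inequality, and the supremum over $T$ gives the stated result; the final implication is then purely definitional. I do not anticipate a genuine obstacle: the only step needing care is the appeal to monotonicity of the $\ell_p$ norm in the componentwise magnitudes, which is what licenses decoupling the problem coordinate by coordinate, while the endpoint-maximisation is merely convexity of the absolute value on a bounded interval together with the observation that $\bar{\mu}_i(x)$ is confined to $[\mu^{L}_{i,T}, \mu^{U}_{i,T}]$.
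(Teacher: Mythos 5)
Your proof is correct and follows essentially the same route as the paper's: the containment $\bar{\mu}_i(x) \in [\mu^L_{i,T}, \mu^U_{i,T}]$ from Proposition \ref{prop:mean_bound}, coordinate-wise maximisation at the farther endpoint (which is exactly $\mu^*_{T,i}$), and monotonicity of the $\ell_p$ norm in the componentwise magnitudes, followed by the definitional chaining for $\delta$-robustness. You merely spell out, via convexity of $z \mapsto |\bar{\mu}_i(x^*) - z|$, the endpoint argument that the paper leaves implicit in its phrase ``monotonicity of $\ell_p$ norms along the coordinate directions.''
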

\begin{proof}
By construction, we have that $\bar{\mu}_i(x) \in [ \mu^{L}_{i,T} , \mu^{U}_{i,T}  ]$ for every $x \in T$. 
Hence, by monotonicity of $\ell_p$ norms along the coordinate directions and  by definition of $\bar{\mu}(x)$, it follows that $\sup_{x \in T} || \bar{\mu}(x^*) - \bar{\mu}(x)   || \leq || \bar{\mu}(x^*) - \mu^*_T   ||$.
Thus:
\begin{align*}
    \delta \geq  || \bar{\mu}(x^*) - \mu^*_T   || \geq \sup_{x \in T} || \bar{\mu}(x^*) - \bar{\mu}(x)   || \geq  || \bar{\mu}(x^*) - \bar{\mu}(x)   ||, \quad \text{for} \quad x \in T.
\end{align*}
In particular, $\delta  \geq || \bar{\mu}(x^*) - \bar{\mu}(x)   ||$, which is equivalent to Definition \ref{def:adversarial_regr}.
\end{proof}


%

\section{Branch-and-Bound Algorithm}\label{sec:bnb_alg}
In this section we formulate a branch-and-bound algorithmic scheme that incorporates the lower- and upper-bounding procedures for Gaussian process models introduced in Section \ref{sec:adv_rob_method} 
and prove  its convergence up to any a-priori specified  $\epsilon > 0 $. 
For simplicity of exposition, we restrict the discussion to two-class classification, noting that the
multi-class classification and regression problems follow analoguously
by substituting appropriate bounding procedures. 
The main idea behind branch-and-bound optimisation is to alternate between bounding the function we are interested in optimising in our 
input box $T$ and splitting $T$ into smaller boxes, i.e., \textit{candidate search regions}, on which we compute the bound in the next iteration.
This procedure creates a search tree, in which descending depth implies smaller search regions.
The intuition is that, as we explore the branch-and-bound search tree depth-first, the search regions become smaller, so that the bounds get closer to the true function, and we thus slowly converge to the actual optimum.
By computing lower and upper bounds on the quantity of interest, we are then able to prune our search tree for regions in which optimal values cannot occur.

We now describe the proposed branch-and-bound scheme for the computation of lower and upper bounds for $\piInf{T}$ derived in Section \ref{sec:bin_bound_class}, which is summarised in Algorithm \ref{alg:bnb_sketch}. 
After initialising $\piInfL{T}$ and $\piInfU{T}$ to trivial values and 
the exploration regions stack $\mathbf{R}$ to the singleton $\{T\}$, the main optimisation loop is entered until convergence (lines 2--10).
Among the regions in the stack, we select the region $R$ with the most promising lower bound (line 3).
After bounding posterior mean and variance in $R$ (line 4), we refine its lower bound using Proposition \ref{proposition:bounds_probit} for the probit likelihood and Proposition \ref{prop:bound_logistic} for the logistic sigmoid likelihood (line 5), as well as its upper bound through evaluation of points in $R$ (line 6). If further exploration of $R$ is necessary for convergence (line 7), then the region $R$ is partitioned into two smaller regions $R_1$ and $R_2$, which are added to the  regions stack and inherit $R$'s bound values (line 8).
We perform the split by randomly selecting an index $j \in \{1,\ldots,d\} $ from the input dimensions, and by splitting $R$ at the mid-point along the $j$th dimension.
Finally, the freshly computed bounds local to $R \subseteq T$ are used to update the global bounds for $T$ (line 10). Namely, $\piInfL{T}$ is updated to the smallest value among the $\piInfL{R}$ values for $R \in \mathbf{R}$, while $\piInfU{T}$ is set to the lowest observed value yet explicitly computed in line 6.
\begin{algorithm} 
\caption{Branch and bound for  $\piInf{T}$}\label{alg:bnb_sketch}
\textbf{Input:} Input space subset $T$; error tolerance $\epsilon>0$; latent mean/variance functions $\bar{\mu}(\cdot)$ and $\bar{\Sigma}(\cdot)$. \\
\textbf{Output:} Lower and upper bounds on $\piInf{T}$ with $\piInfU{T} - \piInfL{T} \leq \epsilon$
\begin{algorithmic}[1]
\State \emph{Initialisation:} Stack of regions $\mathbf{R} \gets \{ T \}$; \quad $\piInfL{T} \gets - \infty $; \quad $\piInfU{T} \gets + \infty $
\While{ $\piInfU{T} - \piInfL{T} > \epsilon$}
\State Select region $R \in \mathbf{R}$ with lowest bound $\piInfL{R}$
and delete it from stack
\State Find $[\mu^L_R,\mu^U_R]$ and $[\Sigma^L_R,\Sigma^U_R]$ applying Propositions \ref{prop:mean_bound}, \ref{prop:variance_min} and \ref{prop:variance_max} over $R$
\State Compute $\piInfL{R}$ from $[\mu^L_R,\mu^U_R]$ and $[\Sigma^L_R,\Sigma^U_R]$ using Proposition \ref{proposition:bounds_probit} or \ref{prop:bound_logistic} resp.
\State Find $\piInfU{R}$ by evaluating $\pi(x)$ in a point in $R$
\If{$\piInfU{R} - \piInfL{R} > \epsilon$}
\State Split $R$ into two sub-regions $R_1,R_2$, add them 
 to stack
\State Use $\piInfL{R},\piInfU{R}$ as initial 
  bounds for both sub-regions $R_1,R_2$
\EndIf
\State Update $\piInfL{T}$ and $\piInfU{T}$ with current best 
bounds found 
\EndWhile
\State \textbf{return} $[\piInfL{T},\piInfU{T}]$
\end{algorithmic}
\end{algorithm}

We remark that to derive a branch-and-bound scheme for multi-class classification (respectively, regression) it suffices to replace line $5$ in Algorithm \ref{alg:bnb_sketch} with the bounding methods of Section \ref{sec:multiclass} (respectively, Section \ref{sec:regression_bound}). 

\paragraph{Computation of Under-approximations}
As discussed in Section \ref{sec:outline_of_approach}, in order to obtain valid values for $\piInfU{T}$ and $\piSupL{T}$ it suffices to evaluate the GP posterior predictive distribution in any point of $T$.
However, the closer $\piInfU{T}$ and $\piSupL{T}$ are to $\piInf{T}$ and $\piSup{T}$, respectively, the faster a branch-and bound-algorithm will converge.
By solving the optimisation problems associated to  $\mu^L_T, \mu^U_T$, $\Sigma^L_T $ and $\Sigma^U_T$, we obtain four extrema points in $T$ on which the GP assumes the optimal values for the posterior mean and variance bounds.
As these points belong to $T$ and provide extreme points for the latent function, they are promising candidates for the evaluation of $\piInfU{T}$ and $\piSupL{T}$.
We thus
evaluate the GP predictive posterior distribution on all four extremal points and select the one that yields the best bound. 

\paragraph{Convergence}

By construction it is clear that, if Algorithm  \ref{alg:bnb_sketch}  terminates, the resulting values over- and under-approximate the true value $\piInf{T}$ with a known error $\epsilon > 0$.
We now show, by relying on the theory of convergence for branch-and-bound algorithms, that the loop of lines $2-9$ terminates in a finite number of iterations.
In particular, to prove convergence of a branch-and-bound scheme up to an error $\epsilon>0$ it suffices to show that the two following conditions hold \citep{balakrishnan1991branch}:
\begin{enumerate}
    \item \textit{Consistency Condition}: $ \piInfL{R} \leq \piInf{R} \leq \piInfU{R} \qquad \forall R \subseteq T.$
    \item \textit{Uniform Convergence}: $\forall \epsilon > 0 \; \; \exists \, r > 0 \; \; \text{s.t.} \; \forall R\subseteq T$ with $\text{diam}(R) \leq r \Rightarrow \vert \piInfU{R} - \piInfL{R}  \vert \leq \epsilon$.
\end{enumerate}
Intuitively, the first condition ensures that the computed bounds are consistent across all the subsets of the initial input region $T$.
This is clearly satisfied by construction, 
see Section \ref{sec:bin_bound_class}.
The second condition enforces that the lower and the upper bounds converge uniformly to each other as we reduce the maximum diameter of the branch-and-bound search region to zero. 
In the following theorem we show that the bound based on latent space discretisation has the uniform convergence property and converges in finitely many steps.
Consequently, as the analytical bounds that we compute for the probit and the logistic function are tighter than for discretisation, they will also converge.
For simplicity of exposition, we prove convergence for two-class classification, which also captures regression as a special case; we provide details below for how the result can be generalised to the multi-class case.
%
%
\begin{theorem}
\label{TH:convergenceBrenchAndBOund}
Let $T$ be a box in the input space $\inputspace$.
Consider a two-class classification GP with posterior mean and variance given by $\bar{\mu}(x)$ and $\bar{\Sigma}(x)$. 
Assume that $\mu^L_R$, $\mu^U_R$, ${\Sigma}^L_R $, ${\Sigma}^U_R$ are bounding functions for the posterior mean and variance such that:
\begin{align}\label{eq:mean_and_variance_cond}
    \mu^L_R \to \min_{x\in R} \bar{\mu}(x), \quad \mu^U_R \to \max_{x\in R} \bar{\mu}(x), 
    \quad \Sigma^L_R \to \min_{x\in R} \bar{\Sigma}(x), \quad \Sigma^U_R \to \max_{x\in R} \bar{\Sigma}(x)
\end{align}
whenever 
$\text{diam}(R)\to 0$.
Then, for $\epsilon >0$, there exists a partition of the latent space $\mathcal{S}$ and $\bar{r}>0$  such that, for every $R \subseteq T$ with $\text{diam}(R) < \bar{r}$, it holds that 
\begin{align}\label{eq:cond2prove}
    \vert \piInfU{R} - \piInfL{R} \vert \leq \epsilon. 
\end{align}
\end{theorem}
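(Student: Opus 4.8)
The plan is to establish the uniform-convergence condition \eqref{eq:cond2prove} for the discretisation-based bound of Proposition \ref{Theorem:BOundsGenericMulticlass} (specialised to the scalar two-class setting, where $\pi(x)=\int_\mathbb{R}\sigma(\xi)\mathcal{N}(\xi\mid\bar\mu(x),\bar\Sigma(x))d\xi$); since the analytic probit and logistic bounds of Propositions \ref{proposition:bounds_probit} and \ref{prop:bound_logistic} are tighter, convergence for them follows a fortiori. I would insert $\piInf{R}$ between the two bounds and split the gap as
\begin{align*}
\piInfU{R} - \piInfL{R} = \bigl(\piInfU{R} - \piInf{R}\bigr) + \bigl(\piInf{R} - \piInfL{R}\bigr),
\end{align*}
where both summands are non-negative by the consistency condition. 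The first summand is controlled purely by smoothness of $\pi$: since $\bar\mu,\bar\Sigma$ are continuous and $\bar\Sigma>0$, dominated convergence shows $\pi$ is continuous, hence uniformly continuous on the compact set $T$; because $\piInfU{R}$ is obtained by evaluating $\pi$ at points of $R$ (line 6 of Algorithm \ref{alg:bnb_sketch}), we get $\piInfU{R}-\piInf{R}\leq\mathrm{osc}_R(\pi)$, the oscillation of $\pi$ over $R$, which falls below $\epsilon/3$ once $\mathrm{diam}(R)$ is small.

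For the second summand I would isolate the two sources of over-approximation. Writing $P_l(x)=\int_{S_l}\mathcal{N}(\xi\mid\bar\mu(x),\bar\Sigma(x))d\xi$ and $a_l$ for the cell endpoint at which $\sigma$ attains its minimum on $S_l$, the discretised bound reads $\piInfL{R}=\sum_l\sigma(a_l)\,\underline P_l(R)$, where $\underline P_l(R)\leq\min_{x\in R}P_l(x)$ is the lower bound produced by Lemma \ref{Lemma:Gaussian} from the mean/variance bounds. Introducing the intermediate quantity $D(R)=\sum_l\sigma(a_l)\min_{x\in R}P_l(x)$ gives
\begin{align*}
\piInf{R}-\piInfL{R} = \bigl(\piInf{R}-D(R)\bigr) + \bigl(D(R)-\piInfL{R}\bigr).
\end{align*}
The term $\piInf{R}-D(R)$ is the discretisation error; using $\sum_l P_l(x)=1$ it is bounded by the maximal oscillation $\max_l\sup_{\xi\in S_l}|\sigma(\xi)-\sigma(a_l)|$ of the link function over the cells, plus a min-interchange remainder of size at most $\sum_l\sigma(a_l)\,\mathrm{osc}_R(P_l)$ that vanishes with $\mathrm{diam}(R)$ for a fixed partition. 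The term $D(R)-\piInfL{R}=\sum_l\sigma(a_l)\bigl(\min_{x\in R}P_l(x)-\underline P_l(R)\bigr)$ vanishes as $\mathrm{diam}(R)\to0$ because, by assumption \eqref{eq:mean_and_variance_cond}, the mean/variance bounds converge to the true extrema, and the Gaussian-integral bounds of Lemma \ref{Lemma:Gaussian} depend continuously on those arguments.

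I would then assemble the estimates in the correct order. First choose the latent partition $\mathcal{S}$ fine enough to force $\max_l\sup_{\xi\in S_l}|\sigma(\xi)-\sigma(a_l)|\leq\epsilon/3$; this is the delicate step, since the latent space $\mathbb{R}$ is unbounded, but it is feasible because $\sigma$ is monotone with finite limits at $\pm\infty$, so two semi-infinite tail cells absorb all but $\epsilon/3$ of its variation while the bounded middle is partitioned uniformly using uniform continuity of $\sigma$. With $\mathcal{S}$ (and hence the cell count $M$) fixed, I would pick $\bar r>0$ so that for every $R\subseteq T$ with $\mathrm{diam}(R)<\bar r$ each remaining contribution — $\mathrm{osc}_R(\pi)$, the min-interchange remainder, and $D(R)-\piInfL{R}$ — is at most $\epsilon/3$, which is possible since all involve finitely many functions uniformly continuous on $T$. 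Summing the three $\epsilon/3$ terms yields \eqref{eq:cond2prove}, and termination in finitely many iterations then follows from the standard branch-and-bound argument of \citet{balakrishnan1991branch}. I expect the main obstacle to be exactly this uniform control of the discretisation error over the unbounded latent space, which must be secured before, and independently of, the choice of $\bar r$.
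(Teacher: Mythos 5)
Your proof is correct in structure and rests on the same two pillars as the paper's own argument: (i) a finite partition of the unbounded latent space on which the link function $\sigma$ has uniformly small per-cell oscillation, obtainable because $\sigma$ is monotone and bounded, and chosen \emph{before} $\bar r$; and (ii) the assumed convergence of the mean/variance bounds together with continuity of Gaussian integrals in $(\mu,\Sigma)$, which makes every diameter-dependent term vanish. The differences are organisational. The paper never routes the estimate through $\piInf{R}$ or your intermediate quantity $D(R)$: it takes the partition $a_1=-\infty$, $b_l=\sigma^{-1}\left(\sigma(a_l)+\tfrac{1}{M}\right)$ (the $\sigma^{-1}$-pullback of a uniform grid on the range of $\sigma$, so the per-cell oscillation is exactly $1/M$; your two-tails-plus-compact-middle construction is an interchangeable alternative) and bounds $\piInfU{R}-\piInfL{R}$ in a single chain, comparing the Gaussian at the evaluation point directly with the Gaussians at the per-cell optimisers $(\mu^{*,(l)},\Sigma^{*,(l)})$. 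Since both sets of moments lie in the shrinking box $[\mu^L_R,\mu^U_R]\times[\Sigma^L_R,\Sigma^U_R]$, the entire residue $\sum_l\sigma(a_l)\left\vert\int_{a_l}^{b_l}\left(\mathcal{N}(\xi|\bar{\mu},\bar{\Sigma})-\mathcal{N}(\xi|\mu^{*,(l)},\Sigma^{*,(l)})\right)d\xi\right\vert$ vanishes as $\text{diam}(R)\to 0$, giving the budget $\tfrac{1}{M}+\tfrac{\epsilon}{2}$ with $M=\lceil 2/\epsilon\rceil$; in effect, your ``$\mathrm{osc}_R(\pi)$'' and ``min-interchange'' terms are absorbed into that one residue. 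Your finer split makes the distinct error sources (evaluation oscillation, $\sigma$-discretisation, min-interchange, Lemma~\ref{Lemma:Gaussian} slack) explicit, which is arguably more transparent, but it carries one arithmetic slip: you allot $\epsilon/3$ to the per-cell oscillation of $\sigma$ \emph{and} $\epsilon/3$ to each of the three diameter-dependent contributions, four terms in total, so your final bound is $4\epsilon/3$ rather than $\epsilon$; replacing $\epsilon/3$ by $\epsilon/4$ throughout (or folding the min-interchange remainder into another term) closes the proof.
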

\begin{proof}
Consider an $\epsilon > 0$, and a generic axis-aligned hyper-rectangle $R\subseteq T$ of diameter $\text{diam}(R) \defnotation r > 0$ less than a fixed $\bar{r} > 0$.
We want to find a value for $\bar{r}$ for which the condition in Equation \eqref{eq:cond2prove} is surely met.
We start by observing that $\piInfU{R}$ is defined by computing the predictive posterior distribution on a fixed point of $R$. 
Let $\bar{x} \in R$ be such a point, and define $\bar{\mu} \defnotation \bar{\mu}(\bar{x})$ and  $\bar{\Sigma} \defnotation \bar{\Sigma}(\bar{x})$, then we have that:
\begin{align*}
    \piInfU{R} = \int \sigma(\xi) \mathcal{N}(\xi | \bar{\mu},\bar{\Sigma})d\xi.
\end{align*}
Now consider a generic $M > 0$; we define the discretisation of the latent space $\mathcal{S}_M = \{ [a_l,b_l] \; |  \; l=1\ldots,M \}$ with the following equations:
\begin{align*}
    a_1 &= -\infty \\
    b_l &= \sigma^{-1} \left( \sigma(a_{l}) + \frac{1}{M} \right) \quad l=1,\ldots,M\\
    a_{l+1} &= b_l \qquad \qquad \qquad \qquad \;\;  l=1,\ldots,M,
\end{align*}
that is, we discretise the $y$-axis into $M$ equally distanced intervals and map that discretisation back to the $x$-axis through the link function, $\sigma^{-1}$.
We then have that the left-hand-side of Equation \eqref{eq:cond2prove} can be written explicitly as:
\begin{align}\label{eq:partial_passage1}
 \left\vert   \int \sigma(\xi) \mathcal{N}(\xi | \bar{\mu},\bar{\Sigma} )d\xi -  \sum_{l=1}^{M}  \sigma(a_l) \min_{\substack{  \mu \in [\mu^L_R,\mu^U_R] \\ \Sigma \in [{\Sigma}^L_R ,{\Sigma}^U_R]  }} \int_{a_l}^{b_l} \mathcal{N}(\xi | \mu,\Sigma)d\xi  \right\vert.
\end{align}
%
Let $\mu^{*,(l)}$ and $\Sigma^{*,(l)}$ be the solutions to the $l$th minimisation problems defined inside the summation of the equation above, then we have:
\begin{align}
    &\left\vert   \int \sigma(\xi) \mathcal{N}(\xi | \bar{\mu},\bar{\Sigma})d\xi -  \sum_{l=1}^{M}  \sigma(a_l)  \int_{a_l}^{b_l} \mathcal{N}(\xi | \mu^{*,(l)},\Sigma^{*,(l)}) d\xi  \right\vert \nonumber\\
    = & \left\vert \sum_{l=1}^{M}  \left(  \int_{a_l}^{b_l} \sigma(\xi) \mathcal{N}(\xi | \bar{\mu},\bar{\Sigma})d\xi -   \sigma(a_l)  \int_{a_l}^{b_l}  \mathcal{N}(\xi | \mu^{*,(l)},\Sigma^{*,(l)})d\xi \right)  \right\vert \nonumber\\
    \leq & \left\vert \sum_{l=1}^{M}  \left( \left(\sigma(a_l) + \frac{1}{M}\right) \int_{a_l}^{b_l}  \mathcal{N}(\xi | \bar{\mu},\bar{\Sigma})d\xi -   \sigma(a_l)  \int_{a_l}^{b_l}  \mathcal{N}(\xi | \mu^{*,(l)},\Sigma^{*,(l)})d\xi \right)  \right\vert \nonumber\\
    \leq & \left\vert  \frac{1}{M} \sum_{l=1}^{M}   \int_{a_l}^{b_l}  \mathcal{N}(\xi | \bar{\mu},\bar{\Sigma})d\xi \right\vert + \left\vert \sum_{l=1}^{M}   \sigma(a_l)\int_{a_l}^{b_l} \left( \mathcal{N}(\xi | \bar{\mu},\bar{\Sigma}) -    \mathcal{N}(\xi | \mu^{*,(l)},\Sigma^{*,(l)})  \right) d\xi        \right\vert \nonumber\\
    \leq &  \frac{1}{M} \left\vert    \int_\mathbb{R}  \mathcal{N}(\xi | \bar{\mu},\bar{\Sigma})d\xi \right\vert +  \sum_{l=1}^{M}   \sigma(a_l) \left\vert\int_{a_l}^{b_l} \left( \mathcal{N}(\xi | \bar{\mu},\bar{\Sigma}) -   \mathcal{N}(\xi | \mu^{*,(l)},\Sigma^{*,(l)})  \right) d\xi        \right\vert \nonumber\\
    \leq & \frac{1}{M} + \sum_{l=1}^{M}  \left\vert\int_{a_l}^{b_l} \left( \mathcal{N}(\xi | \bar{\mu},\bar{\Sigma}) -    \mathcal{N}(\xi | \mu^{*,(l)},\Sigma^{*,(l)})  \right) d\xi        \right\vert. \label{eq:svolgimento_teorema}
\end{align}
Now, thanks to the conditions in Equation \eqref{eq:mean_and_variance_cond}, we have that as $r\to 0$ both mean and variance converge to the actual maximum and minimum values in $R$. 
By further observing that $\bar{\mu}$ and $\bar{\Sigma}$ are by construction always inside the (vanishing) interval $[\mu^L_R,\mu^U_R] \times [{\Sigma}^L_R , {\Sigma}^U_R]$, then for continuity of the Gaussian pdf we have that for each $l=1,\ldots,M$: 
\begin{align*}
    \lim_{r\to0}     \left\vert\int_{a_l}^{b_l} \left( \mathcal{N}(\xi | \bar{\mu},\bar{\Sigma}) -  \mathcal{N}(\xi | \mu^{*,(l)},\Sigma^{*,(l)})  \right) d\xi        \right\vert  = 0
\end{align*}
which means that the second term in Equation \eqref{eq:svolgimento_teorema} can be made vanishingly small, in particular less than $\frac{\epsilon}{2}$. 
By selecting $M = \lceil \frac{2}{\epsilon} \rceil$ the theorem statement holds.
\end{proof}

%


We have proved in Propositions \ref{prop:mean_convergence} and \ref{prop:variance_convergence} that the bounds for the mean and variance of Section \ref{sec:mean_and_var_opts} guarantee that the condition in Equation \eqref{eq:mean_and_variance_cond} holds.
For multi-class classification (case $m>2$), Theorem \ref{TH:convergenceBrenchAndBOund} can be generalised by further noticing that the error introduced by Proposition \ref{Prop:MultiCLass} also vanishes. 
For any $\epsilon > 0$, to ensure that convergence holds for the multi-class problem one has to select a number of discretisation boxes of the order of $\frac{1}{\epsilon^m}$. 

\subsection{Time Complexity}\label{sec:complexity_adversarial}

The method we have developed for the computation of adversarial robustness properties of GPs relies on the bounding of the posterior GP statistics, integrated within a branch-and-bound scheme for the iterative refinement of the bound. 
\paragraph{Cost of Bounding}
Consider a kernel $\Sigma$ with bounded kernel decomposition $(\varphi,\psi,U)$, and let $\mathcal{K}$ denote the time complexity for the evaluation of the bounding function $U$. This is dependent on the particular function chosen, and in Appendix \ref{sec:decomposition} we discuss its value for each kernel that we analyse.
The time complexity for the computation of the mean bound is $\mathcal{O}(m \mathcal{K})$, where $m$ is the output dimension of the GP.
The computation of an upper bound on the posterior variance requires solving a convex quadratic problem, whose computational complexity is cubic in the number of input variables \citep{nesterov1994interior}, i.e.\ $\mathcal{O}((d + 2\tss + \tss d)^3 )$, where $d$ is the input dimensionality of the GP and $\tss$ is the number of training points. 
Concerning the computation of the lower bound on the variance, we have to solve $2 \tss  + 1$ linear programming problems, where $\tss$ is the size of the training set. 
This again depends on the number of optimisation variables and can be done in $\mathcal{O}((d + 2\tss + \tss d)^{2.5}\log(d + 2\tss + \tss d)  )$ \citep{cohen2021solving}.
We emphasise that, while computing the mean is straightforward, bound computations for the variance are more involved.
As a result,  adversarial robustness for regression can be obtained much faster in practice than for classification.

\paragraph{Cost of Refinement}
Once the bounds on the mean and variance have been computed, refining them through branch-and-bound up to a desired threshold $\epsilon > 0$ has a worst-case cost exponential in the number of dimensions of $T$.
Furthermore, for multi-class classification, to guarantee convergence we have to discretise the region into a grid of size  $\frac{1}{\epsilon^m}$, where $m > 2$ is the number of classes. 
This adds to the overall time complexity, which in the multi-class case is exponential also with respect to the number of classes.
\section{Experimental Results} \label{sec:experiments_adversarial}
We 
employ our methods to experimentally analyse the robustness of GP models in adversarial settings.
We give results for fours datasets: (i) Synthetic2D, generated by shifting a two-dimensional standard-normal either along the first (class 1) or second dimension (class 2); (ii) the SPAM dataset \citep{Dua:2019}; (iii) a two-class subset of the MNIST dataset \citep{lecun1998mnist} with classes $3$ and $8$ (i.e., MNIST38) and a three-class subset with classes $3$, $5$ and $8$ (i.e., MNIST358); (iv) a two-class subset of FashionMNIST (F-MNIST) \citep{xiao2017} with classes ``t-shirt/top'' and ``shirt'' (which we refer to as F-MNIST-TS) and a three-class subset with classes ``t-shirt/top'', ``shirt'' and ``pullover'' (F-MNIST-TSP).
\paragraph{Training}
We learn classification GP models using a squared-exponential kernel and zero mean prior and select the hyper-parameters by means of MLE  \citep{williams2006gaussian}. 
For the Synthetic2D dataset we learn the GP over $1000$ training samples and test it over $200$ test samples, obtaining an accuracy of $\approx 98\%$.
For the SPAM dataset we first standardise the data to zero mean and unit variance.
Then, we perform feature-reduction by iteratively training an $\ell_1$-penalised logistic regression classifier and discarding the least relevant features, up until test set accuracy starts to diminish.
This procedure leaves us with 11 features out of the initial 57.
We then train a two-class classification GP over the resulting reduced feature vector.
The GP thus computed achieves a test set accuracy of around $93\%$.

For MNIST and F-MNIST we first sub-sample the images to $14 \times 14$ pixels\footnote{This reduces the number of hyper-parameters that need to be estimated by MLE and increases the numerical stability of the GP, while achieving comparable accuracy.}, and use similar learning settings as for the SPAM dataset, with $1000$ training samples randomly picked from the two datasets.
We achieve a test set accuracy of around $98\%$ for MNIST38 and $90\%$ for  F-MNIST-TS.
Finally, for the two multi-class problems we use the softmax likelihood function and training setting similar to the two-class classification problems, obtaining a test set accuracy of around $93\%$ for MNIST358 and $85\%$ for F-MNIST-TSP.

We rely on the GPML Matlab toolbox for the training of two-class GPs \citep{rasmussen2010gaussian} and on the GPStuff toolbox for the training of multi-class GPs and sparse GP models \citep{vanhatalo2013gpstuff}.

\paragraph{Parameter Selection for the Analysis}
We compute adversarial robustness in neighbourhoods of the form $T = [x^* - \gamma, x^* + \gamma]$ around a given point $x^*$ and for a range of $\gamma > 0$.
Unless otherwise stated, we run the branch-and-bound algorithm until convergence up to an error threshold $\epsilon = 0.01$.
For MNIST38 and F-MNIST-TS we perform feature-level analysis for scalability reasons, similarly to \cite{ruan2018reachability}.
Namely, we restrict our methods to salient patches of each image only, as detected by SIFT \citep{lowe2004distinctive}.
We note that  any other image feature extraction method can be used instead. 

In the remainder of this section, we discuss results concerning three types of analyses.
First (Section \ref{subsec:safety}), on four samples selected from the above datasets, we provide empirical evidence illustrating the advantages of computing guarantees (as those provided by our branch-and-bound method) versus evaluating model robustness using gradient-based attacks.
Next we consider the robustness of GPs learned by using a selection of latent-variable methods (Section \ref{subsec:robustness}) and sparse approximation techniques (Section \ref{subsec:sparse}), discussing the adversarial robustness properties of these state-of-the-art approximate inference methods. 
Finally, we show how the techniques developed here for adversarial robustness can be applied to perform interpretability analysis of GP predictions (Section \ref{subsec:interpretability}).

\begin{figure}[h!]
	\centering
	{\hspace*{0cm}  \includegraphics[width = 0.24\textwidth]{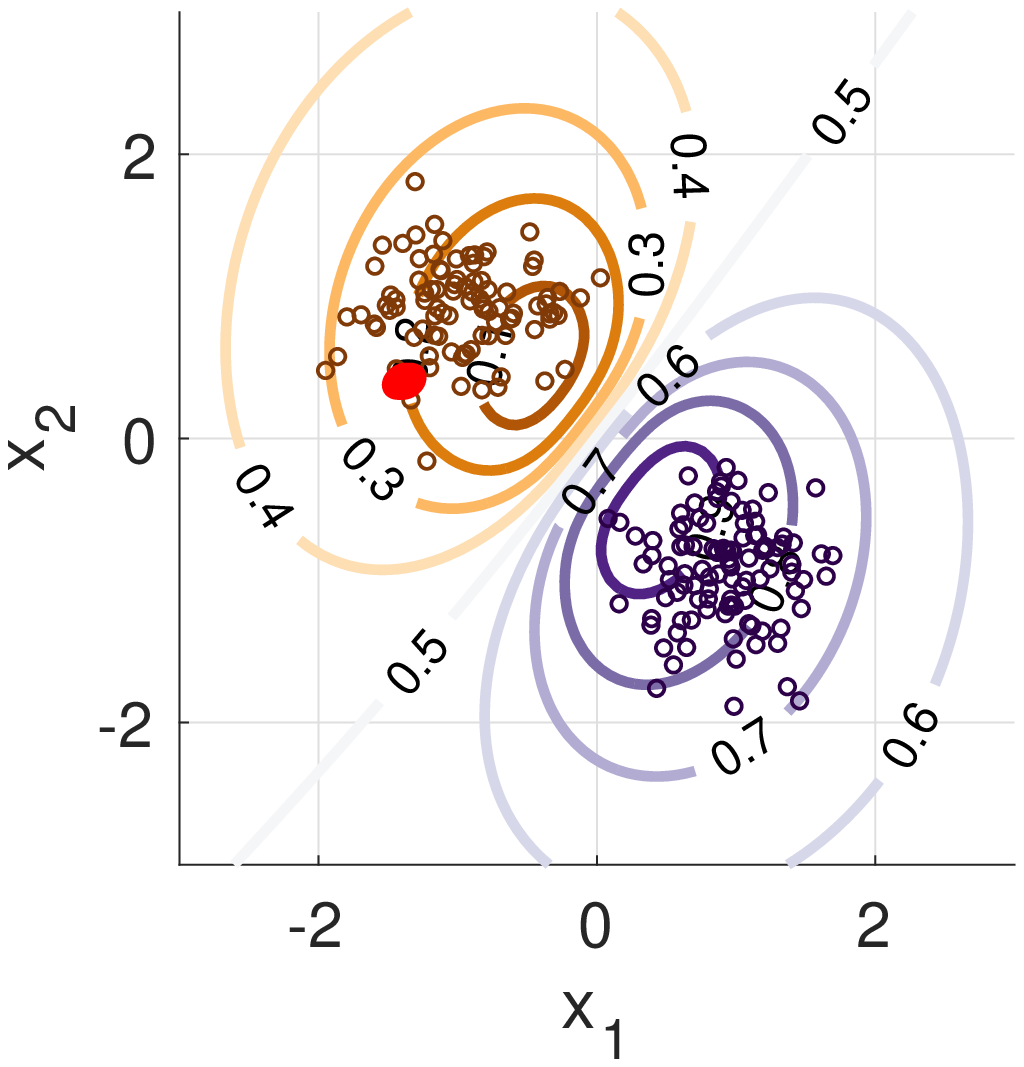}}
	{\hspace*{0cm}  \includegraphics[width = 0.24\textwidth]{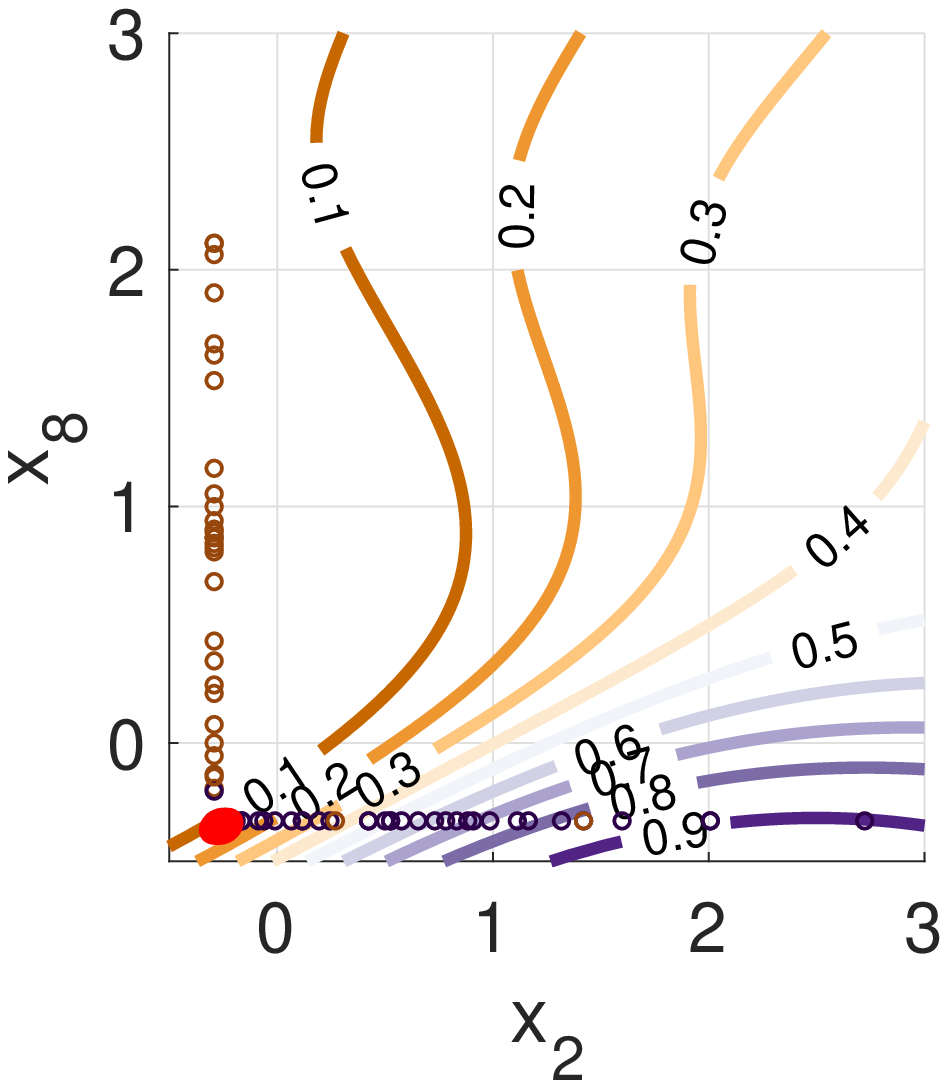}}
	{\hspace*{0.5cm} \includegraphics[width = 0.2\textwidth]{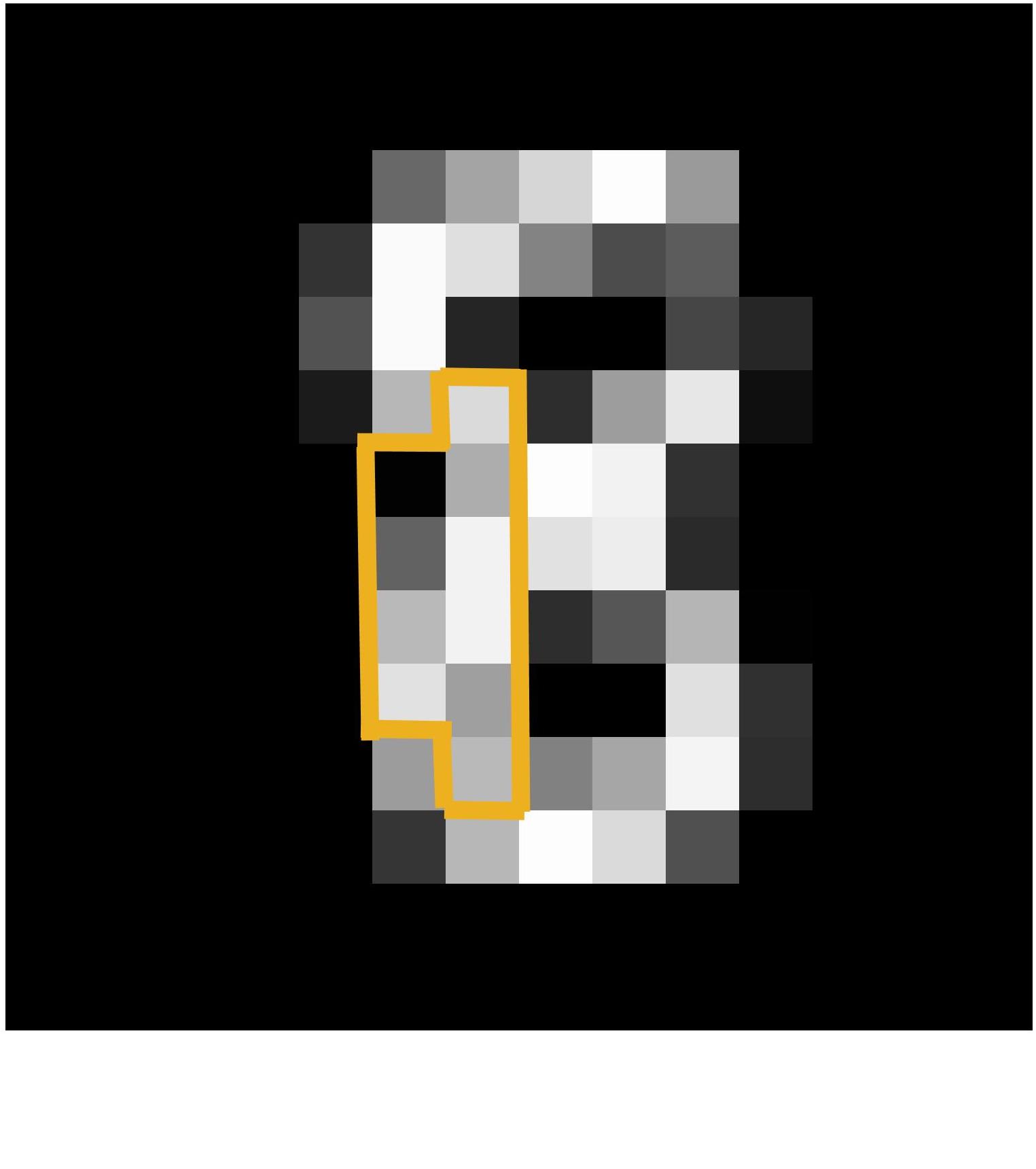}}
	{\hspace*{0.5cm}\includegraphics[width = 0.2\textwidth]{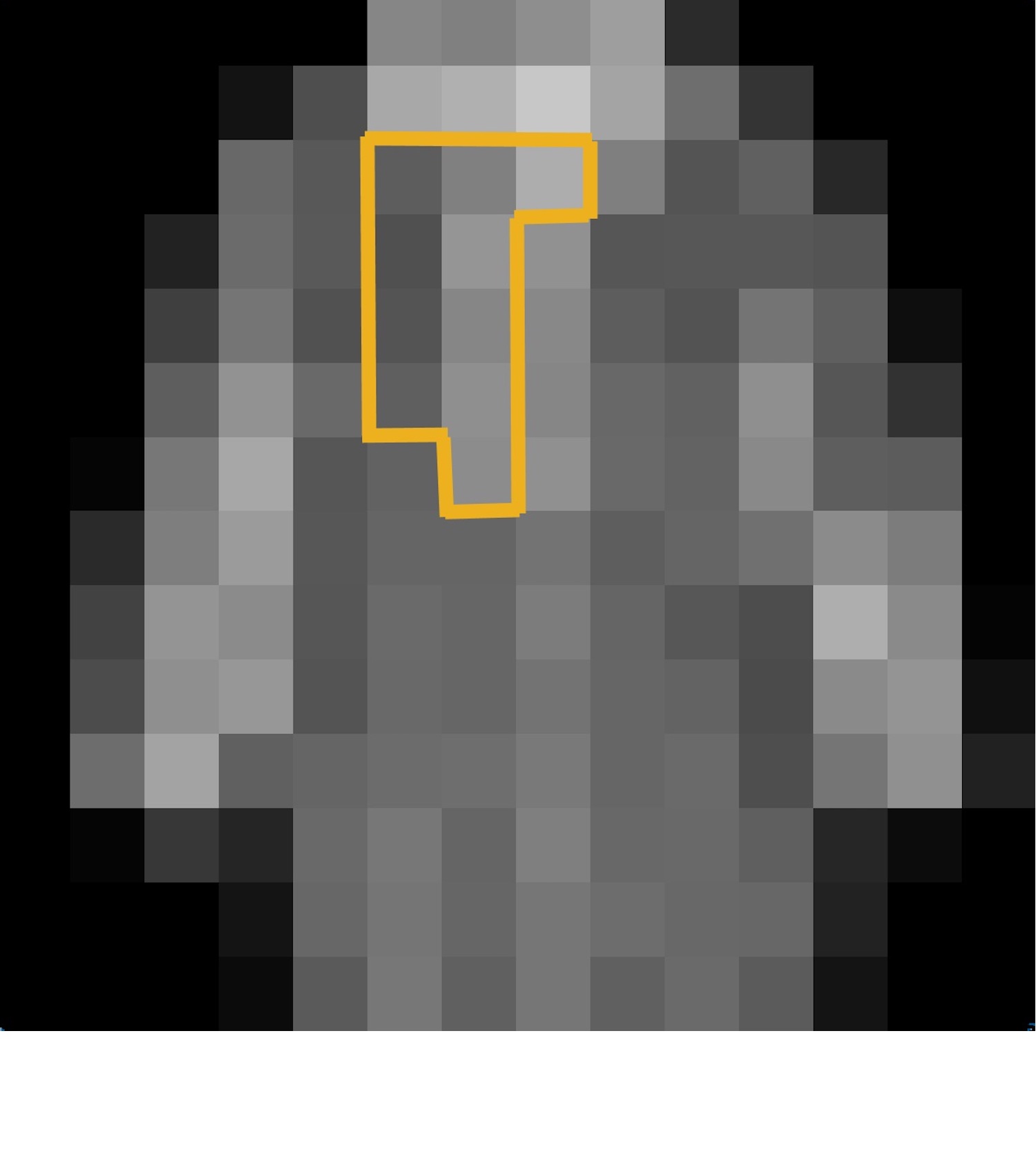}}\\
	{
	\hspace*{0cm} \includegraphics[width = 0.23\textwidth]{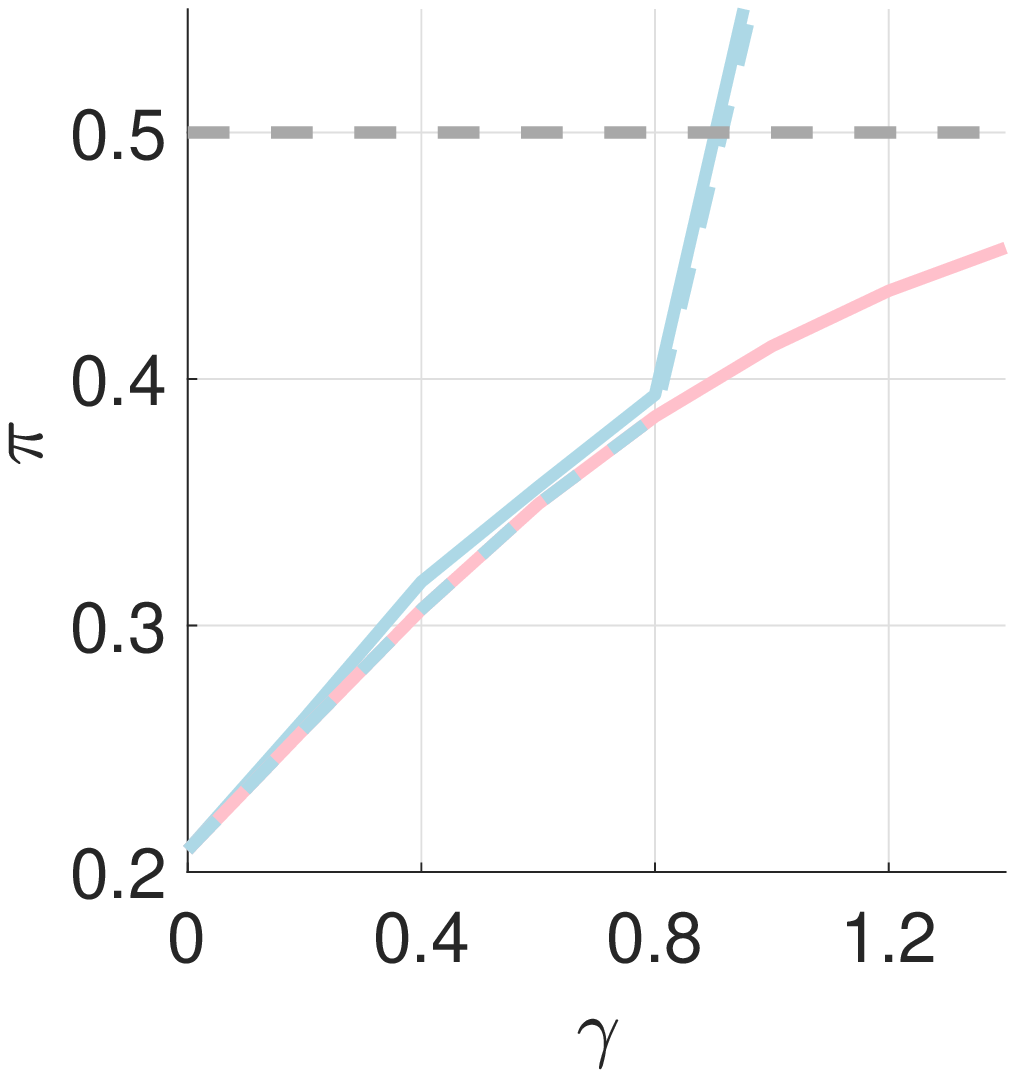}}
	\hspace*{0cm} {\includegraphics[width = 0.23\textwidth]{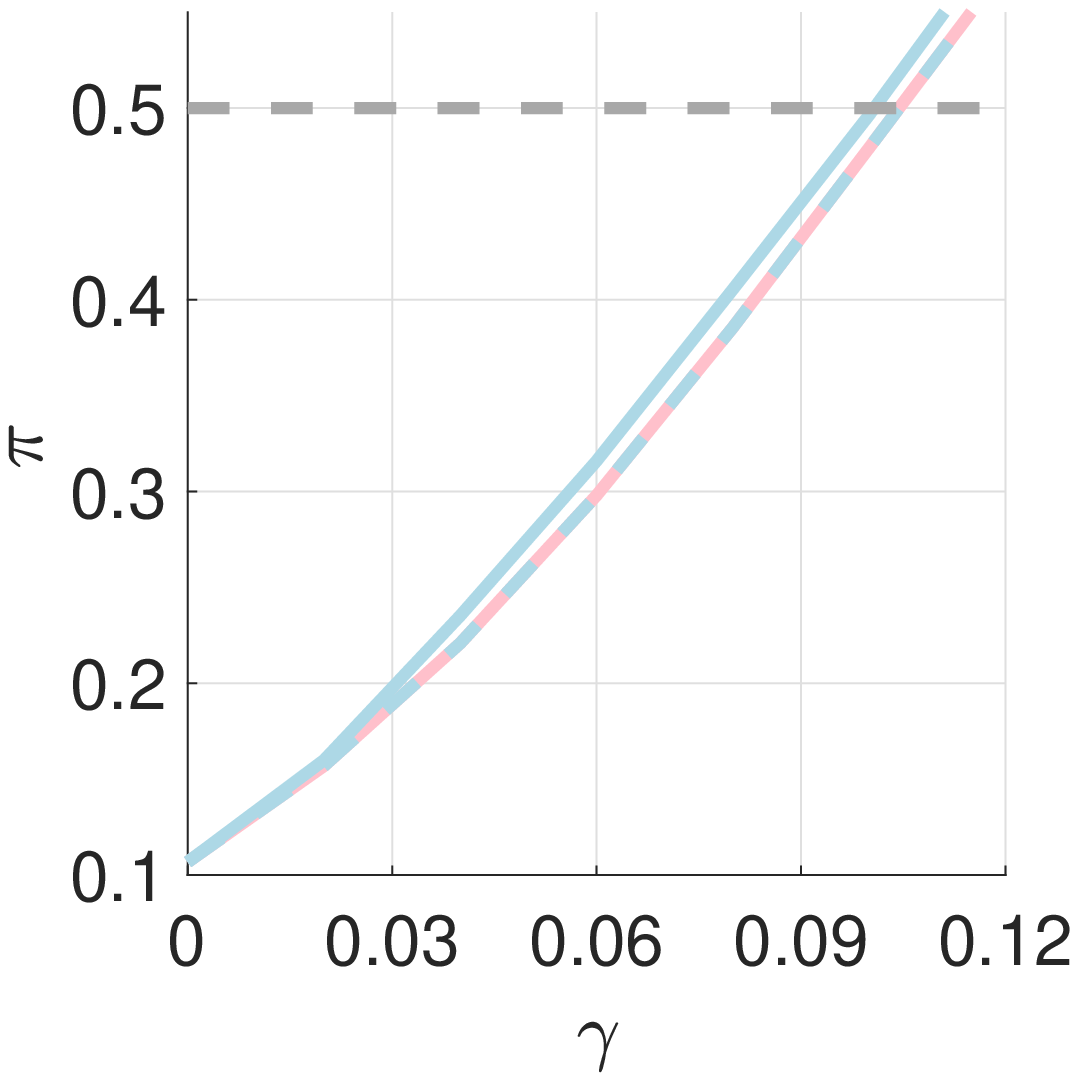}} 
	\hspace*{0cm} {\includegraphics[width = 0.23\textwidth]{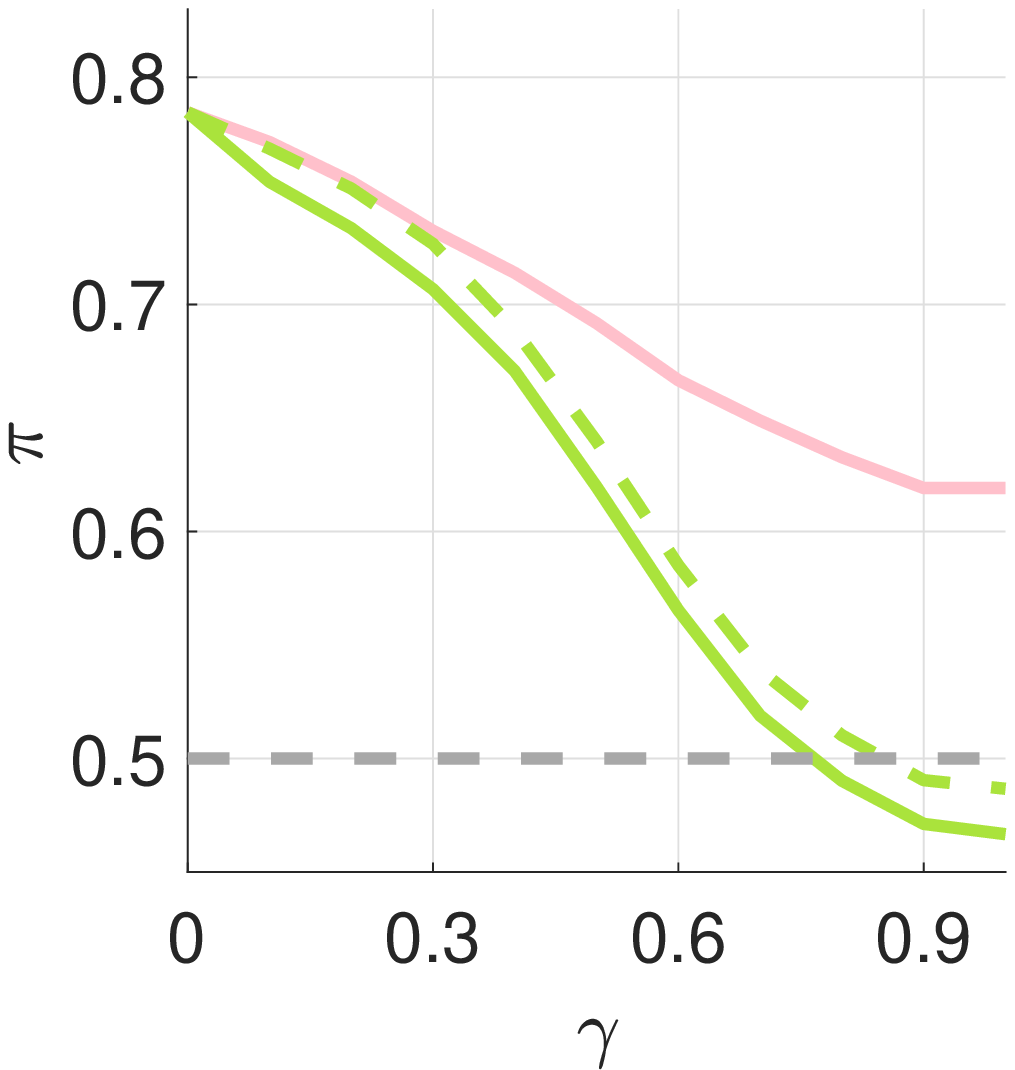}}
	\hspace*{0cm} {\includegraphics[width = 0.23\textwidth]{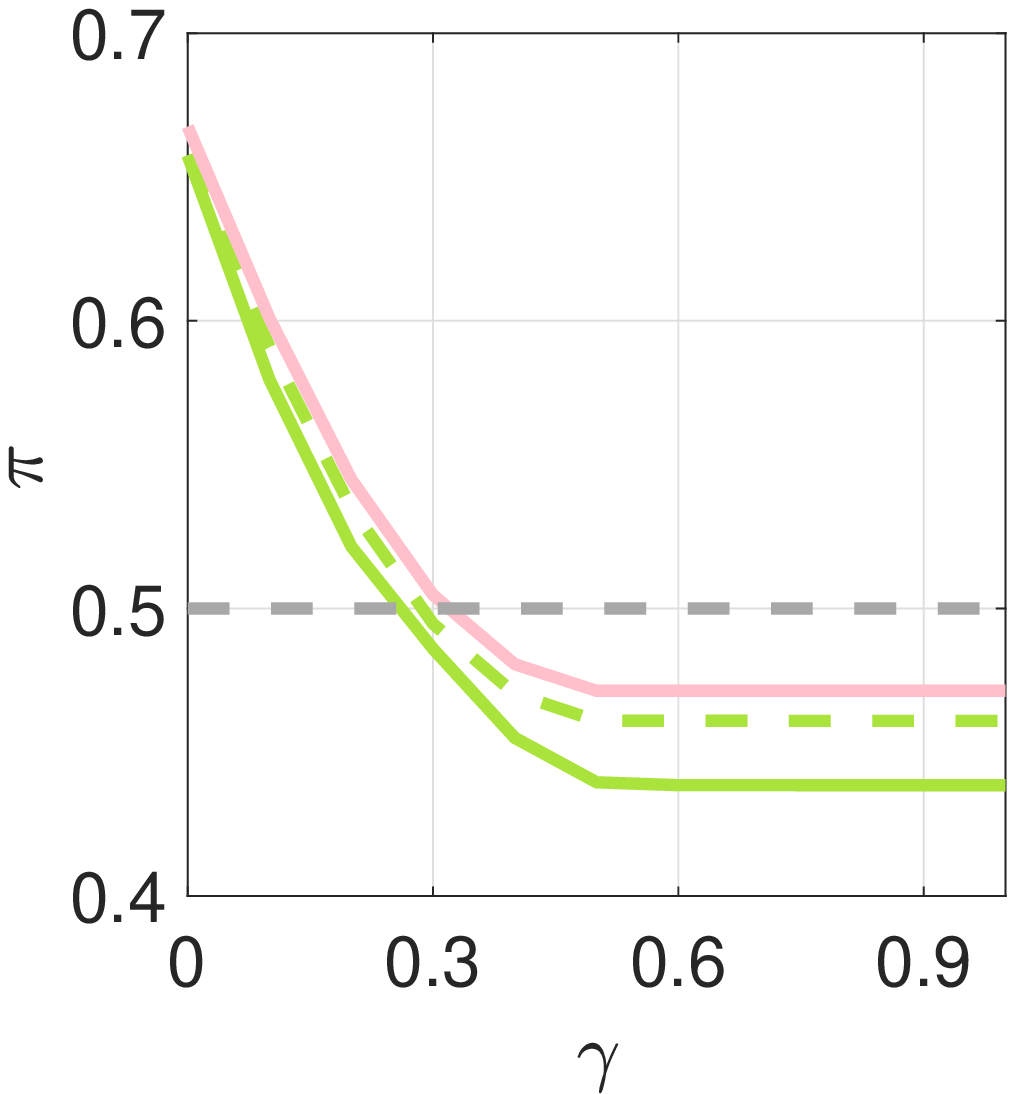}}\\
	{\hspace*{0cm}  \includegraphics[width = 0.7\textwidth]{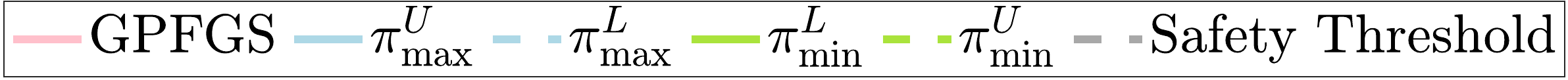}} 
	\caption{\textbf{First row}: Contour plot and test points for Synthetic2D; projected contour plot and test points for 2 dimensions of SPAM (dimensions $2$ and $8$ as selected by $\ell_1$-penalised logistic regression); sample of 8 from MNIST38 along with 10 pixels selected by SIFT; sample of shirt from F-MNIST-TS along with the 10 pixels selected by SIFT. \textbf{Second row}: Safety analysis for the four selected test points. Shown are the upper and lower bounds on $\piSup{T}$ (solid and dashed blue curves), $\piInf{T}$ (solid and dashed green curves), and the GPFGS adversarial attack (pink curve). 
	}
	\label{fig:safety1}
\end{figure}

\subsection{Local Adversarial Safety}
\label{subsec:safety}
We study local adversarial safety for four points selected from the Synthethic2D, SPAM, MNIST38 and F-MNIST-TS datasets and summarise the results in Figure \ref{fig:safety1}. 
To this end, we set $T \subseteq \mathbb{R}^d$ to be a $\ell_\infty$ $\gamma-$ball around the chosen test point 
and iteratively increase $\gamma$ ($x$-axis in the second row plots), checking whether there are adversarial examples in $T$.
Namely, if the point is originally assigned to class 1 (respectively class 2) we check whether the minimum classification probability in $T$ is below the decision boundary threshold, that is, if $\piInf{T} < 0.5$ (resp. $\piSup{T} > 0.5$). 
We compare the values provided by our method (blue solid and dashed line for class 2, green solid and dashed line for class 1) with GPFGS (a gradient-based method for attacking GPs mean prediction \citep{grosse2017wrong}, pink curve in the plot).

Naturally, as $\gamma$ increases, the neighbourhood region $T$ becomes larger, hence the confidence for the initial class can decrease.
Interestingly, while our method succeeds in finding adversarial examples in all cases shown (i.e.\ both the lower and upper bound on the computed quantity cross the decision boundary), the heuristic attack fails to find adversarial examples on the Synthetic2D and MNIST38 data points, and it underestimates the effect of the worst-case perturbation for F-MNIST-TS (i.e., even though the pink line crosses the decision boundary, it remains above the dashed green line).
This occurs because GPFGS builds on linear approximations of the GP predictive posterior, hence failing to find solutions when its non-linearities are significant.
In particular, near the point selected for the Synthetic2D dataset (red dot in the contour plot) the gradient of the GP points away from the decision boundary.
Therefore, no matter what value $\gamma$ takes, GPFGS will not increase above $0.5$ in this case (pink line of the bottom-left plot). 
On the other hand, for the SPAM dataset, the GP model is locally linear around the selected test point (red dot in top right contour plot).
Our observations are further confirmed by the MNIST38 and F-MNIST-TS examples that yield results similar to those for Synthetic2D.

\subsection{Local Adversarial Robustness}
\label{subsec:robustness}
We now evaluate the empirical distribution of the adversarial robustness of the trained GP models.
To this end, we introduce a quantitative measure of robustness analogous to that used by \cite{ruan2018reachability}. More specifically, we consider the difference between the maximum and minimum prediction probability in the region $T$, $\delta =  \pi^U_{\max}(T) - \pi^L_{\min}(T) $, which utilises the computed quantities.
We evaluate the moments of the empirical distribution of values of $\delta$ on $50$ randomly selected test points for each of the four datasets considered.
Note that a smaller value of $\delta$ implies a more robust model. 
Furthermore, we analyse how the GP model robustness is affected by the training procedure used. 
To achieve this, we compare the robustness obtained when using either the Laplace or the Expectation Propagation (EP) \citep{williams2006gaussian} posterior approximations technique, and
investigate the influence of the number of marginal likelihood evaluations (epochs) performed during MLE hyper-parameter optimisation on robustness. 
\begin{figure}[h]
	\centering
	{\includegraphics[width = 0.45\textwidth]{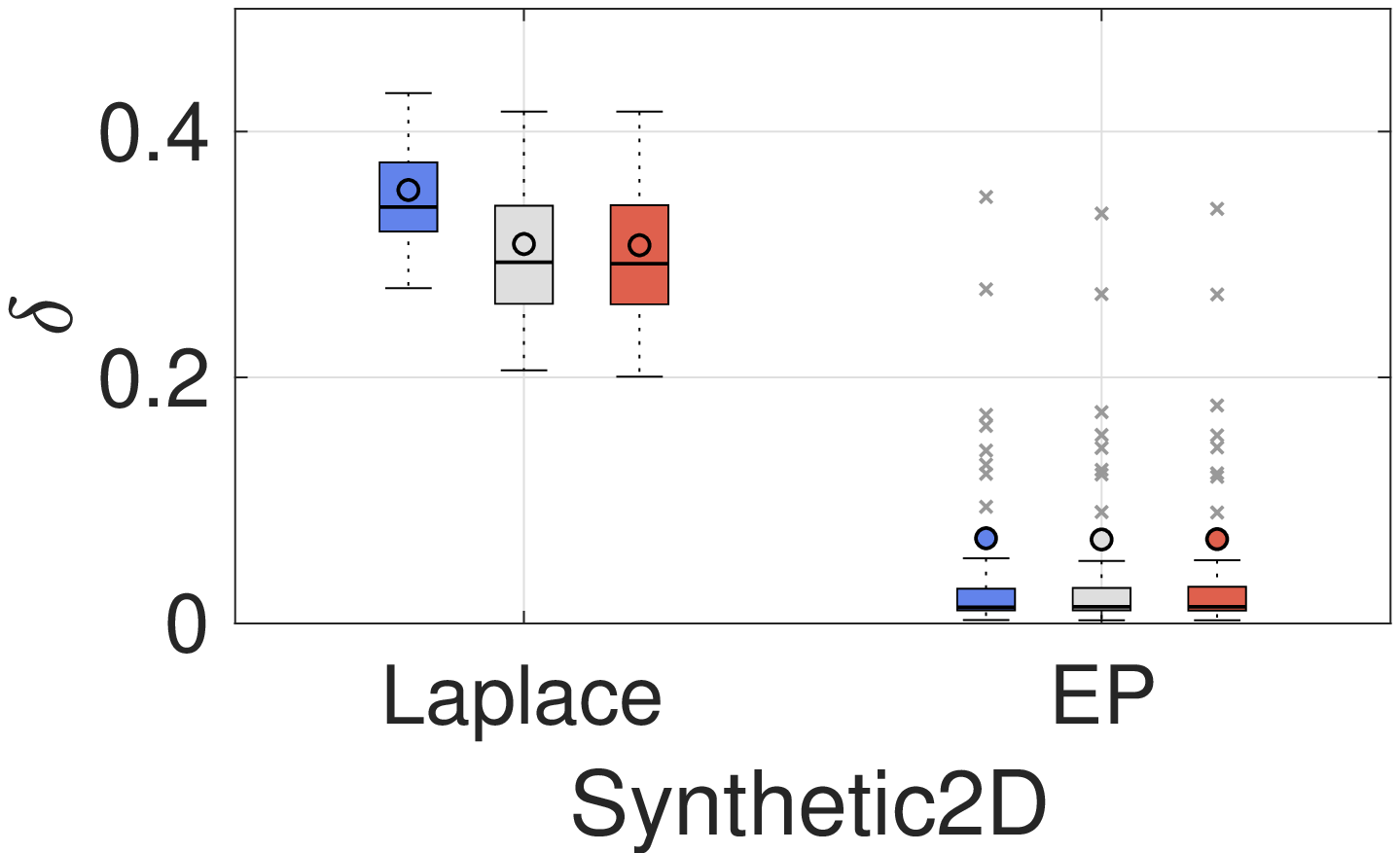}
	\includegraphics[ width = 0.45\textwidth]{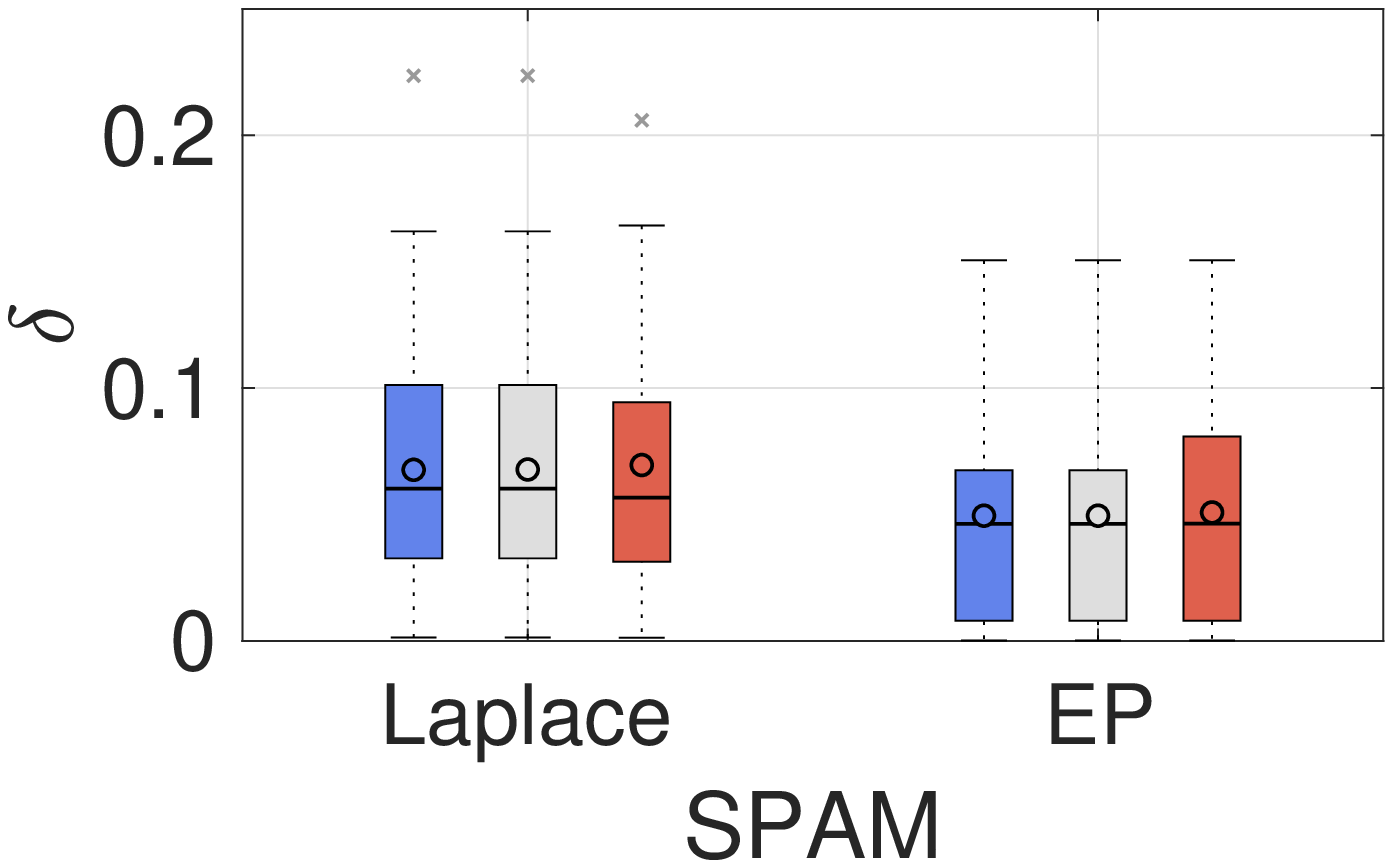} \\
	\includegraphics[ width = 0.45\textwidth]{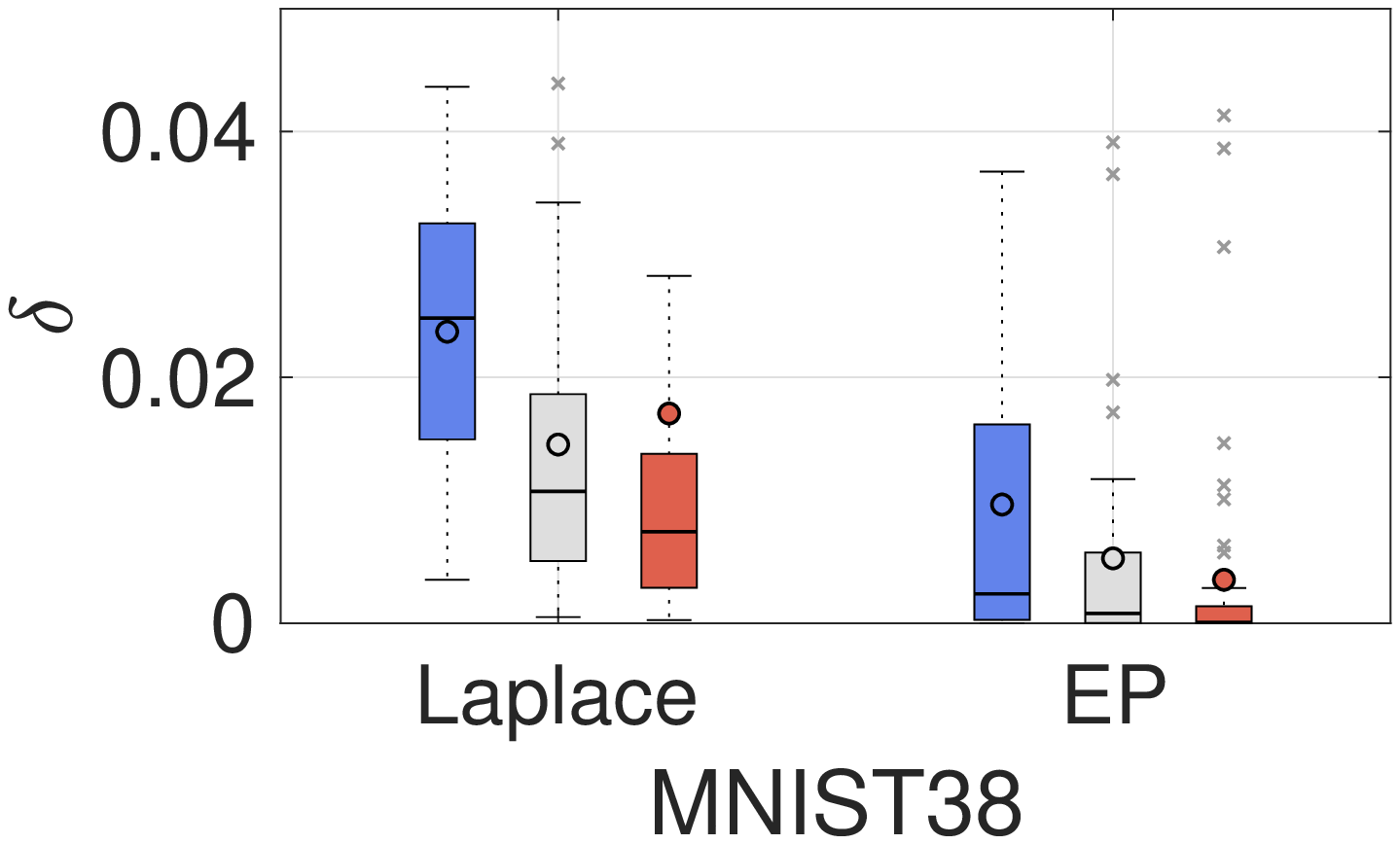}
	\includegraphics[ width = 0.45\textwidth]{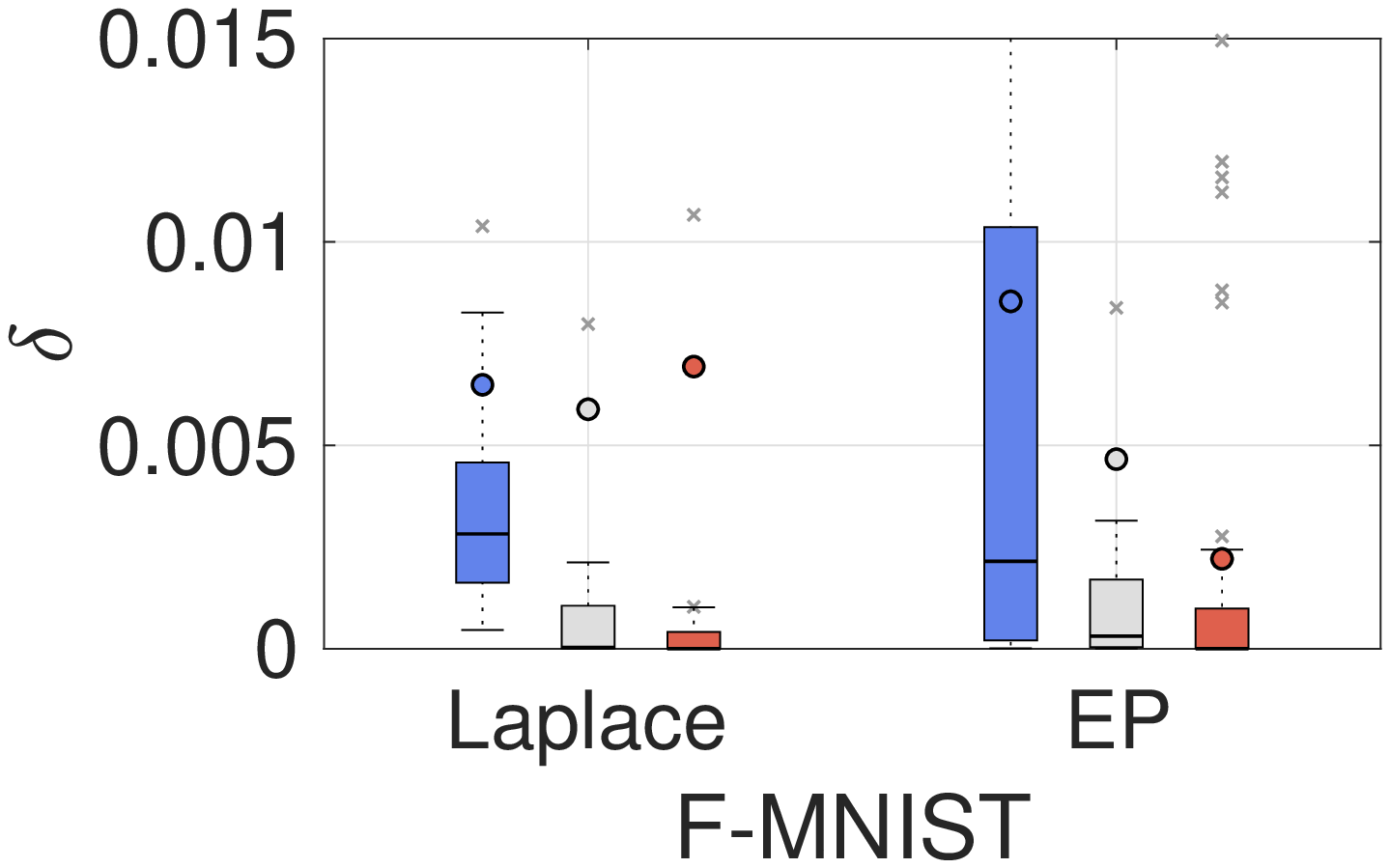}} \\
	 {\hspace*{0.cm} \includegraphics[width = 0.5\textwidth]{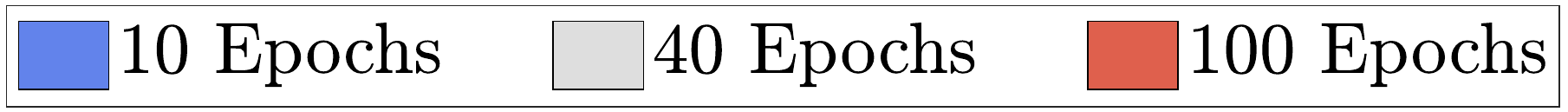}}
	 
	\caption{Boxplots for the distribution of robustness on the four datasets studied, 
	comparing Laplace and EP approximation for $\gamma = 0.1$. 
	}
	\label{fig:Robustness}
\end{figure}

Results for this analysis are depicted in Figure \ref{fig:Robustness}, 
for $10$, $40$ and $100$ hyper-parameter optimisation epochs. 
As explained above, the analyses for the MNIST38 and F-MNIST-TS samples are restricted to the most influential SIFT features only, 
and thus $\delta$ values for them are 
smaller in magnitude than 
for the other two datasets (for which all the input variables are simultaneously changed).
Interestingly, 
this empirical analysis demonstrates that
GPs trained with EP are consistently more robust than those trained using Laplace.
In fact, for both Synthetic2D and MNIST38, EP yields a model about 5 times more robust than Laplace.
For SPAM, the difference in robustness is the least pronounced.
While Laplace approximation works by local approximations, EP calibrates mean and variance estimation by a global approach, which generally results in a more accurate approximation \citep{williams2006gaussian}.
%
These results quantify and confirm for GPs that the posterior distribution is robust to adversarial attacks in the limit, as theorethically analysed by \cite{carbone2020robustness} in the case of over-parameterised Bayesian neural networks, of which GPs are a particular case \citep{de2018gaussian}.
We observe that the values of $\delta$ decrease as the number of training epochs increases, and thus robustness improves with the increase in the number of training epochs.
More training in Bayesian settings may imply better calibration of the latent mean and variance function to the observed data. 

\subsection{Robustness of Sparse Approximations}\label{subsec:sparse}
In Section \ref{subsec:robustness} we have empirically observed that a more refined training procedure may lead to more robust GP models. In standard GP settings it is infeasible to work with large-scale datasets that approximate the exact data manifold, as inference scales with the cube of the number of data (and storage with its square) \citep{bauer2016understanding}.
For large-scale datasets, sparse GPs \citep{bauer2016understanding} are customarily used for approximating the GP posterior distribution.
While sparse GP approximations are usually evaluated in terms of mean and uncertainty calibration, here we consider  adversarial robustness of GP sparse approximation techniques.

As inference equations for sparse approximation can be generally cast in the form of Equations \eqref{eq:mean_posterior}--\eqref{eq:var_posterior}, our methodology can be applied directly, modulo the definition of the matrix $S$, vector $\mathbf{t}$ and the vector of inducing points $\mathbf{u}$ (that is, the set of eventually synthetic points on which training is performed). 
We rely on the EP latent method and compare the results for FIC, DTC, and VAR sparse approximation methods \citep{quinonero2005unifying} on the MNIST38 and F-MNIST-TS datasets.
We vary the number of training points from $250$ to $7500$ and the number of inducing points (selected at random from the training points) from $100$ to $500$.
For each of the resulting GPs we analyse the empirical distribution of $\delta$-robustness on 50 randomly sampled test points with respect to their most relevant features (as detected by SIFT) with $\gamma = 0.15$.

\begin{figure}
    \centering
    \includegraphics[width=0.49\columnwidth]{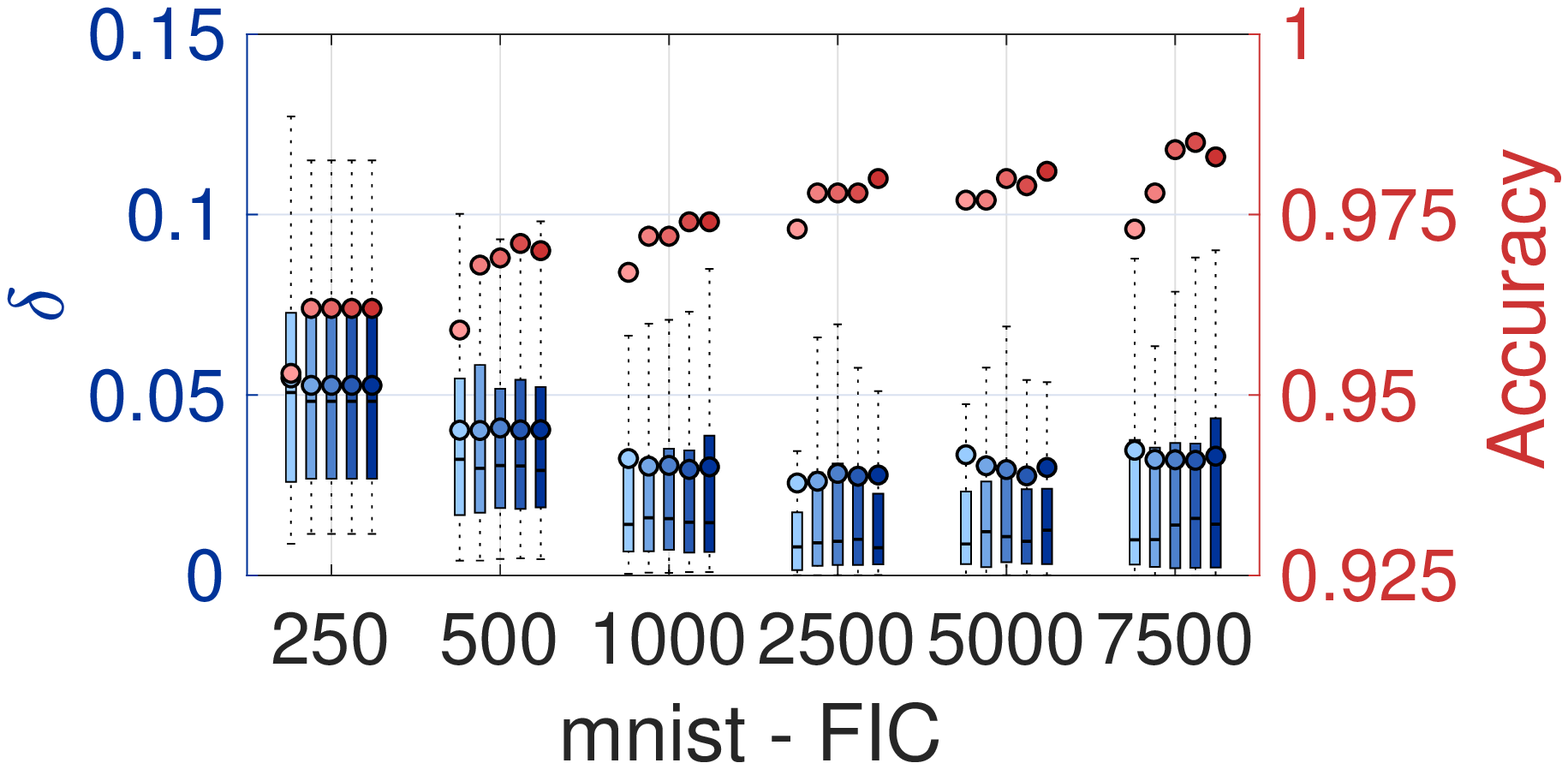}
     \includegraphics[width=0.49\columnwidth]{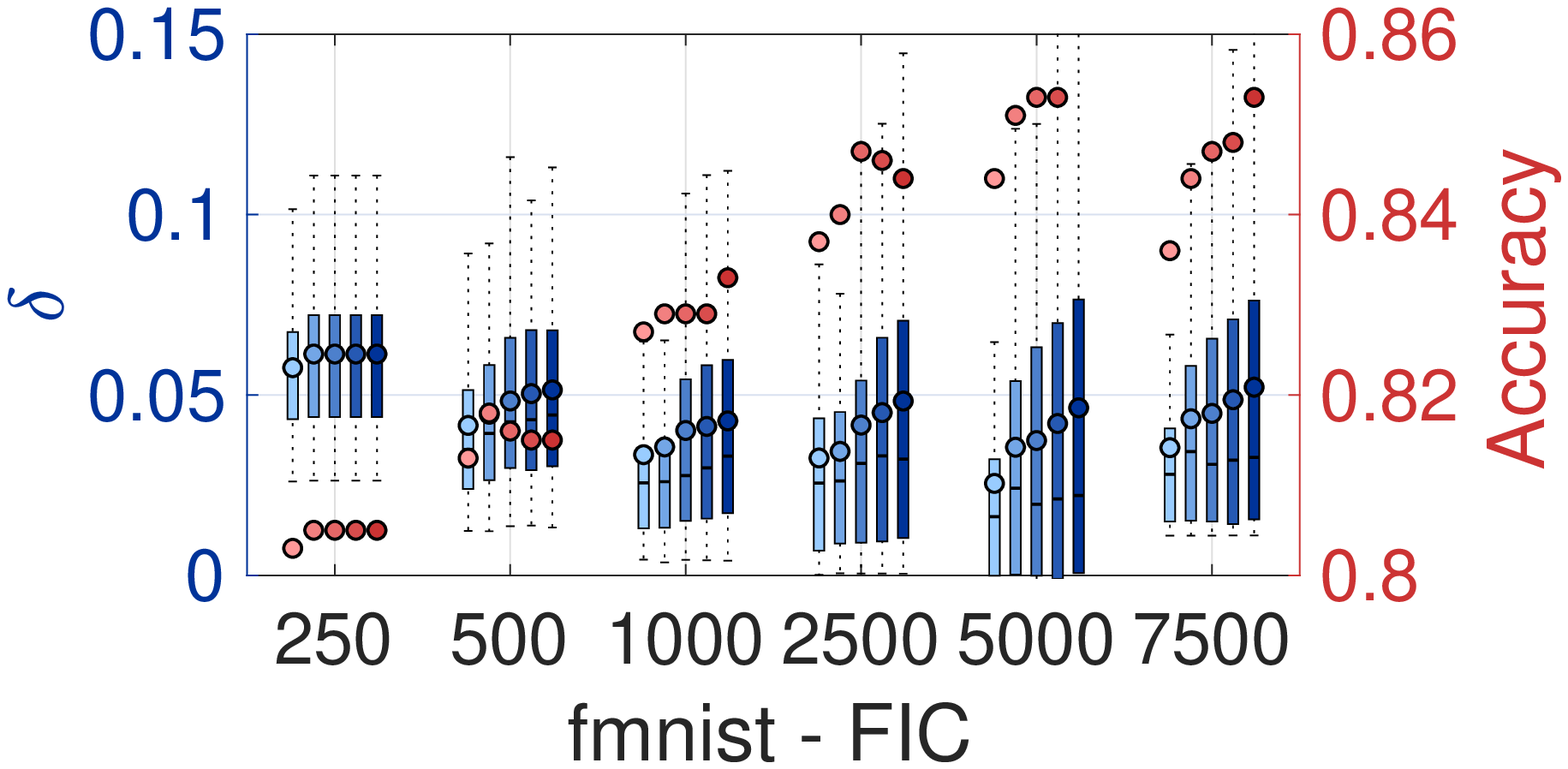} \\
     \includegraphics[width=0.49\columnwidth]{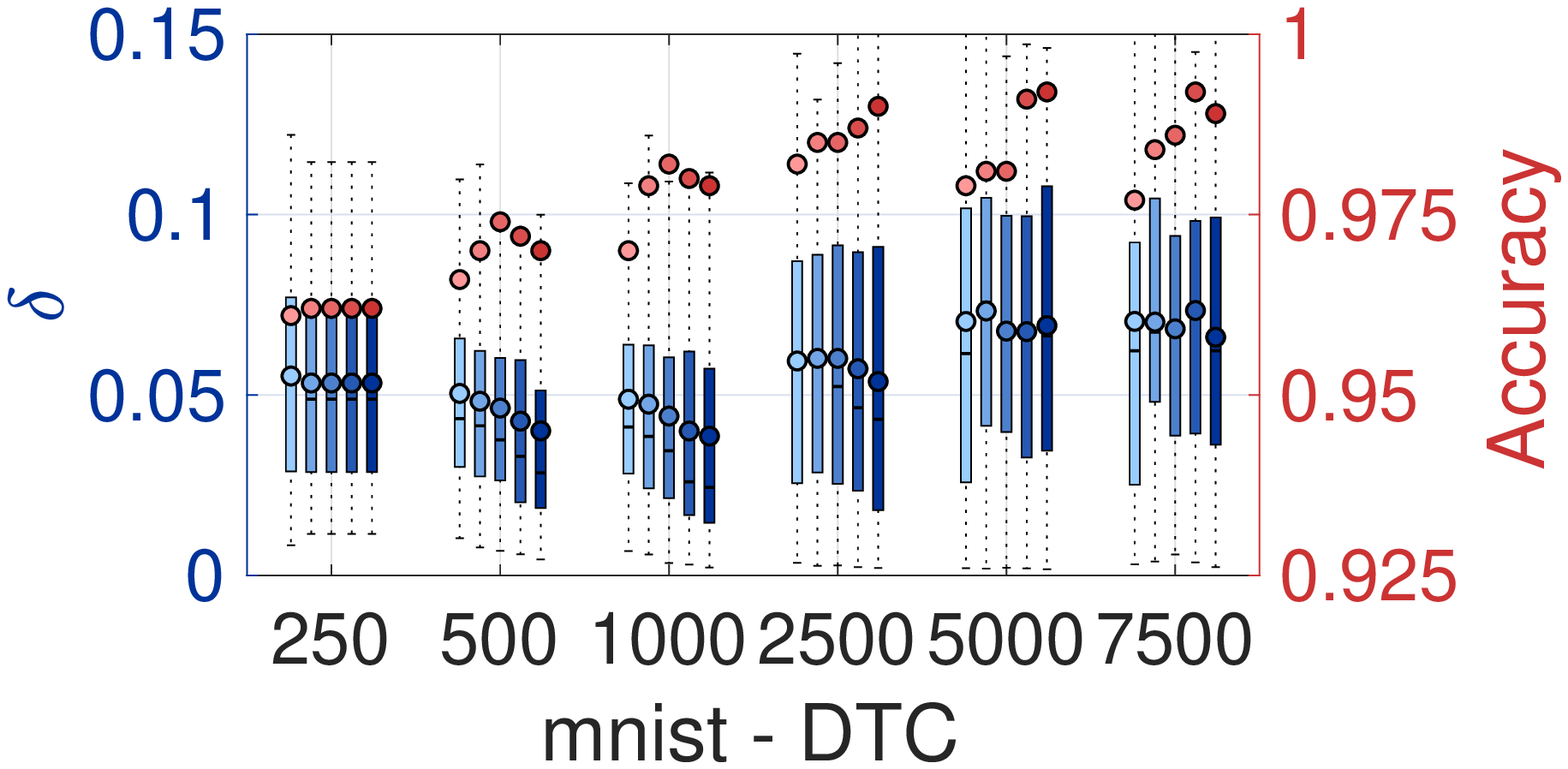}
     \includegraphics[width=0.49\columnwidth]{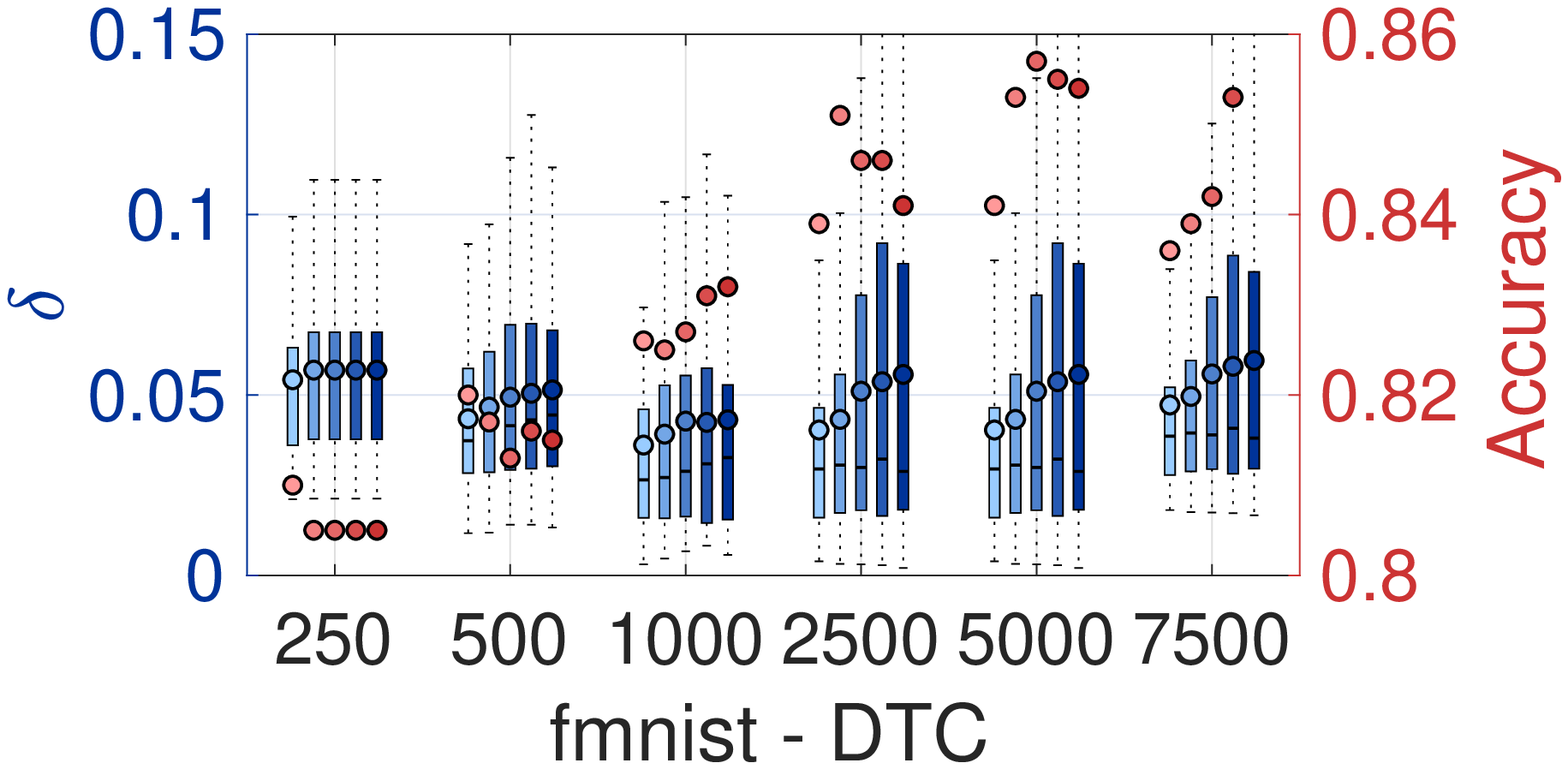} \\
     \includegraphics[width=0.49\columnwidth]{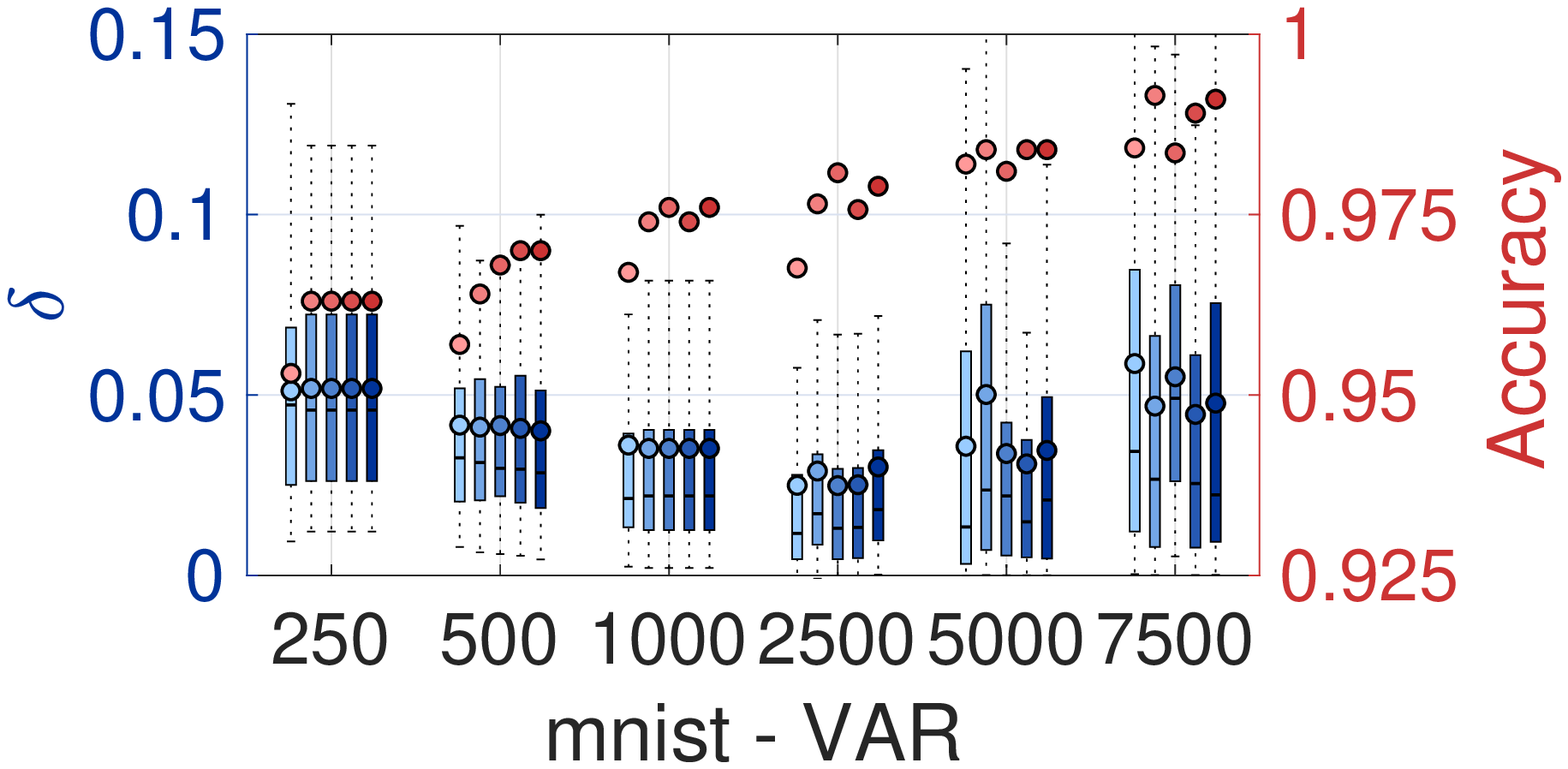}
     \includegraphics[width=0.49\columnwidth]{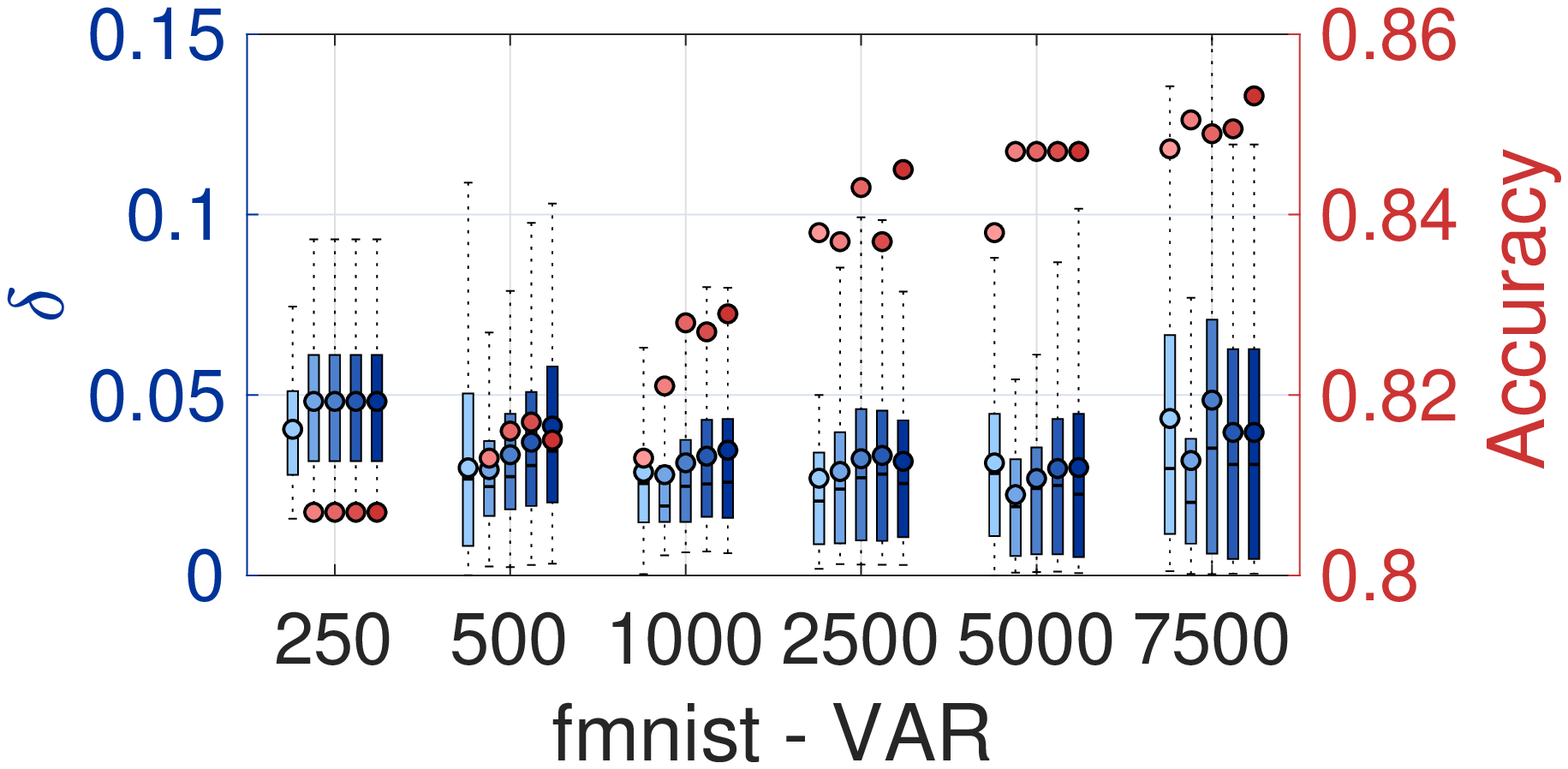}
    \caption{Empirical distribution of $\delta$-robustness for $\gamma = 0.15$. \textbf{First Row:} FIC sparse approximation. \textbf{Second Row:} DTC sparse approximation. \textbf{Third Row:} VAR sparse approximation.}
    \label{fig:sparse_gps}
\end{figure}

Results for this analysis are plotted in Figure \ref{fig:sparse_gps}, where boxplots are grouped according to the number of training points, with each boxplot in each group representing an increase of 100 inducing points 
(starting from 100).
The test set accuracy of each GP, as estimated over 1000 test samples, is plotted in the same figure on a separate $y$-axis (red dots).
In agreement with the literature on sparse GPs \citep{bauer2016understanding}, we observe that an increasing number of training and/or inducing points generally leads to more accurate models.
Among the two datasets analysed here, this aspect is more pronounced on F-MNIST ($\approx 6 \%$ increase), which poses a more complex classification task than MNIST ($\approx 2\%$ increase), so that the GP further benefits from more information from data.

The robustness trends instead depend on the approximation techniques used.
For FIC and VAR, we generally obtain that more training input points corresponds to an increase in the model robustness (i.e., lower value of $\delta$). More specifically, sparse GPs successfully take into account the information from a larger pool of training samples in refining its posterior estimation.
Unfortunately, for the VAR models the EP computations become numerically unstable after $2500$ training samples and we have to increase the data jitter (which results in a widening of the boxplot and reduced robustness).
For DTC, instead, we observe that the robustness slightly worsens in the case of MNIST and remains stable for F-MNIST.
Finally, we remark that the number of inducing points has little effect on the overall robustness when compared to the number of training points used.

\subsection{Interpretability Analysis}
\label{subsec:interpretability}
%
%
Adversarial robustness and model prediction interpretability are closely linked together \citep{darwiche2020three}.
To demonstrate this, we can utilise the bounds we compute on $\piInf{T}$ and $\piSup{T}$ to formulate an interpretability metric similar to that defined for linear classifiers in \citep{ribeiro2016should} and implemented in a black-box tool called LIME.
In particular, we consider a test point $x^*$ and the one-sided input box $T^i_{\gamma}(x^*) = [x^*, x^* + \gamma e_i]$ (where $e_i$ denotes the vector of $0$s except for $1$ at dimension $i$).
We compute how much the maximum and minimum values can change over the one-sided intervals in both directions:
\begin{align*}
    \mathbf{\Delta}^i_{\gamma}(x^*) = \left(\piSup{T^i_{\gamma}(x^*)} -\piSup{T^i_{-\gamma}(x^*)}\right)
    + \left(\piInf{T^i_{\gamma}(x^*)}-\piInf{T^i_{-\gamma}(x^*)}\right). 
\end{align*}
If increasing the value of dimension $i$ makes the model favour assigning lower class probabilities, we would expect this value to be negative and vice versa. 
Intuitively, this provides a non-linear generalisation of numerical gradient estimation, which resembles exactly the metric used by \citet{ribeiro2016should} as $\gamma$ tends to $0$ or if the model considered is linear. 
Global estimation measures can be computed by estimating the expected value of $\mathbf{\Delta}^i_{\gamma}(x^*)$ with $x^*$ sampled from a test set.
However, since our method relies on the analytic form of the inference equations of GPs (rather than being model-agnostic, which LIME is), we are able to formally bound these quantities, which allows as to provide guarantees over interpretability results.
Next, we  first graphically demonstrate why linear approximation can be misleading for global interpretability analysis for the 2D-synthetic and SPAM datasets, and then show how we can rely on formal quantification of  interpretability to investigate the adversarial vulnerability of a GP model around specific test points.

\paragraph{Global Interpretability Analysis for 2D-Synthethic and SPAM}
\begin{figure}[h]
	\centering
	{ \includegraphics[width = 0.48\textwidth]{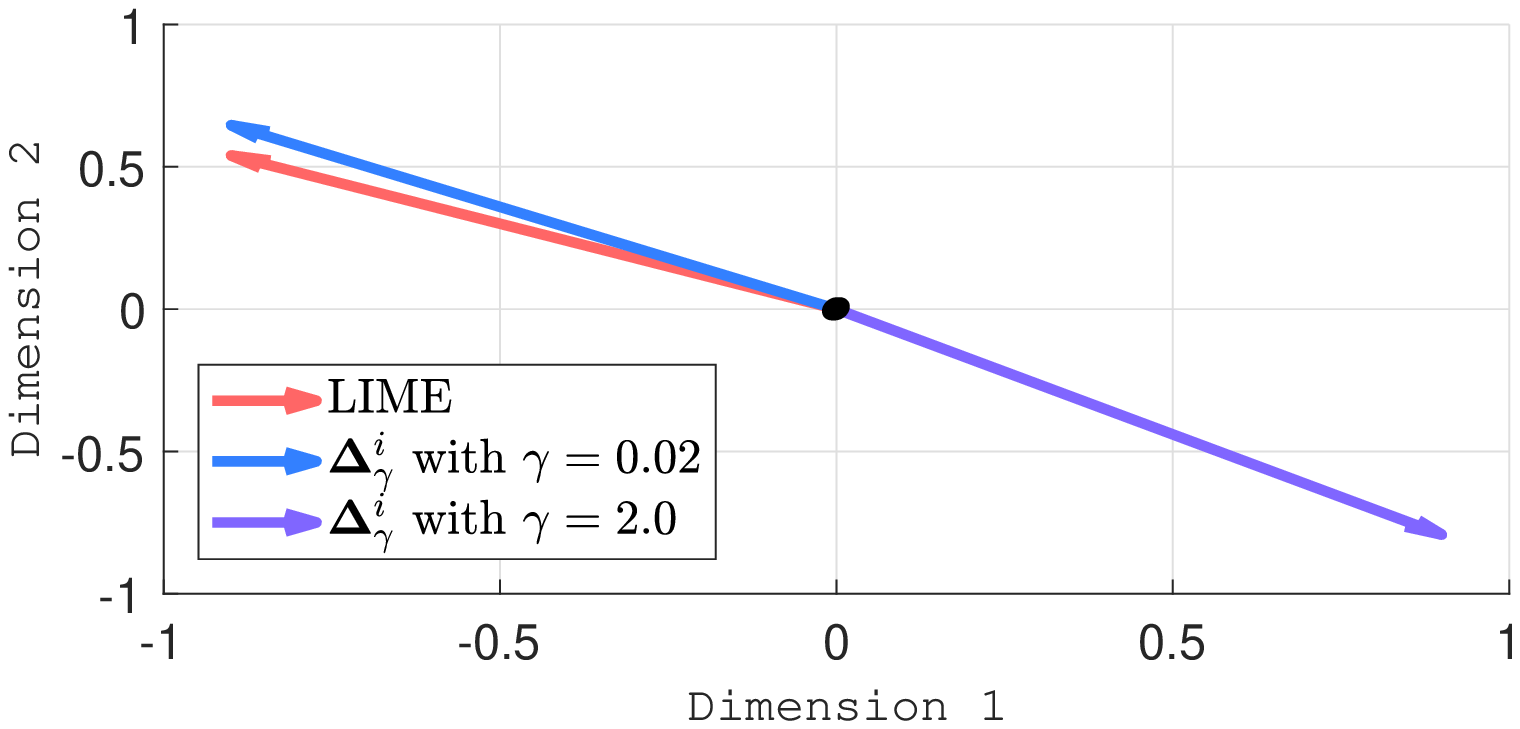}}
	{  \includegraphics[width = 0.48\textwidth]{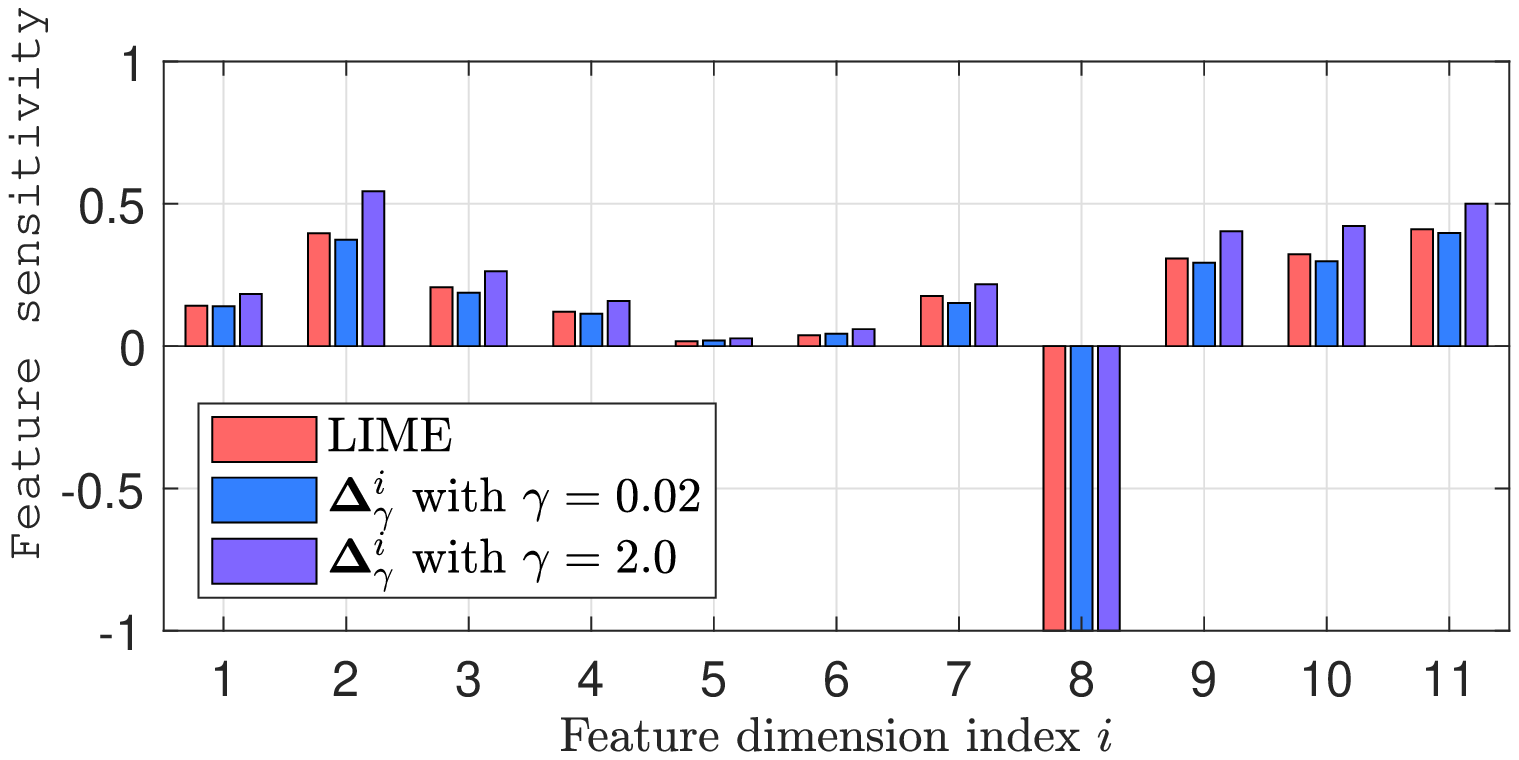}}
	\caption{Global interpretability, $\mathbf{\Delta}^i_{\gamma}$, as analysed by LIME and our method. \textbf{Left:} Results for the Synthetic2D dataset. \textbf{Right:} Results for the SPAM dataset.} 
	\label{fig:interpret_appendix}
\end{figure}
We perform global interpretability analysis on GP models trained on the Synthetic2D and SPAM datasets, estimating the expected value of $\mathbf{\Delta}^i_{\gamma}$ with $50$ random test points.
The results are shown in Figure \ref{fig:interpret_appendix}.
For Synthetic2D (top row), LIME suggests that a higher probability of belonging to class 1 (depicted as the direction of the arrow in the plot) corresponds to lower values along dimension 1 and higher values along dimension 2.
As can be seen in the corresponding contour plot in Figure \ref{fig:safety1} (top left), the exact opposite is true, however.
LIME, as it is built on linearity approximations, fails to take into account the global behaviour of the GP.
When using a small value of $\gamma$ our approach obtains similar results to LIME.
However, with $\gamma = 2.0$ the global relationship between input and output values is correctly captured.
For SPAM, on the other hand (Figure \ref{fig:interpret_appendix}, bottom), due to the linearity of the dataset and the GP, a local analysis correctly reflects the global picture.


\paragraph{Interpretability for MNIST358 and F-MNIST-TSP predictions}
\begin{figure}[h]
  	\centering
	\includegraphics[clip = on, trim = 10mm 10mm 10mm 10mm ,width = 0.16\textwidth]{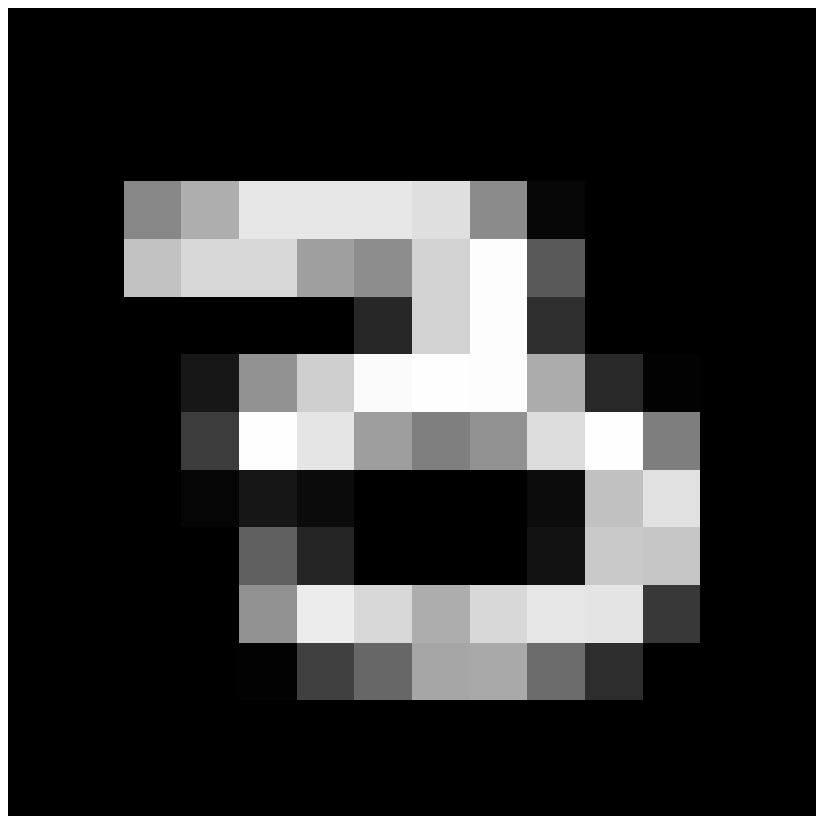}
	\includegraphics[clip = on, trim = 10mm 10mm 10mm 10mm ,width = 0.16\textwidth]{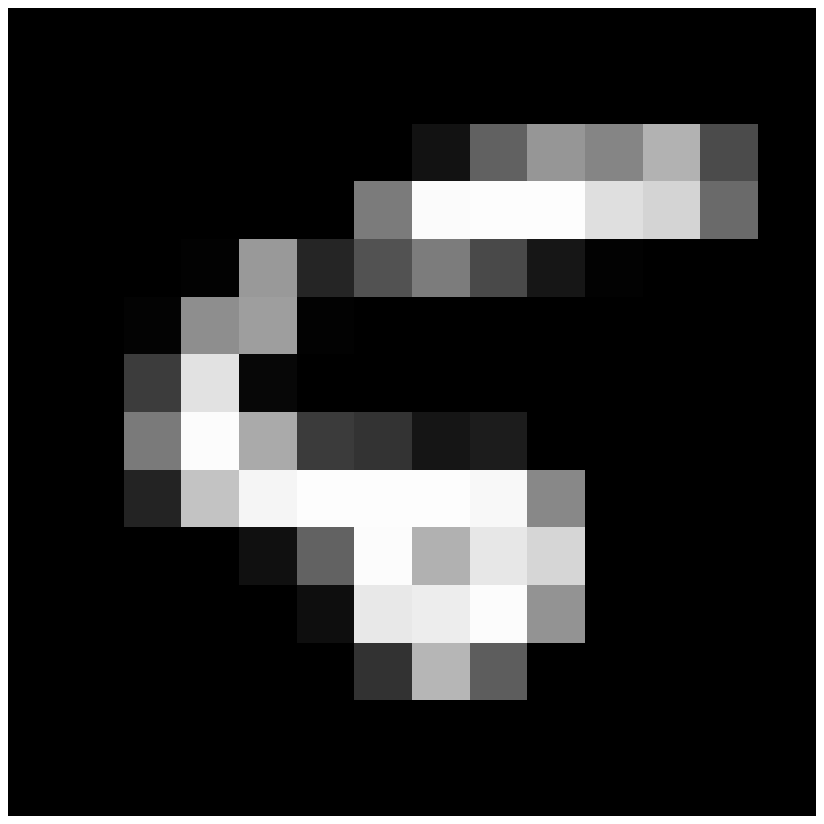}
	\includegraphics[clip = on, trim = 10mm 10mm 10mm 10mm ,width = 0.16\textwidth]{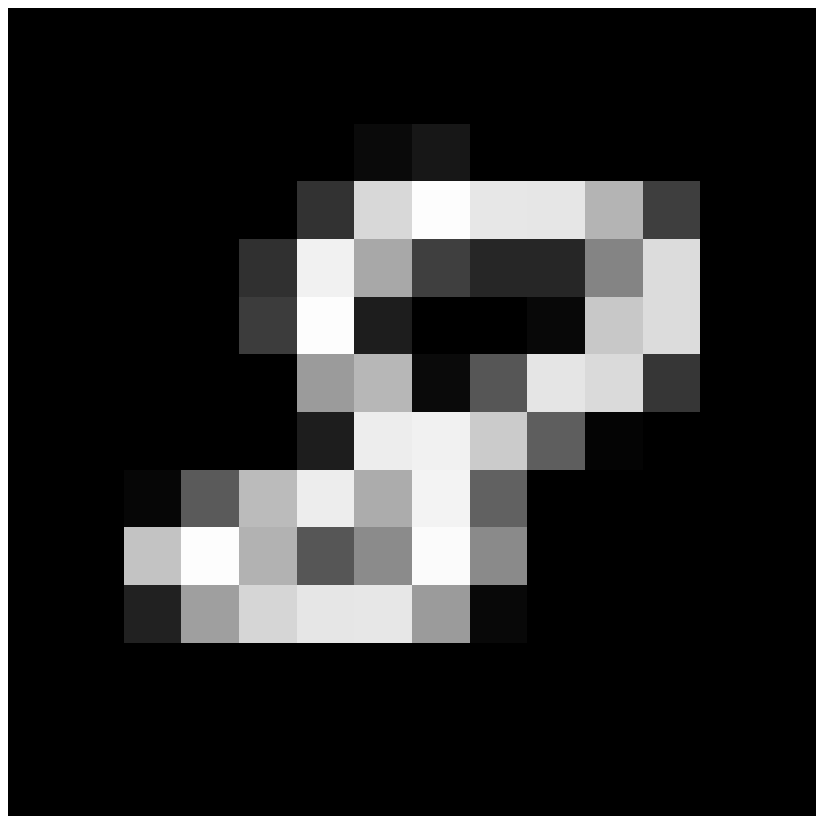}
	\includegraphics[clip = on, trim = 10mm 10mm 10mm 10mm ,width = 0.16\textwidth]{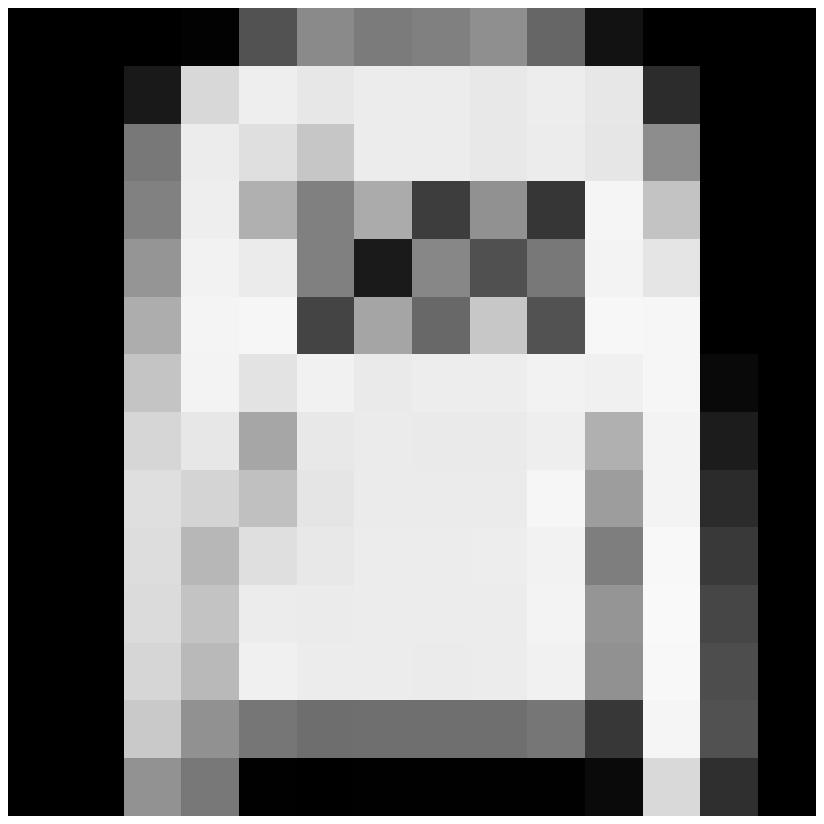}
	\includegraphics[clip = on, trim = 10mm 10mm 10mm 10mm ,width = 0.16\textwidth]{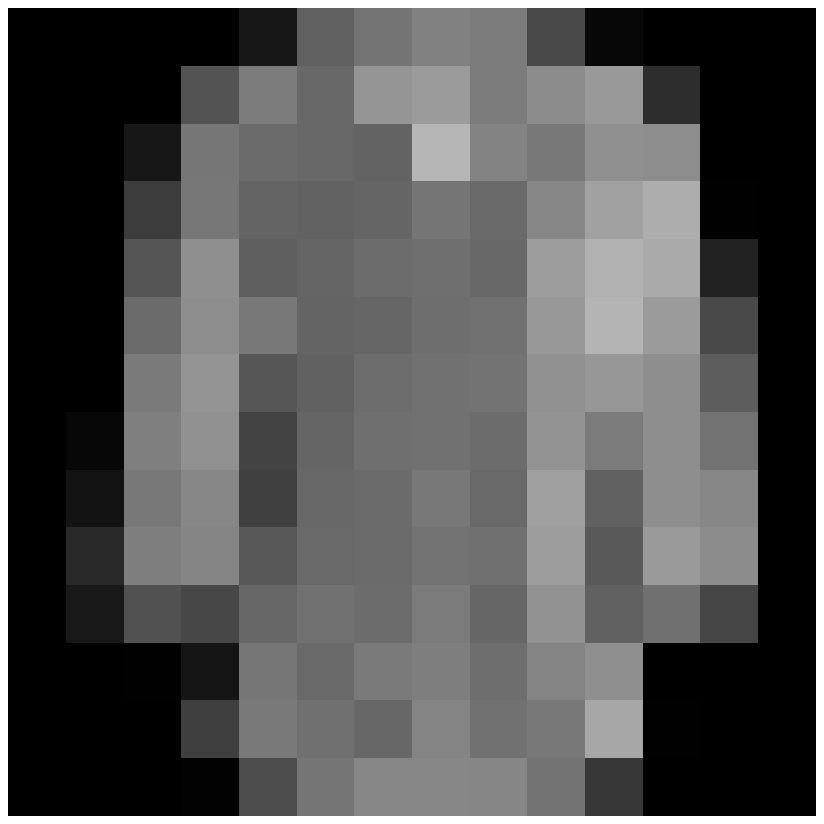}
	\includegraphics[clip = on, trim = 10mm 10mm 10mm 10mm ,width = 0.16\textwidth]{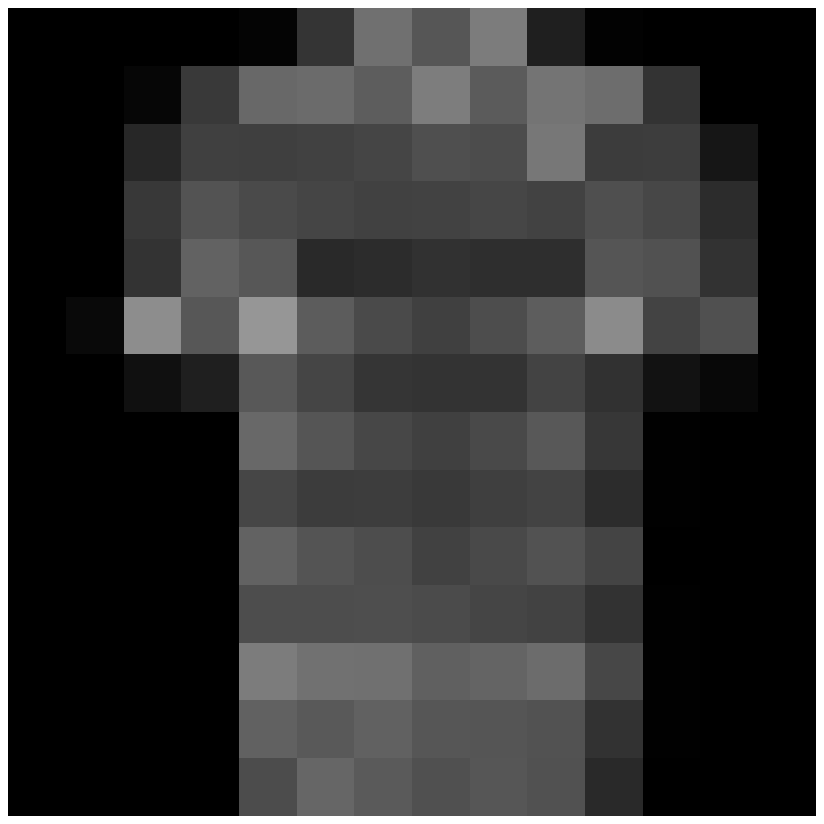}\\ 
	
	\includegraphics[clip = on, trim = 10mm 10mm 10mm 10mm ,width = 0.16\textwidth]{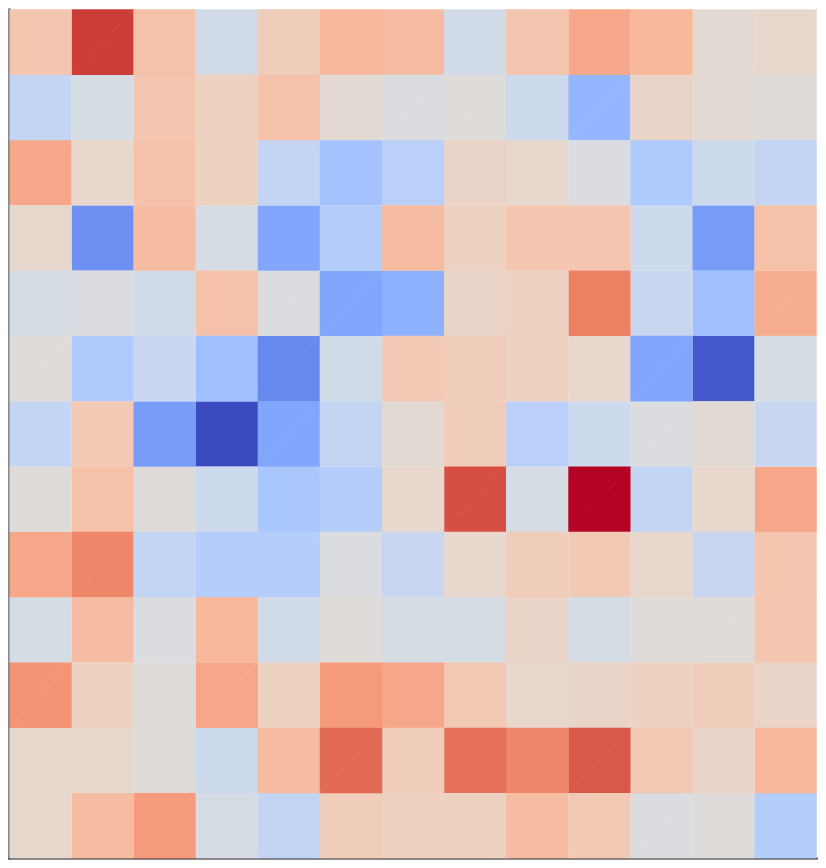}
	\includegraphics[clip = on, trim = 10mm 10mm 10mm 10mm ,width = 0.16\textwidth]{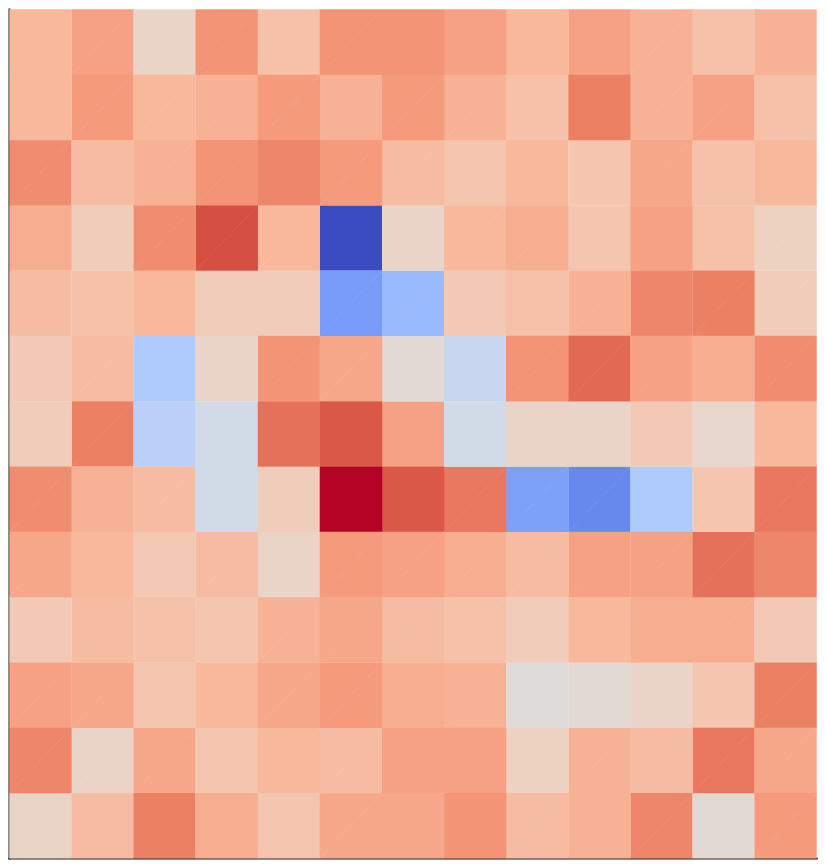}
	\includegraphics[clip = on, trim = 10mm 10mm 10mm 10mm ,width = 0.16\textwidth]{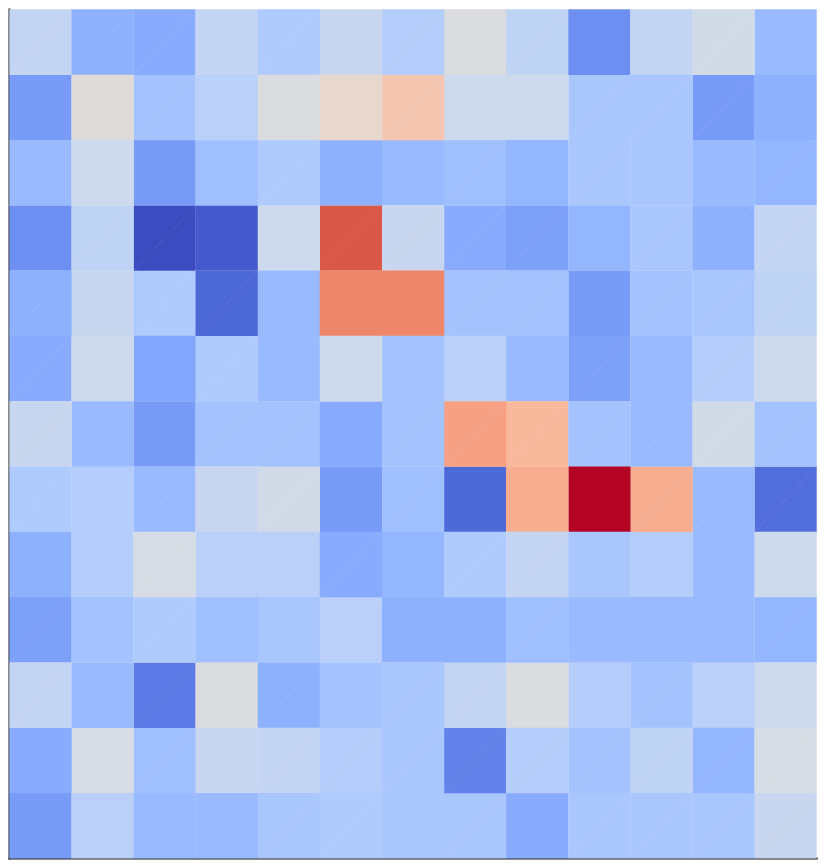} 
	\includegraphics[clip = on, trim = 10mm 10mm 10mm 10mm ,width = 0.16\textwidth]{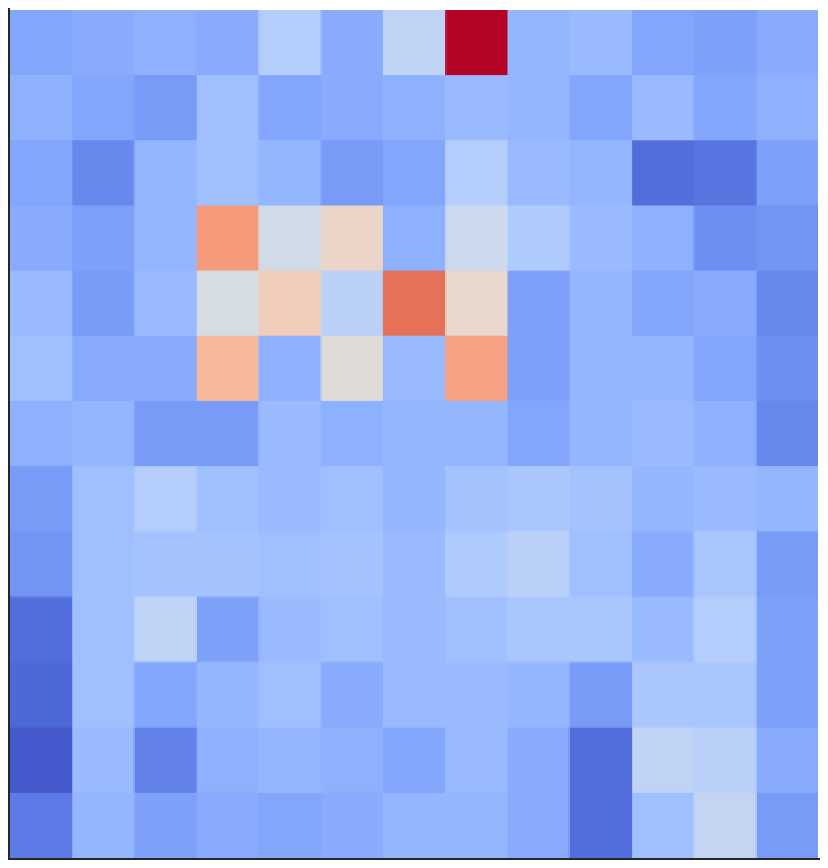} 
	\includegraphics[clip = on, trim = 10mm 10mm 10mm 10mm ,width = 0.16\textwidth]{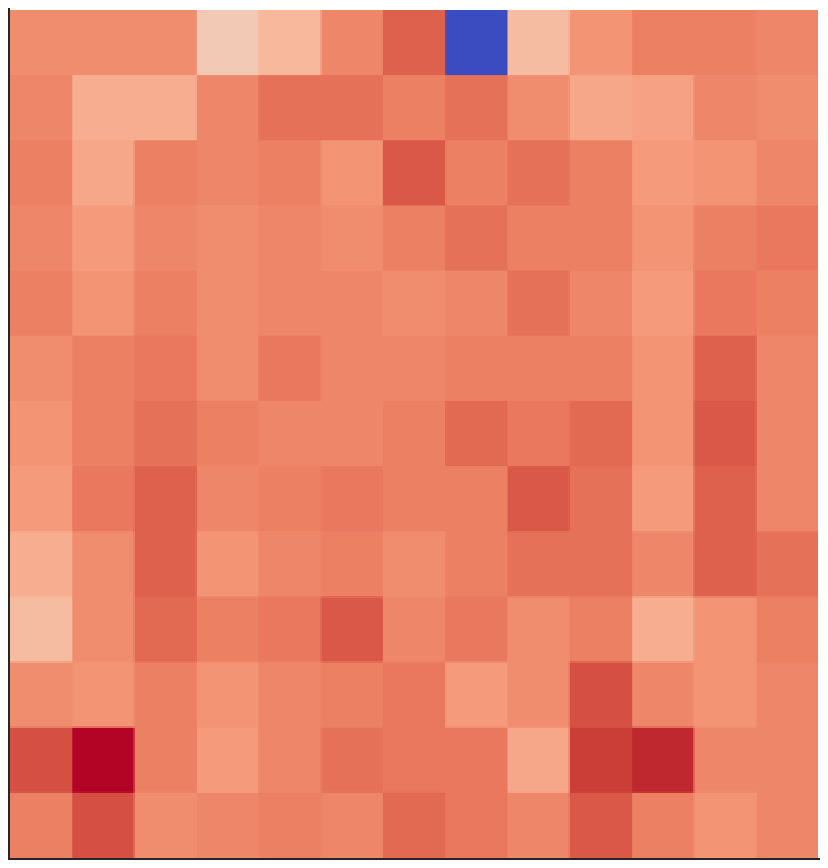} 
	\includegraphics[clip = on, trim = 10mm 10mm 10mm 10mm ,width = 0.16\textwidth]{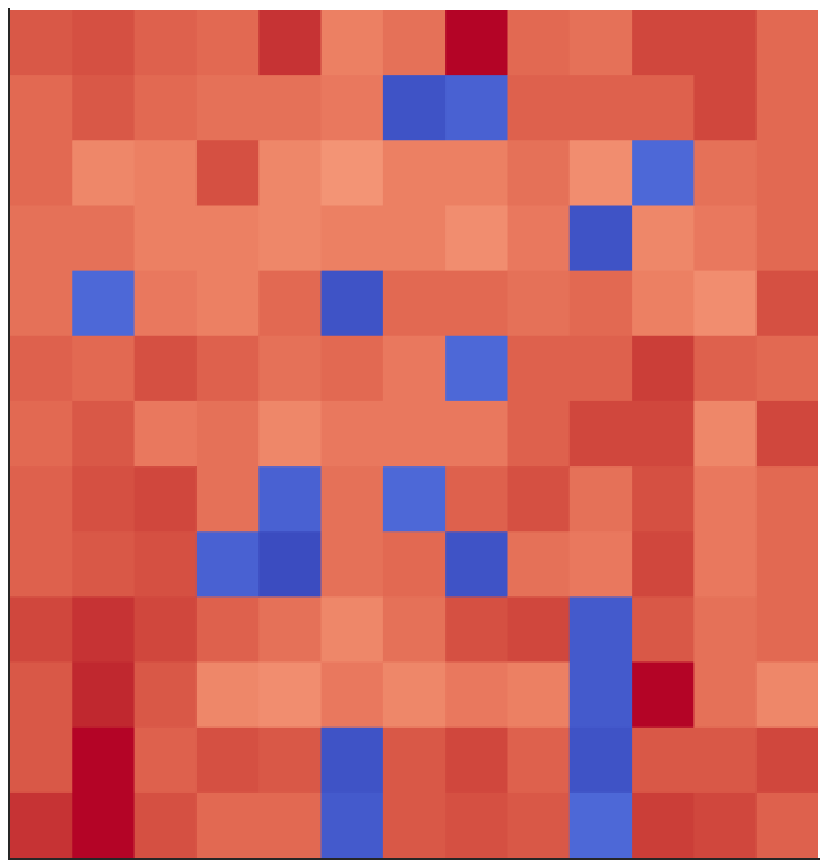} \\
	
	\includegraphics[clip = on, trim = 0mm 0mm 0mm 0mm ,width = 0.16\textwidth]{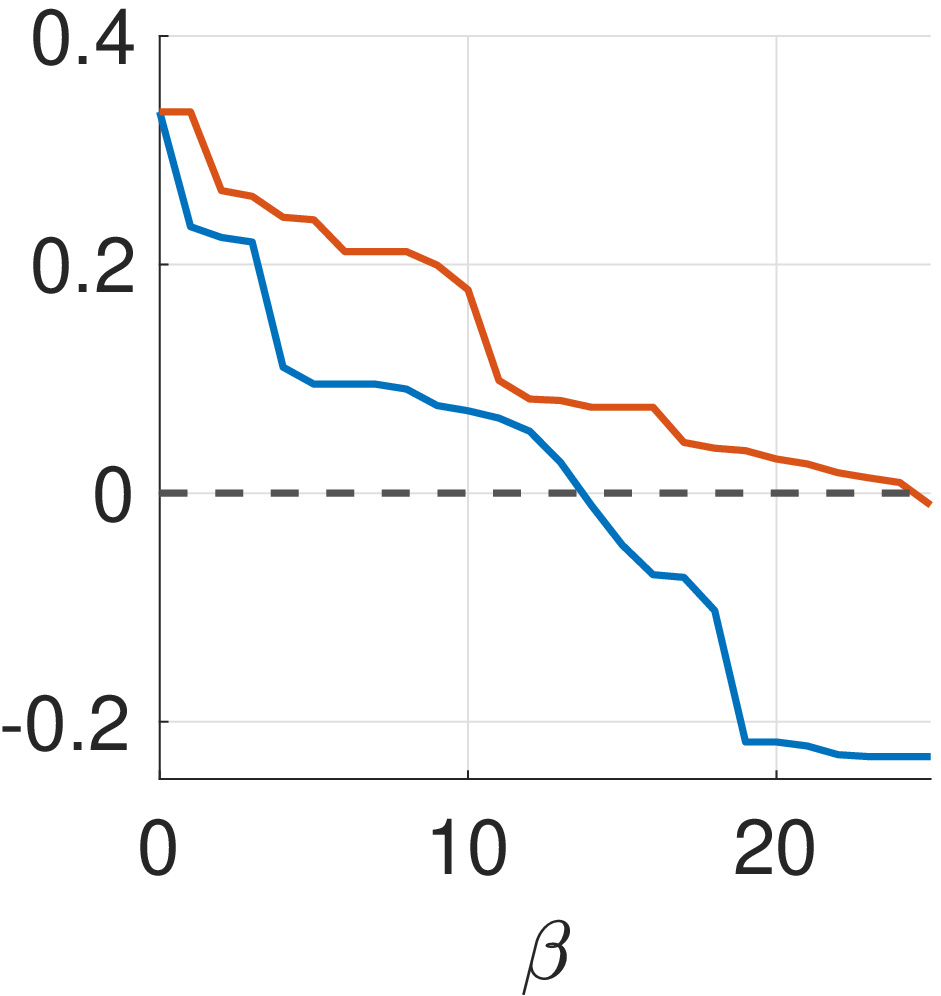}
	\includegraphics[clip = on, trim = 0mm 0mm 0mm 0mm ,width = 0.16\textwidth]{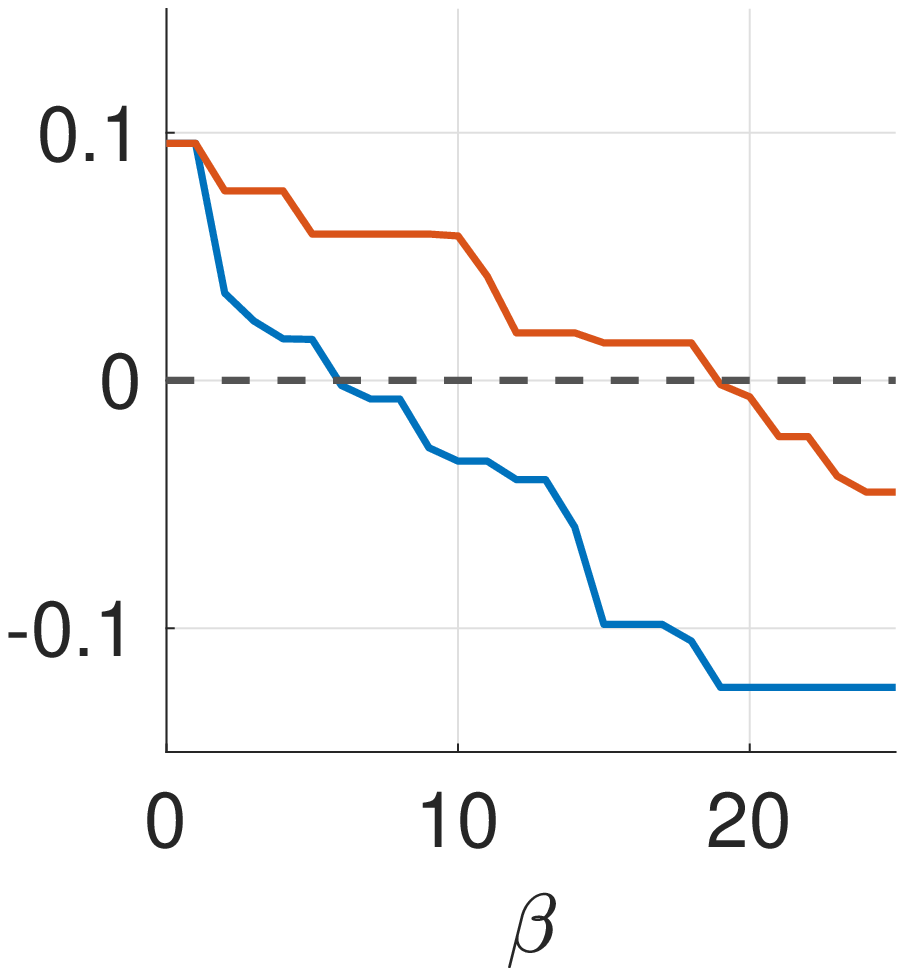}
	\includegraphics[clip = on, trim = 0mm 0mm 0mm 0mm ,width = 0.16\textwidth]{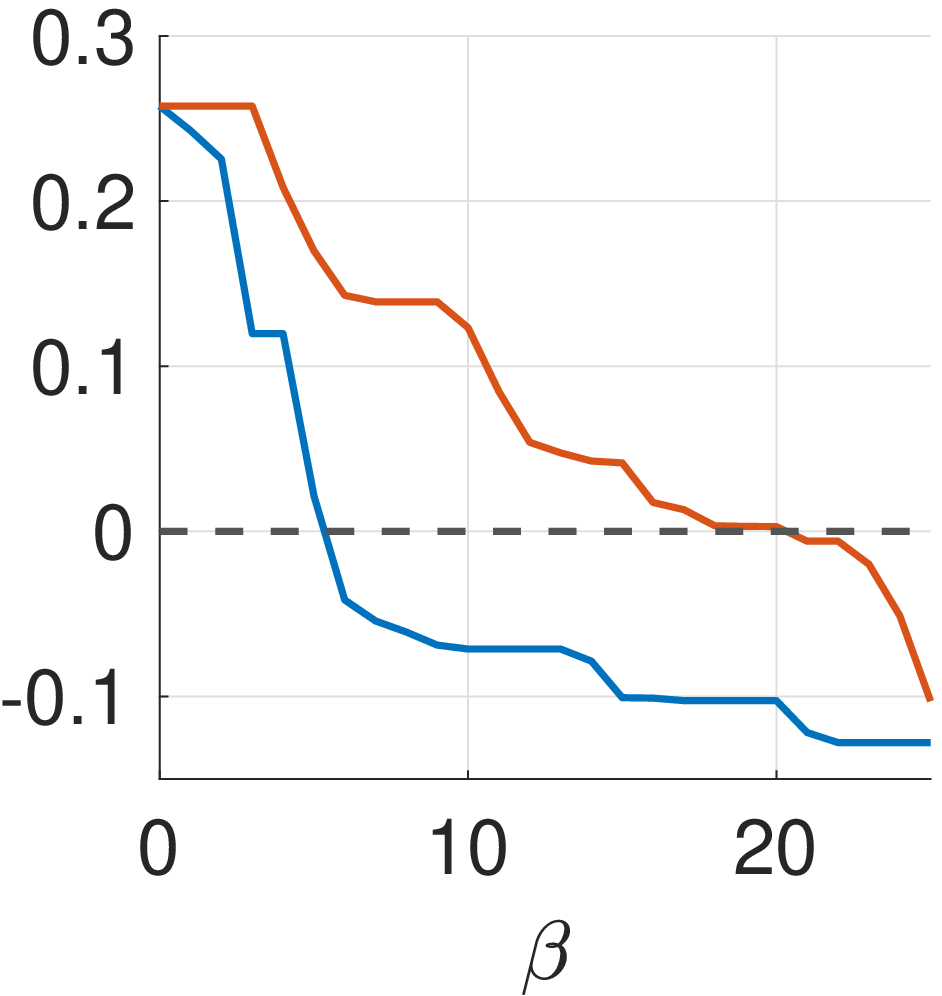}
	\includegraphics[clip = on, trim = 0mm 0mm 0mm 0mm ,width = 0.16\textwidth]{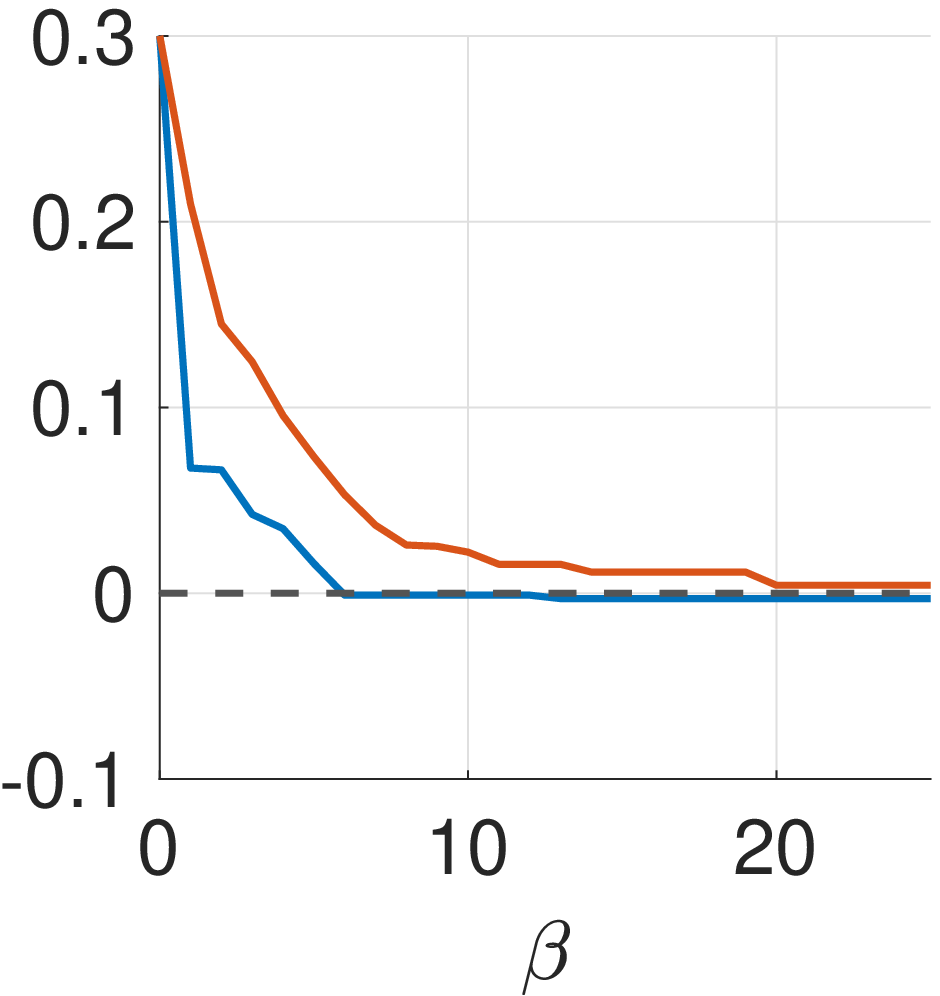}
	\includegraphics[clip = on, trim = 0mm 0mm 0mm 0mm ,width = 0.16\textwidth]{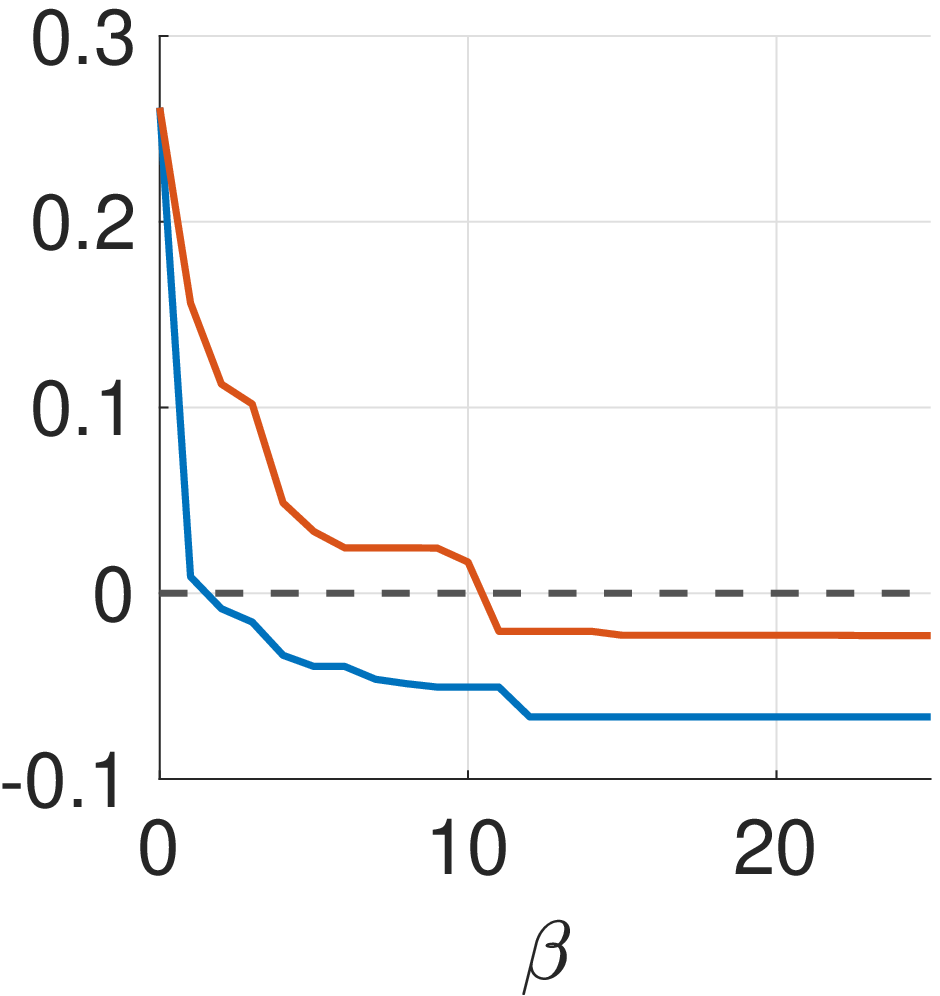}
	\includegraphics[clip = on, trim = 0mm 0mm 0mm 0mm ,width = 0.16\textwidth]{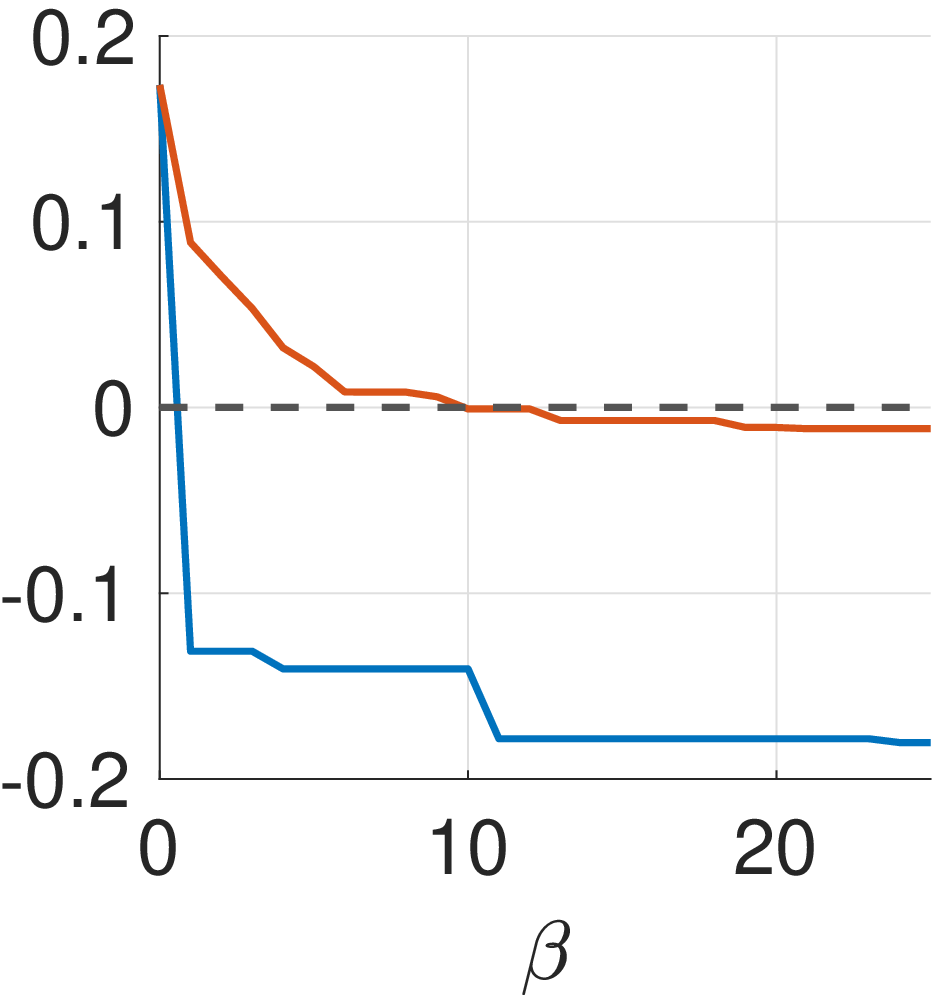} \\
	\includegraphics[clip = on, trim = 10mm 10mm 10mm 0mm ,width = 0.16\textwidth]{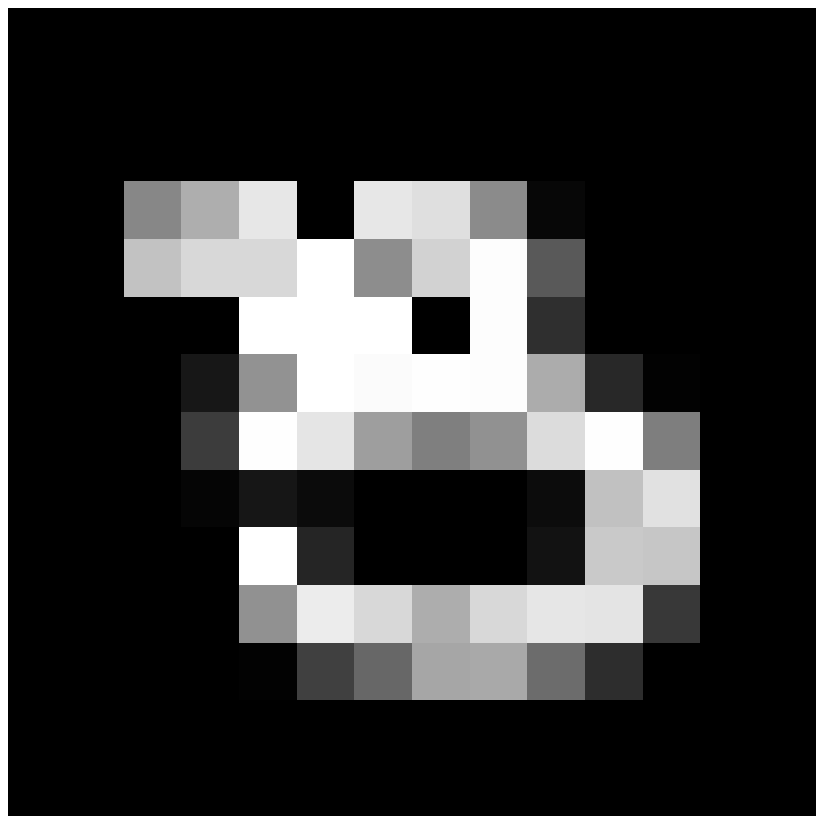}
	\includegraphics[clip = on, trim = 10mm 10mm 10mm 0mm ,width = 0.16\textwidth]{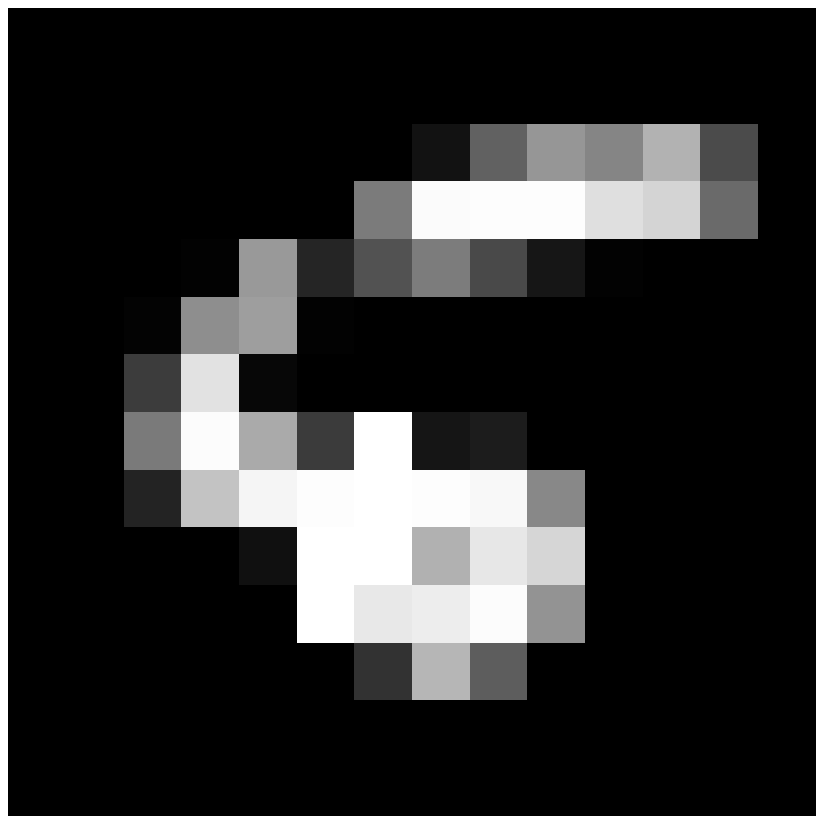}
	\includegraphics[clip = on, trim = 10mm 10mm 10mm 0mm ,width = 0.16\textwidth]{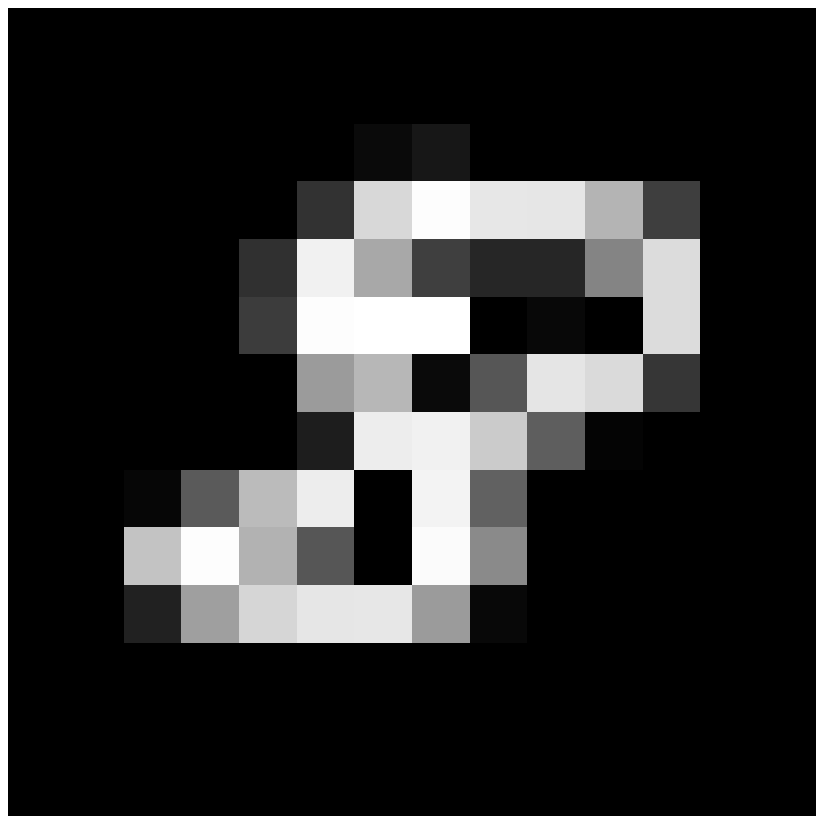}
	\includegraphics[clip = on, trim = 10mm 10mm 10mm 0mm ,width = 0.16\textwidth]{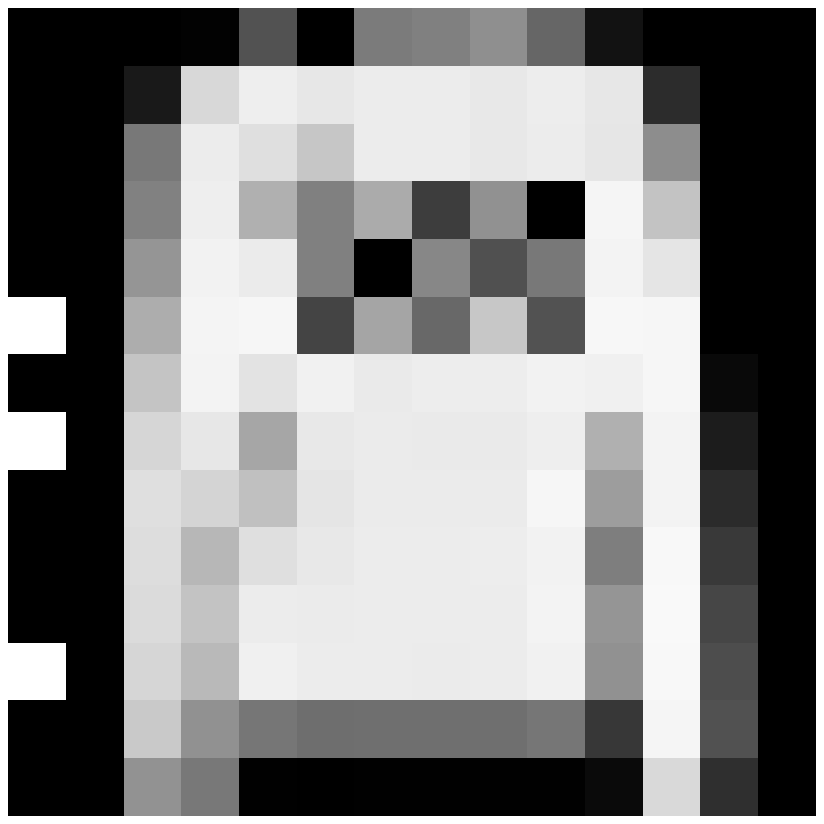}
	\includegraphics[clip = on, trim = 10mm 10mm 10mm 0mm ,width = 0.16\textwidth]{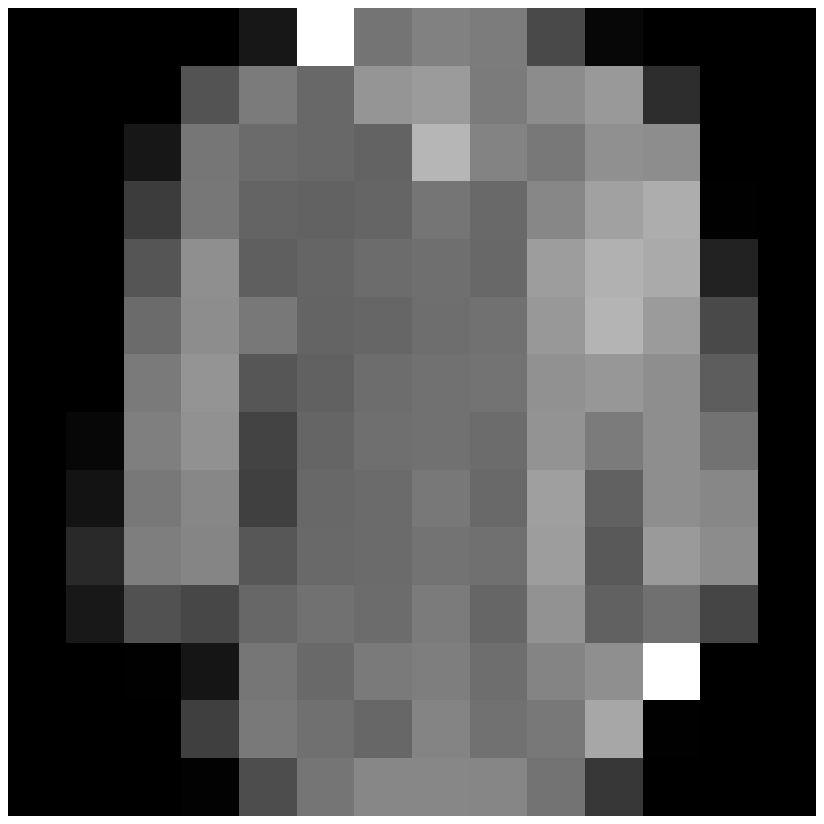}
	\includegraphics[clip = on, trim = 10mm 10mm 10mm 0mm ,width = 0.16\textwidth]{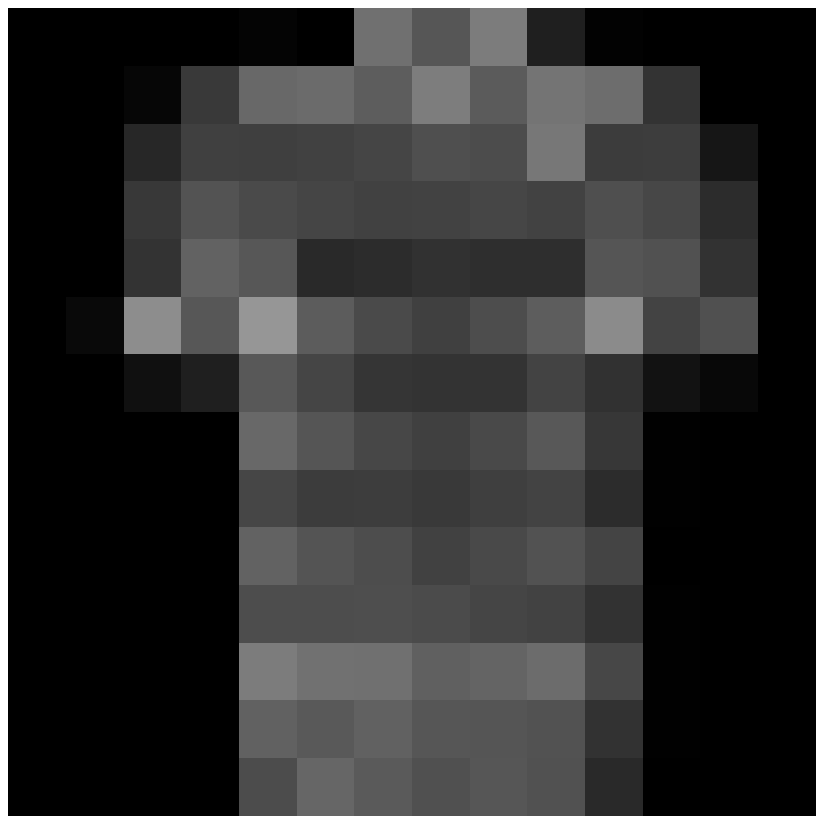}
	\caption{\textbf{First row}: 6 test images randomly selected from MNIST358 and F-MNIST-TSP. \textbf{Second row}: Interpretability metric estimation using our method. \textbf{Third row}: Comparison of adversarial gaps ($y$-axis) obtained for a given budget $\beta$ ($x$-axis) when using our method for interpretability estimation (blue line) and when using LIME (red line). The dashed grey line represents the threshold below which an adversarial is found. \textbf{Fourth row}: Minimal adversarial examples found by utilising our interpretability metric.} 
	\label{fig:Interpretability}
	\vspace*{-0.0cm}
\end{figure}
As shown in \citep{darwiche2020three}, interpretability metrics can be used to synthesise adversarial examples, because
pixels and features that are deemed important for the prediction are also likely to be highly vulnerable to adversarial perturbations.
These results can be used to glimpse further information about interpretability of the predictions by qualitatively examining the obtained adversarial examples.
To this end, given a test point $x^*$ and a point $x$ taken from a small neighbourhood around $x^*$, we define the adversarial gap, $\pi_{\text{gap}}(x)$, as the minimum difference between the confidence in the true class and those of the remaining classes on $x$, so that $\pi_{\text{gap}}(x) < 0$ implies that $x$ is an adversarial example for $x^*$.
We analyse how $\pi_{\text{gap}}$ changes as we change an increasing number of pixels, $\beta$, and compare the results obtained with our method to those of LIME.

We plot the results on six images randomly selected from the MNIST358 and F-MNIST-TSP datasets in Figure \ref{fig:Interpretability}. The selected clean test images are reported in the first row of the figure, and the interpretability values are reported as a heatmap in the second row directly below the corresponding images.
The colour gradient varies from red (positive impact, pixel value increase resulting in increased class probability of shown digit) to blue (negative impact, pixel value increase decreasing the class probability). 
The values of $\pi_{\text{gap}}$ obtained with our method (blue line) are compared with those from LIME (red line) in the third row, and in the fourth row we plot the minimal adversarial examples found with our method.

We observe that, for each image, and for each value of $\beta$, relying on the values estimated by LIME leads to an over-estimation of model robustness, and in some cases (e.g., third and fifth column) more than triple the number of pixel modifications is required to find an adversarial example. 
We note that the adversarial examples that we obtained for MNIST and F-MNIST are qualitatively different.
For the MNIST image, our method modified salient bits of the image.
For digit 3, for example, the interpretability analysis retrieves a contiguous blue patch on the left, which is deemed to be the most important part for the prediction. 
When this is modified in adversarial fashion, the image obtained resembles an 8 in the upper part, and a 3 in the lower part, and is (understandably) classified as an 8 by the GP. 
Similarly, digit 5 is modified so that the lower part resembles an 8, whereas in the image of the 8 a 3 shaped contour is highlighted in the adversarial example.
For the F-MNIST image, instead, our method detects pixels that are important for the GP prediction but have little semantic meaning for a human, that is, where modifying pixels in the border of the image suffices to find adversarial examples.

\section{Conclusion}

We presented a method for computing, for any compact region of the input space surrounding a test point, provable guarantees on the adversarial robustness of a GP model for all points in that region, up to any desired precision $\epsilon > 0$.
To achieve this, we have developed a branch-and-bound optimisation scheme that computes upper and lower bounds on the minimum and maximum of the model prediction ranges, and proved that it converges in finitely many steps up to an error tolerance  $\epsilon > 0 $ selected a-priori. 

We have experimentally evaluated our method on four datasets, providing results for adversarial robustness, bounds over the predictive posterior distribution and local/global interpretability analysis.
Empirically, we have observed that, in Bayesian prediction settings and with GPs, the adversarial robustness of the model increases with the accuracy of the posterior distribution approximation, and with better hyper-parameter calibration.
%
This differs from what is generally observed in frequentist approaches to learning, for example, in deep neural networks, where better accuracy was empirically observed to imply  worse adversarial robustness \citep{zhang2019theoretically,su2018robustness}. 
We have also observed that increasing the number of training samples might still be beneficial for adversarial robustness even when using sparse approximations for GP training.

One limitation of the approach presented in this paper is its high, exponential time, computational complexity. This unsurprising since the problem we are solving is non-linear optimisation. To reduce the computational time requirement, we have formulated analytical solutions for the main types of kernels used in practical applications. We have also observed that sparse GPs, as well as improving training time, can significantly reduce the time requirement, as bounding functions need to be computed only with respect to the inducing points. We believe that the methods proposed in this paper are therefore widely applicable in practice.


\acks{SR is grateful for support from the UK Royal Academy of Engineering and the Oxford-Man Institute. AB thanks the Konrad-Adenauer-Stiftung and the Oxford-Man Institute for their support. MK, LL and AP received funding from the European Research Council (ERC)
under the European Union’s Horizon 2020 research and innovation programme
(FUN2MODEL, grant agreement No.~834115).
AP and MK acknowledge partial funding from the European
Union's Horizon 2020 research and innovation program under the Marie Sklodowska-Curie grant agreement No 722022 ``AffecTech''. LC is supported by a Royal Society Research Professorship.}


\newpage

\appendix
\section{Additional Lemmas and Proofs}\label{sec:lemmas_and_proofs}
In this section we provide statements of additional lemmas referred to in the main text of the paper, as well as proofs of that were omitted for space reasons. 
\begin{lemma}
\label{lemmma:linear_prop}
Let $g_L(t) =  a_L  + b_L t $ and $g^U(t) =  g_U(t) =  a_U  + b_U t $ be an LBF and UBF to a function $g(t)$ $\forall t \in \mathcal{T}$, i.e. $ g_L(t) \leq g(t) \leq  g_U(t) $ $\forall t \in \mathcal{T} \subseteq \mathbb{R}$. 
Consider two real coefficients $\alpha \in \mathbb{R}$ and $\beta \in \mathbb{R}$.
Define
\begin{align}
    &\bar{b}_L = \begin{cases}  \alpha b_L \; \textrm{if} \,   \alpha \geq 0  \\ \alpha b_U \; \textrm{if} \,   \alpha < 0  \end{cases}
    \bar{a}_L = \begin{cases}  \alpha a_L + \beta \; \textrm{if} \,   \alpha \geq 0  \\ \alpha a^U  + \beta \; \textrm{if} \,  \alpha < 0  \end{cases} \label{eq:lin_transf1}\\
    &\bar{b}_U = \begin{cases}  \alpha b_U \; \textrm{if} \,   \alpha \geq 0  \\ \alpha b_L \; \textrm{if} \,   \alpha < 0  \end{cases}
    \bar{a}_U = \begin{cases}  \alpha a_U + \beta \; \textrm{if} \,   \alpha \geq 0  \\ \alpha a_L  + \beta \; \textrm{if} \,   \alpha < 0  \end{cases} \label{eq:lin_transf2}
\end{align}
Then:
\begin{align*}
\bar{g}_L(t) :=  \bar{a}_L  + \bar{b}_L t \leq \alpha g(t) + \beta \leq  \bar{a}_U  + \bar{b}_U t  =: \bar{g}_U(t)
\end{align*}
That is, LBFs can be propagated through linear transformation by redefining the coefficients through Equations \eqref{eq:lin_transf1}--\eqref{eq:lin_transf2}.
\end{lemma}
\begin{proof}
The proof is an immediate consequence of multiplying the inequalities  $ g_L(t) \leq g(t) \leq  g_U(t) $ with the coefficients $\alpha$ and $\beta$ and re-writing the new inequality using the constants defined in Equations \eqref{eq:lin_transf1}--\eqref{eq:lin_transf2}.
\end{proof}

\begin{lemma}\label{lemma:sign_of_sigmoid_derivative}
Consider the sigmoid function $\sigma(x) = {\frac {1}{1+e^{-x}}}. $Let $z > 0$, then we have:
\begin{align*}
    \sigma'(\mu - z) \begin{cases} > \sigma'(\mu + z) \quad \text{if} \quad \mu > 0 \\ < \sigma'(\mu + z) \quad \text{if} \quad \mu < 0 \end{cases}
\end{align*}
\end{lemma}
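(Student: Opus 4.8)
The plan is to reduce the claim to a simple symmetry-and-unimodality property of the sigmoid's derivative. The key observation is that $\sigma'$ is an even function that is strictly decreasing in $|x|$; once this is established, comparing $\sigma'(\mu-z)$ with $\sigma'(\mu+z)$ amounts to comparing $|\mu-z|$ with $|\mu+z|$, which is settled by a one-line algebraic identity.

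First I would compute the derivative in a convenient closed form. Using $\sigma'(x)=\sigma(x)\bigl(1-\sigma(x)\bigr)$ together with the identity $\sigma(-x)=1-\sigma(x)$ (which follows from multiplying numerator and denominator of $\sigma(-x)=\tfrac{1}{1+e^{x}}$ by $e^{-x}$), one gets immediately
\[
\sigma'(-x)=\sigma(-x)\bigl(1-\sigma(-x)\bigr)=\bigl(1-\sigma(x)\bigr)\sigma(x)=\sigma'(x),
\]
so $\sigma'$ is even. For the monotonicity I would rewrite $\sigma'$ using $1+e^{-x}=e^{-x/2}\bigl(e^{x/2}+e^{-x/2}\bigr)$, which gives the clean expression
\[
\sigma'(x)=\frac{e^{-x}}{(1+e^{-x})^{2}}=\frac{1}{\bigl(e^{x/2}+e^{-x/2}\bigr)^{2}}=\frac{1}{4\cosh^{2}(x/2)}.
\]
Since $\cosh^{2}(x/2)$ is strictly increasing for $x>0$, this shows $\sigma'$ is strictly decreasing on $[0,\infty)$; combined with evenness, $\sigma'(a)>\sigma'(b)$ holds exactly when $|a|<|b|$.

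It then remains to compare the magnitudes of the two arguments. I would use the identity
\[
(\mu+z)^{2}-(\mu-z)^{2}=4\mu z,
\]
so that for $z>0$ the sign of $|\mu+z|-|\mu-z|$ equals the sign of $\mu$. Hence if $\mu>0$ we have $|\mu-z|<|\mu+z|$, which by the established unimodality yields $\sigma'(\mu-z)>\sigma'(\mu+z)$; and if $\mu<0$ we have $|\mu-z|>|\mu+z|$, giving $\sigma'(\mu-z)<\sigma'(\mu+z)$. This is exactly the two-case conclusion of the lemma.

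I do not anticipate a genuine obstacle here; the only nontrivial point is recognising that the statement is really about the even, bell-shaped profile of $\sigma'$, and the cleanest way to nail down strict unimodality is the $\cosh$ rewriting above (one could alternatively verify $\operatorname{sgn}\sigma''(x)=-\operatorname{sgn}(x)$, but the $\cosh$ form avoids a second differentiation). Everything else is the elementary magnitude comparison via $4\mu z$.
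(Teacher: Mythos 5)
Your proof is correct and rests on the same two pillars as the paper's own argument: evenness of $\sigma'$ and its strict decrease on $[0,+\infty)$, so that comparing $\sigma'(\mu-z)$ with $\sigma'(\mu+z)$ reduces to comparing $|\mu-z|$ with $|\mu+z|$. The differences are only in execution — the paper obtains the monotonicity by citing strict concavity of $\sigma$ on $[0,+\infty)$ and then splits into the cases $\mu-z\geq 0$ and $\mu-z<0$, whereas you derive it explicitly from the closed form $\sigma'(x)=\frac{1}{4\cosh^{2}(x/2)}$ and replace the case split with the identity $(\mu+z)^{2}-(\mu-z)^{2}=4\mu z$, which makes your version a bit more self-contained.
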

\begin{proof}
Let $\mu > 0$; the proof when $\mu < 0$ is similar, because $\sigma'$ is an even function.

\textit{Case 1:} $\mu - z \geq 0$. Since $\sigma$ is strictly concave in $[0,+\infty)$, the derivative is a monotonic crescent in the relevant region. 
Thus, $\sigma'(\mu - z) > \sigma'(\mu + z) $.

\textit{Case 2:} $\mu - z < 0$. Since $\sigma'$ is even we have  $\sigma'(\mu - z) = \sigma'(z - \mu)$. Now $z-\mu>0$, similarly to Case 1 we have $\sigma'(z - \mu) < \sigma'(z + \mu)$, which proves the lemma.
\end{proof}

\begin{lemma}
\label{Lemma:supProbab}
Let $X$ and $Y$ be random variables with joint density function $f_{X,Y}$. Consider measurable sets $A$ and $B$. Then, it holds that 
$$ P(X\in A | Y\in B)\geq \inf_{y \in B}  P(X\in A | Y=y). $$
\end{lemma}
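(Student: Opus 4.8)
The plan is to express the set-conditional probability $P(X\in A\mid Y\in B)$ as a density-weighted average of the pointwise conditional probabilities $P(X\in A\mid Y=y)$ over $y\in B$, and then bound this average from below by the infimum of its integrand. Throughout I assume $P(Y\in B)>0$, as otherwise the left-hand side is undefined.

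First I would invoke the definition of conditional probability together with the disintegration of the joint law through its density. Writing $f_Y(y)=\int_{\mathbb{R}} f_{X,Y}(x,y)\,dx$ for the marginal density of $Y$, the law of total probability gives
\begin{align*}
P(X\in A\mid Y\in B)
= \frac{P(X\in A,\, Y\in B)}{P(Y\in B)}
= \frac{\int_B P(X\in A\mid Y=y)\, f_Y(y)\, dy}{\int_B f_Y(y)\, dy}.
\end{align*}
This identity is the one step that must be justified carefully: it follows by substituting the pointwise conditional $P(X\in A\mid Y=y)=\left(\int_A f_{X,Y}(x,y)\,dx\right)/f_Y(y)$ (valid at the $y$ where $f_Y(y)>0$, which carry all the marginal mass on $B$) into the numerator, so that both sides equal $\int_B\int_A f_{X,Y}(x,y)\,dx\,dy$.

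Next, setting $c := \inf_{y\in B} P(X\in A\mid Y=y)$, I would note that $P(X\in A\mid Y=y)\geq c$ for every $y\in B$ by definition of the infimum, and that $f_Y(y)\geq 0$. Monotonicity of the integral then yields
\begin{align*}
\int_B P(X\in A\mid Y=y)\, f_Y(y)\, dy
\;\geq\; c\int_B f_Y(y)\, dy
\;=\; c\, P(Y\in B).
\end{align*}
Dividing both sides by the strictly positive quantity $P(Y\in B)$ gives $P(X\in A\mid Y\in B)\geq c=\inf_{y\in B}P(X\in A\mid Y=y)$, which is exactly the claim.

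I do not expect any genuine obstacle here; beyond the elementary bounding-by-the-infimum step, the only point requiring care is the disintegration identity expressing the set-conditional probability as an $f_Y$-weighted integral of the pointwise conditionals, which is immediate from the joint-density representation. No quantitative estimates are needed, and the argument is purely the monotonicity of the Lebesgue integral applied to a nonnegative weight.
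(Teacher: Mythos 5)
Your proposal is correct and follows essentially the same route as the paper's proof: both rewrite $P(X\in A \mid Y\in B)$ via the joint density as an $f_Y$-weighted integral of the pointwise conditionals $P(X\in A\mid Y=y)$ over $B$, and then pull the infimum out of the integral by monotonicity before cancelling $\int_B f_Y(y)\,dy$. Your explicit handling of the caveat $P(Y\in B)>0$ and of the points where $f_Y(y)=0$ is a small tidiness improvement over the paper's more telegraphic chain of (in)equalities, but the underlying argument is identical.
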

\begin{proof}
\begin{align*}
    &P(X\in A | Y \in B) = \frac{P( X\in A , Y \in B  )}{P(Y \in B)} = \frac{\int_{B} \int_{A}  f_{X,Y}(x,y) dx dy  }{\int_B f_Y(y)dy}\\
    &=  \frac{\int_{B} \int_{A}  f_{X|Y}(x|y)f_Y(y) dx dy  }{\int_B f_Y(y)dy} \geq \frac{ \int_{B} f_Y(y)dy \inf_{y \in B}\int_{A} f_{X|Y}(x|y)  dx   }{\int_B f_Y(y)dy}  \\
    &= \inf_{y \in B} P(X \in A | Y=y).
\end{align*}
\end{proof}

\subsection{Proof of Proposition \ref{prop:lbf_ubf_kernel}}
\begin{proof}
We show how to compute $\bar{a}_L$ and $\bar{b}_L$; the same arguments also apply to the computation of $\bar{a}_U$ and $\bar{b}_U$ by simply considering $-\Sigma_{x,\bar{x}}$.

Consider $c_L = -1$ and $c_U = 1 $ coefficients associated to the input point $\bar{x}$.
Let $\varphi_L = U(c_L)$ and $\varphi_U = U(c_U)$, then by Assumption $3$ of bounded kernel decomposition we have that $\varphi(x,\bar{x}) \in [\varphi_L, \varphi_U ]$ for all $x \in T$.
Consider now the function $\psi$ restricted to the interval $[\varphi_L, \varphi_U ]$.
Then there are four cases to consider for $\psi$. 
\paragraph{Case 1}  
If $\psi$ happens to be concave in $[\varphi_L, \varphi_U ]$, then, by definition of concave function, a lower bound is given by the line that links the points $(\varphi^L,\psi (\varphi^L)  )$ and $(\varphi^U,\psi (\varphi^U)  )$, that is, $g_L$ is simply the LBF with coefficients:
\begin{align*}
    \bar{b}_L &= \frac{\psi (\varphi^L) - \psi (\varphi^U)}{\varphi^L - \varphi^U}\\ 
    \bar{a}_L &= \psi (\varphi^L) -\bar{b}_L \varphi^L.
\end{align*}
\paragraph{Case 2}
If $\psi$ happens to be a convex function, then, by definition of convex function and by the differentiability of $\psi$, a valid lower bound is given by the tangent line in the middle point $\varphi^C = (\varphi^L + \varphi^U)/2$ of the interval, that is, $g_L$ is the LBF with coefficients:
\begin{align*}
    \bar{b}_L &= \frac{d\psi(\varphi^C)}{d\varphi}\\ 
    \bar{a}_L &= \psi (\varphi^L) - \bar{b}_L \varphi^L.   
\end{align*}
\paragraph{Case 3}
Assume now that $\psi$ is concave in  $[\varphi^L, \varphi^F ]$, and convex in $ [\varphi^F, \varphi^U ]$ (the arguments are very similar if we assume the first interval is that in which $\psi$ is convex and the second 
concave).
In other words, there is only one flex point $\varphi^F \in  (\varphi^L, \varphi^U )$.
Let $\bar{a}_L'$, $\bar{b}_L'$ be coefficients for linear lower approximation in $[\varphi^L, \varphi^F ]$ and $\bar{a}_L''$, $\bar{b}_L''$ analogous coefficients in $[\varphi^F, \varphi^U ]$ (respectively computed as for Case 1 and Case 2 above), and call $g'$ and $g''$ the corresponding functions.
Define $g_L$ to be the LBF function of coefficients $\bar{a}_L$ and $\bar{b}_L$ that goes through the two points $(\varphi^L , \min(g'(\varphi^L),g''(\varphi^L)  )   )$ and $(\varphi^U, g''(\varphi^U))$.
We then have that $g_L$ is a valid lower bound function for $\psi$ in $[\varphi^L, \varphi^U ]$.
In order to prove this we distinguish between two cases:
\begin{enumerate}
    \item If $\min(g'(\varphi^L),g''(\varphi^L)) = g'(\varphi^L)$, then we have that $g_L(\varphi^L) = g'(\varphi^L)  \leq g''(\varphi^L) $, and $g_L(\varphi^U) = g''(\varphi^U)$.
    Hence, because of linearity, $g_L(\varphi) \leq g''(\varphi)$ in $[\varphi^L,\varphi^U]$, and in particular in $[\varphi^F,\varphi^U]$ as well. 
    This also implies that $g_L(\varphi^F) \leq g''(\varphi^F) \leq g'(\varphi^F)$. 
    On the other hand, $g_L(\varphi^L) = g'(\varphi^L)$, hence $g_L(\varphi) \leq g'(\varphi)$ in $[\varphi^L,\varphi^F]$. 
    Combining these two results and by construction of $g'$ and $g''$ we have that $g_L(\varphi) \leq \psi(\varphi)$ in $[\varphi^L,\varphi^U]$.
    \item If $\min(g'(\varphi^L),g''(\varphi^L)) =  g''(\varphi^L)$, then in this case we have $g_L = g''$, and just have to show that $g(\varphi) \leq g'(\varphi)$ in $[\varphi^L,\varphi^F]$. 
    This immediately follows by noticing that $g''(\varphi^F) \leq g'(\varphi^F)$ and $g''(\varphi^L) \leq g'(\varphi^L)$.
\end{enumerate}
\paragraph{Case 4}
In the general case, as we have a finite number of flex points, we can divide  $[\varphi^L, \varphi^U ]$ in subintervals in which $\psi$ is either convex or concave.
We can then proceed iteratively from the two left-most intervals by repeatedly applying Case 3. 
\end{proof}

\subsection{Proof of Lemma \ref{lemma:convergence_lbf}}
\begin{proof}
We prove the lemma for the LBF. 
An analogous argument can be made for the UBF.

Letting $\epsilon > 0$, we want to find an $\bar{r} > 0$ such that $\text{diam}(R) < \bar{r}$ implies $\max_{x \in R}| g_L^R(x) - \Sigma_{\bar{x},x} | < \epsilon$.
Consider $\varphi_L^R$ and $\varphi_U^R$, lower and upper bound values for $\varphi$ in $R$.
By taking $\bar{r}$ small enough we can assume without loss of generality that $\psi(\varphi)$ has at most one flex point in $[\varphi_L^R,\varphi_U^R]$. 
We then have the following three cases.

\textit{CASE 1:} if $\psi(\varphi)$ is concave  then $g_L^R$ is defined as the line connecting the two extreme points of the interval $[\varphi_L^R,\varphi_U^R]$.
Since $\psi(\varphi)$ is concave, we have that it obtains its minimum in one of these two extrema, so that we have $$\min_{x \in R} g_L^R(x) = \min_{x \in R} \Sigma_{\bar{x},x}.$$
By Assumption 2 of kernel decompositions (see Definition \ref{def:kernel_decomposition}), it follows that $\psi$ is Lipschitz continuous on any compact interval, so that we have that:
$$ \lim_{r\to0} \left\vert \min_{x \in R} \Sigma_{\bar{x},x} - \max_{x \in R} \Sigma_{\bar{x},x} \right\vert = 0$$
where $r = \text{diam}(R)$.
Putting the two results together we have that the difference between $\min_{x \in R} g_L^R(x) $ and $ \max_{x \in R} \Sigma_{\bar{x},x}$ vanishes  whenever that $r$ tends to zero, which proves the statement.

\textit{CASE 2:} if $\psi(\varphi)$ is convex then $g_L^R$ is the Taylor expansion of $\psi(\varphi)$ around the mid-point of the interval, truncated at the first-order.
By continuity of $\varphi$ we then obtain that shrinking $r$ shrinks also the width of the interval $[\varphi_L^R,\varphi_U^R]$, which then, relying on the properties of truncation error of Taylor expansions, proves the lemma statement.

\textit{CASE 3:} in the case in which a flex point exists, $g_L^R$ is defined to be the maximum line that is below the two LBFs respectively defined over the convex and the concave part of the interval. 
Since by Case 1 and Case 2 these converge, we also have that $g_L^R$ converges.
\end{proof}

\subsection{Proof of Lemma \ref{Lemma:Gaussian}}
\begin{proof}
We provide the proof for the minimum; similar arguments also hold for the maximum.

By definition of $\mu^L_T$, $\mu^U_T$, ${\Sigma}^L_T $, ${\Sigma}^U_T$, we have that for every $x \in T$, $\bar{\mu}(x) \in [\mu^L_T, \mu^U_T]$ and $\bar{\Sigma}(x) \in [\Sigma^L_T, \Sigma^U_T]$.
Thus:
\begin{align*}
&\min_{x \in T} \int_{a}^{b} \mathcal{N}(\xi | \bar{\mu}(x),\bar{\Sigma}(x))d\xi
\geq \min_{\substack{  \mu \in [\mu^L_T,\mu^U_T] \\ \Sigma \in [{\Sigma}^L_T ,{\Sigma}^U_T]  }} \int_{a}^{b} \mathcal{N}(\xi | \mu,\Sigma) d \xi =\\
&\frac{1}{2} \min_{\substack{  \mu \in [\mu^L_T,\mu^U_T] \\ \Sigma \in [{\Sigma}^L_T ,{\Sigma}^U_T]  }} \left( \erf \left(\frac{\mu-a}{ \sqrt{2\Sigma}}\right) - \erf \left(\frac{\mu-b}{ \sqrt{2\Sigma}}\right) \right) = 
\frac{1}{2} \min_{\substack{  \mu \in [\mu^L_T,\mu^U_T] \\ \Sigma \in [{\Sigma}^L_T ,{\Sigma}^U_T]  }} \Phi(\mu,\Sigma)
\end{align*}
where we have set $\Phi(\mu,\Sigma) \defnotation \erf \left(\frac{\mu-a}{ \sqrt{2\Sigma}}\right) - \erf \left(\frac{\mu-b}{ \sqrt{2\Sigma}}\right)  $.
By looking at the partial derivatives we have that:
\begin{align*}
    &\frac{\partial \Phi(\mu,\Sigma)}{\partial \mu} = \frac{\sqrt{2}}{\sqrt{ \pi \Sigma }} \left( e^{-\frac{({\mu} - b)^2}{2\Sigma}  } -  e^{-\frac{({\mu} - a)^2}{2\Sigma}  } \right) 
    \geq 0 \Leftrightarrow 
     \mu \leq \frac{a + b}{2} = \mu^c
\end{align*}
and that if $\mu \not\in [a,b]$: 
\begin{align*}
    &\frac{\partial \Phi(\mu,\Sigma)}{\partial \Sigma} =   \frac{1}{\sqrt{2 \pi \Sigma^3 }} \left( ({\mu} - b_i) e^{-\frac{({\mu} - b_i)^2}{2\Sigma^2}  } - ({\mu} - a_i) e^{-\frac{({\mu} - a_i)^2}{2\Sigma^2}  } \right) 
    \geq 0\\ 
    &\Leftrightarrow 
    \Sigma \leq \frac{{(\mu-a)^2-(\mu-b)^2}}{{2 \log \frac{\mu-a}{\mu-b}}} = \Sigma^c(\mu) 
\end{align*}
as otherwise the last inequality has no solutions.
As such, $\mu^c$ and $\Sigma^c$ will correspond to global maximum with respect to $\mu$ and $\Sigma$, respectively.
As $\Phi$ is symmetric w.r.t.\ $\mu^c$ we have that the minimum value w.r.t. to $\mu$ is always obtained for the point furthest away from $\mu^c$, that is, at $\munderbar{\mu}^* = \argmax_{\mu \in [ \mu^L_T, \mu^U_T ]} \vert \mu^c - \mu \vert $.
The minimum value w.r.t. to $\Sigma$ will hence be either for $\Sigma^L_T$ or $\Sigma^U_T$, that is $\munderbar{\Sigma}^* =\argmin_{\Sigma \in \{\Sigma^L_T, \Sigma^U_T\}} \Phi(\munderbar{\mu}^*,\Sigma).$
\end{proof}

\section{Kernel Function Decomposition}\label{sec:decomposition}
In this section we compute explicit kernel decompositions $(\varphi,\psi,U)$ for several kernels of practical relevance in applications.
In particular, we give explicit formulas for the squared-exponential, the rational quadratic, the Mat\'ern (for half-integer values) and the periodic kernels, along with how kernel decomposition can be propagated through addition and multiplication with kernels.
Furthermore, we show how to compute kernel decompositions for generalised spectral kernels, both in the stationary and non-stationary case.

Throughout this section we assume $T = [x^L,x^U]$, for some $x^L, x^U \in \inputspace $. 
For building the bounding function $U$ we use the notation $x^{(i)}$, $i=1,\ldots,\tss$ for the set of input points, and $c_i$, $i=1,\ldots,\tss$, for their associated multiplying coefficients.

\subsection{Squared-Exponential Kernel}\label{sec:sqe_decomposition}
For the squared-exponential kernel, we build a bounded kernel decomposition by setting:
\begin{align*}
    \psi(\varphi) &= \sigma^2 \exp{ \left( -  \varphi  \right) } \\
    \varphi(x',x'') &= \sum_{j=1}^d \theta_j (x'_j -x''_j)^2.
\end{align*}
It is straightforward to notice that Assumptions $1$ and $2$ of Definition \ref{def:kernel_decomposition} are met by this decomposition.
Concerning the definition of $U$, consider a set $\x{1},\ldots,\x{\tss}$ of $\tss$ points in the input space and associated real coefficients $c_1,\ldots,c_\tss$.
For a hyper-rectangle $T = [x^L,x^U]$ we have that:
\begin{align*}
     \sup_{x \in T} \sum_{i=1}^{\tss} c_i \varphi(x,\x{i}) &= \sup_{x \in T} \sum_{i=1}^{\tss} c_i \sum_{j=1}^d \theta_j (x_j -\x{i}_j)^2 = 
     \sup_{x \in T} \sum_{j=1}^d \theta_j \sum_{i=1}^{\tss} c_i (x_j -\x{i}_j)^2  \\
      &=  \sup_{x \in T} \sum_{j=1}^d \left( \theta_j \sum_{i=1}^{\tss}c_i x_j^2   - 2 \theta_j \sum_{i=1}^{\tss}c_i \x{i} x_j + \theta_j  \sum_{i=1}^{\tss}c_i x^{(i)2}  \right) \\
      &= \sum_{j=1}^d \sup_{x_j\in [x^L_j,x^U_j]} \left( \theta_j \sum_{i=1}^{\tss}c_i x_j^2   - 2 \theta_j \sum_{i=1}^{\tss}c_i \x{i} x_j + \theta_j  \sum_{i=1}^{\tss}c_i x^{(i)2}  \right). 
\end{align*}
The right-hand-side of the last equation simply involves the computation of the maximum of a 1-d parabola over an interval of the real line, which can be done exactly and in constant time by simple inspection of the derivative function and by evaluating the function at the extrema of the interval.
Call $\bar{x}_j$ the only critical point of the $j$th parabola, and denote with $h_j(x_j) = \alpha_j x_j^2 + \beta_j x_j + \gamma_j $ the parabola associated with the $j$th coordinate value, with $\alpha_j =  \theta_j \sum_{i=1}^{\tss}c_i$, $\beta_j = - 2 \theta_j \sum_{i=1}^{\tss}c_i \x{i}$ and $\gamma_j = \theta_j  \sum_{i=1}^{\tss}c_i x^{(i)2} $, then we set:
\begin{align}\label{eq:bounding_sqe}
    U(\mathbf{c}) &= \sum_{j=1}^d  U_j(\mathbf{c})
\end{align}
where:
\begin{align*}
U_j(\mathbf{c}) &= \begin{cases} \max{ \{ h_j(x_j^L), h_j(x_j^U) , h_j(\bar{x}_j)   \}   }   \quad  & \text{if} \quad \bar{x}_j \in [x_j^L , x_j^U ]   \\
\max{ \{ h_j(x_j^L), h_j(x_j^U)    \}} \quad  & \text{otherwise}   \end{cases}. \\
\end{align*} 
%
Furthermore it follows that the time complexity for the computation of $U$ in the squared-exponential case is $\mathcal{O}(\tss + d)$.
\subsection{Rational Quadratic Kernel}
An analogous argument to the above holds for the rational quadratic kernel, where we can set:
\begin{align*}
    \psi(\varphi) &= \sigma^2\left(1+\frac{\varphi}{2}\right)^{-\alpha} \\
    \varphi(x',x'') &= \sum_{j=1}^d \theta_j (x'_j -x''_j)^2.
\end{align*}
As the definition of $\varphi$ is exactly the same as for the squared-exponential kernel, the bounding function $U$ can be defined as in Equation \eqref{eq:bounding_sqe}.

\subsection{Mat\'ern Kernel}
For half-integer values, the explicit form of the Mat\'ern Kernel allows us to find an analogous kernel decomposition to the two discussed above:
\begin{align*}
    \psi(\varphi) &= \sigma^2 k_{p} \exp{(-\sqrt{\hat{k}_p \varphi})}\sum_{l = 0}^{p}k_{l,p} \sqrt[{p-l}]{\hat{k}_p \varphi} \\  
    \varphi(x',x'') &= \sum_{j=1}^d \theta_j (x'_j -x''_j)^2.
\end{align*}
\subsection{Periodic Kernel}
For the periodic kernel we define:
\begin{align*}
    \psi(\varphi) &= \sigma^2 \exp(-0.5\varphi) \\
    \varphi(x',x'') &= \sum_{j = 1}^{d}\theta_j \sin(p_j(x'_j-x''_j))^2.
\end{align*}
Assumptions 1 and 2 are trivially satisfied because of the smoothness of $\psi$ and $\varphi$.
For the definition of the bounding function $U$ we have that:
\begin{align*}
    \sup_{x \in T} \sum_{i=1}^{\tss} c_i \varphi(x,\x{i}) =& \sup_{x \in T} \sum_{i=1}^{\tss} c_i \sum_{j = 1}^{d}\theta_j \sin(p_j(x_j-\x{i}_j))^2 \\ 
    \leq &  \sum_{i=1}^{\tss} \sum_{j = 1}^{d}   \sup_{x_j \in [ x^L_j , x^U_j ] }  c_i \theta_j \sin\left( p_j (x_j-\x{i}_j)  \right)^2  .
\end{align*}
The supremum in the final equation can be obtained by simply inspecting the derivative of $c_i \theta_j  \sin\left( p_j (x_j-\x{i}_j)  \right)^2$ and its function value at the extrema of each interval  $ [ x^L_j , x^U_j ]$.
Let $U_{ij}(c_i)$ be the value computed in such a way for each $i$ and $j$, then we define:
\begin{align}\label{eq:priodicU}
    U(\mathbf{c}) &= \sum_{j=1}^d \sum_{i=1}^{\tss}  U_{ij}(c_i).
\end{align}
Furthermore it follows that the time complexity for the computation of $U$ in the squared-exponential case is $\mathcal{O}(\tss d)$.

\subsection{Kernel Addition}\label{app:kernel_addition}
Consider now the case in which the kernel function $\Sigma$ is defined by linear composition of two kernels $\Sigma'$ and $\Sigma''$ such as:
\begin{align}
     \Sigma_{x',x''} = k' \Sigma'_{x',x''} + k'' \Sigma''_{x',x''} \qquad \forall x',x'' \in \inputspace
\end{align}
for some given $k'$ and $k'' \geq 0$.
Then, we have that kernel decomposition for $\Sigma'$ and $\Sigma''$ can be simply propagated through the sum.
To see that, let $(\varphi',\psi',U')$ and $(\varphi'',\psi'',U'')$ be the two kernel decomposition.
Then, by simply summing up the LBFs and UBFs for $\Sigma'$ and $\Sigma''$, Proposition \ref{prop:lbf_ubf_kernel} can be generalised to this case as follows.
\begin{proposition}\label{prop:lbf_ubf_kernel_sum}
    Let $g'_L$, $g'_U$, $g''_L$ and  $g''_U$ be lower and upper bounding function for $\Sigma'_{x,\bar{x}}$ and $\Sigma''_{x,\bar{x}}$, for all $x \in T$, as computed in Proposition \ref{prop:lbf_ubf_kernel}. 
    Then
    \begin{align*}
    g_L(x) &= k' g'_L(x) + k'' g''_L(x)  \\
    g_U(x) &= k' g'_U(x) + k'' g''_U(x)
    \end{align*}
    are respectively lower and upper bounding functions on $\Sigma_{x,\bar{x}}$.
\end{proposition}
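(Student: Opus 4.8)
The plan is to prove the statement by a direct non-negative linear combination of the two sets of bounding inequalities supplied by Proposition \ref{prop:lbf_ubf_kernel}. The essential observation is that multiplying an inequality by a non-negative scalar preserves its direction, and that summing inequalities of the same direction is valid; since the composite kernel $\Sigma$ is by definition exactly the same non-negative linear combination of $\Sigma'$ and $\Sigma''$, the bounds assemble automatically.

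Concretely, I would first invoke Proposition \ref{prop:lbf_ubf_kernel} applied separately to $\Sigma'$ and $\Sigma''$ with their respective decompositions $(\varphi',\psi',U')$ and $(\varphi'',\psi'',U'')$, giving, for every $x \in T$,
\begin{align*}
    g'_L(x) \leq \Sigma'_{x,\bar{x}} \leq g'_U(x), \qquad
    g''_L(x) \leq \Sigma''_{x,\bar{x}} \leq g''_U(x).
\end{align*}
Since $k' \geq 0$ and $k'' \geq 0$ by hypothesis, scaling each chain by the corresponding coefficient preserves the inequalities, and adding the resulting chains termwise yields
\begin{align*}
    k' g'_L(x) + k'' g''_L(x) \leq k' \Sigma'_{x,\bar{x}} + k'' \Sigma''_{x,\bar{x}} \leq k' g'_U(x) + k'' g''_U(x)
\end{align*}
for all $x \in T$. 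The central expression equals $\Sigma_{x,\bar{x}}$ by the defining relation of the composite kernel, so substituting the definitions $g_L = k' g'_L + k'' g''_L$ and $g_U = k' g'_U + k'' g''_U$ gives $g_L(x) \leq \Sigma_{x,\bar{x}} \leq g_U(x)$, which is precisely the claim.

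There is no genuinely hard step here; the argument is elementary. The only point requiring care, and the reason the hypothesis $k', k'' \geq 0$ is stated explicitly, is the sign condition: non-negativity of the mixing coefficients is exactly what guarantees that scaling does not flip the inequality directions. I would also briefly note that $g_L$ and $g_U$ retain the structural form needed downstream, since a linear combination of the linear-in-$\varphi$ bounding functions from Proposition \ref{prop:lbf_ubf_kernel} remains a function of the same analytic type (a sum of terms affine in $\varphi'$ and $\varphi''$), so the bounds can be propagated through the inference equations in the subsequent sections exactly as the single-kernel bounds are. This same monotone-combination argument generalises immediately to finite sums of more than two kernels by induction.
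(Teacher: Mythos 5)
Your proof is correct and is essentially the paper's own argument: the paper states the proposition immediately after observing that the decompositions "can be simply propagated through the sum," i.e., the bounds follow by scaling each pair of inequalities by the non-negative coefficients $k', k''$ and adding them termwise, exactly as you do. Your explicit remark that non-negativity of $k', k''$ is what prevents the inequalities from flipping, and that the combined bounds remain affine in $\varphi', \varphi''$ for downstream propagation, matches the paper's (terser) treatment.
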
  
As a consequence of the above proposition, it immediately follows that the infimum of the posterior mean function over the compact set $T$ can be safely lower-bounded for the kernel $\Sigma$ by setting:
\begin{align*}
    \mu^L_T =  k' \mu'^L_T + k'' \mu''^L_T,
\end{align*}
where $\mu'^L_T$ and $\mu''^L_T$ are computed by applying Proposition \ref{prop:mean_bound} to the kernels $\Sigma'$ and $\Sigma''$.
Similarly, Propositions \ref{prop:variance_min} and \ref{prop:variance_max} can be generalised by considering two sets of slack variables, one associated to $\varphi'$ and one to $\varphi''$, and relying directly on the lower- and upper-bounding functions defined in Proposition \ref{prop:lbf_ubf_kernel_sum}. 
    
%
\subsection{Kernel Multiplication}\label{eq:kernel_mult}
When two kernels are combined through multiplication, we have that $\Sigma_{x',x''} = \Sigma'_{x',x''} \Sigma''_{x',x''}  $.
This case can be reduced to the addition by considering the following McCormick's inequalities \citep{mccormick1976computability}: 
\begin{align}
  \Sigma_{x',x''} &=  \Sigma'_{x',x''} \Sigma''_{x',x''} \geq \Sigma'_L \Sigma''_{x',x''}  +  \Sigma'_{x',x''} \Sigma''_L - \Sigma'_L \Sigma''_L  \label{eq:mclower}\\
  \Sigma_{x',x''} &=  \Sigma'_{x',x''} \Sigma''_{x',x''} \leq \Sigma'_U  \Sigma''_{x',x''} +  \Sigma'_{x',x''}  \Sigma''_L - \Sigma'_U  \Sigma''_L, \label{eq:mcupper}
\end{align}
where $\Sigma'_L$, $\Sigma'_U$, $\Sigma''_L$ and $\Sigma''_U$ are lower and upper bound values for $\Sigma'$ and $\Sigma''$ in $T$, respectively.
Then we can proceed by using the kernel summation of Equation \eqref{eq:mclower} when computing lower bounding function on the kernel, and  Equation \eqref{eq:mcupper} when computing the upper bounding function, and by using the techniques discussed in the section just above.
\subsection{Generalised Spectral Kernel}
We show how to find kernel decompositions compatible with our optimisation framework for generalised spectral kernels \citep{samo2015generalized}.
We note that these are dense in the space of kernel functions, so that they can be used to derive any kernel up to an arbitrary small error tolerance. 
\paragraph{Stationary Kernel}
For the stationary case, we have:
\begin{align}\label{eq:spectral_kernel}
    \Sigma_{x',x''} = \sum_{k=1}^K \sigma^2 h( (x'-x'') \odot \theta^{(k)} ) \cos( w_k^T (x'-x'')),       
\end{align}
where $\theta^{(k)} \geq 0$, and $h$ is any given positive definite function; in particular, we choose $h( (x'-x'') \odot \theta^{(k)}) = \exp \left(  - \sum_{j=1}^m \theta^{(k)}_j (x' -x'')^2 \right) $.
We now show how a bounded kernel decomposition $(\varphi,\psi,U)$ can be derived for this kernel.

We first observe that the kernel is obtained by summing over $K$ different kernel components.
According to the results for kernel addition described in Appendix \ref{app:kernel_addition}, it suffices to find a bounded kernel decomposition for each summand of Equation \eqref{eq:spectral_kernel}, i.e., for:
\begin{align*}
   k(x',x'') =  \sigma^2 \exp \left(  - \sum_{j=1}^m \theta_j (x' -x'')^2 \right) \cos(w^T (x'-x'')).
\end{align*}
In turn, by setting $k_1(x',x'') =  \sigma^2 \exp \left(  - \sum_{j=1}^d \theta_j (x' -x'')^2 \right) $ and $k_2(x',x'') = \cos(w^T (x'-x''))$, we have that $k(x',x'') = k_1(x',x'') k_2(x',x'')$, and thus a kernel decomposition can be found by using the formulas for kernel multiplication derived in Appendix \ref{eq:kernel_mult} to  $k_1(x',x'')$ and $k_2(x',x'')$.

Observe that $k_1(x',x'')$ has the same shape as the squared-exponential kernel, for which kernel decomposition was derived in Appendix \ref{sec:sqe_decomposition}.
For $k_2(x',x'')$, we set
\begin{align}
    \varphi(x',x'') = \sum_{j=1}^d w_j (x'_j-x''_j) \label{eq:varphi_for_cos} \\
    \psi(\varphi) = \cos(\varphi).
\end{align}
We note that Assumptions 1 and 2 of Definition \ref{def:kernel_decomposition} are satisfied by this decomposition.
For the definition of a bounding function $U$, i.e., Assumption 3, we have:
\begin{align*}
     \sup_{x \in T} \sum_{i=1}^{\tss} c_i \varphi(x,\x{i}) &= \sup_{x \in T} \sum_{i=1}^{\tss} c_i  \sum_{j=1}^d w_j (x_j-x^{(i)}_j) = \sup_{x \in T} \sum_{j=1}^d \sum_{i=1}^{\tss} c_i w_j (x_j-x^{(i)}_j) = \\
     & \sup_{x \in T}   \sum_{j=1}^d \left( \sum_{i=1}^{\tss} c_i \right) w_j x_j - \sum_{j=1}^d \sum_{i=1}^{\tss} w_j x^{(i)}_j =  \sum_{j=1}^d  \sup_{x \in T} \bar{w}_j x_j - \beta,
\end{align*}
where $\bar{w}_j  = \left( \sum_{i=1}^{\tss} c_i \right) w_j$ and $\beta = \sum_{j=1}^d \sum_{i=1}^{\tss} w_j x^{(i)}_j $.
As the above is a linear form, we have that the supremum of $\bar{w}_j x_j$ occurs in  the point $x_j^*= x_j^U$ if $\bar{w}_j \geq 0$ and in $x_j^*= x_j^L$ otherwise.
Thus, we have that $U_{k_2}(\mathbf{c}) = \sum_{j=1}^d \bar{w}_j x_j^* - \beta$ is a valid upper-bound function for the sub-kernel $k_2$. 

\paragraph{Non-Stationary Kernel}
In the non-stationary case, we have:
\begin{align*}
    \Sigma_{x',x''} = \sum_{k=1}^K \sigma^2_k \bar{k} \left( x' \odot \theta^{(k)} , x'' \odot \theta^{(k)}  \right) \Psi_k(x')^T \Psi_k(x''),
\end{align*}
where $\theta^{(k)} \geq 0 $, $\bar{k}$ is a positive semi-definite, continuous and integrable function and $$\Psi_k(x) = \left[\cos \left( x^T w^{(k)}_1  \right) + \cos \left( x^T w^{(k)}_2  \right) , \sin \left( x^T w^{(k)}_1  \right) + \sin \left( x^T w^{(k)}_2  \right) \right].$$
In particular we choose:
\begin{align*}
    \bar{k} \left( x'\odot \theta , x''\odot \theta\right) = k(x'\odot \theta)k(x''\odot \theta) =  e^{x' \odot \theta }e^{x''\odot \theta} = e^{\sum_j \theta_j( x'_j + x''_j)}.
\end{align*}
Proceeding similarly to the case of stationary kernels, we can analyse each summand and factor in isolation.
The final decomposition can then be obtained by using the addition and multiplication formulas for kernel decompositions derived in Appendix \ref{app:kernel_addition} and \ref{eq:kernel_mult}. 

For $\bar{k} \left( x' , x''\right)$, we select:
\begin{align*}
    \varphi(x',x'') &= \sum_j \theta_j( x'_j + x''_j) \\
    \psi(\varphi) &= e^\varphi.
\end{align*}
Since $\varphi$ has exactly the same shape as for that in Equation \eqref{eq:varphi_for_cos}, a similar argument can be used for finding the upper-bound function.

It remains only to find a decomposition for $\Psi_k(x')^T \Psi_k(x'')$. 
In particular, we have:
\begin{align*}
    \Psi(x')^T \Psi(x'') &= \left[    \cos ( x'^T w_1  )  + \cos ( x'^T w_2  )    \right] \left[    \cos ( x''^T w_1  )  + \cos ( x''^T w_2  )    \right]  \\
    &+  \left[    \sin ( x'^T w_1  )  + \sin ( x'^T w_2  )    \right] \left[    \sin ( x''^T w_1  )  + \sin ( x''^T w_2  )    \right] \\     
    & = \cos ( x'^T w_1  ) \cos ( x''^T w_1  ) + \cos ( x'^T w_1  ) \cos ( x''^T w_2  ) \\
    & + \cos ( x'^T w_2  ) \cos ( x''^T w_1  ) + \cos ( x'^T w_2  ) \cos ( x''^T w_2  ) \\
     & + \sin ( x'^T w_1  ) \sin ( x''^T w_1  ) + \sin ( x'^T w_1  ) \sin ( x''^T w_2  ) \\
    & + \sin ( x'^T w_2  ) \sin ( x''^T w_1  ) + \sin ( x'^T w_2  ) \sin ( x''^T w_2  ) 
\end{align*}
Again, we can focus on the single factor from the equation above, and rely on the addition and multiplication formulas to obtain the overall result.
We consider the first factor, $\cos ( x'^T w_1  ) \cos ( x''^T w_1  )$, and select:
\begin{align*}
    \varphi(x',x'') = \cos ( x'^T w_1  ) \cos ( x''^T w_1  ) \\
    \psi(\varphi) = \varphi.
\end{align*}
For the computation of the upper-bound function, we have the following:
\begin{align*}
    \sup_{x} \sum_i c_i \cos ( x^{(i),T} w_1  ) \cos ( x^T w_1  ) \leq \sum_i \sup_{x} c_i  \cos( x^{(i),T} w_1  ) \cos( x^T w_1  ) = \sum_i \sup_{x} \gamma_i \cos( x^T w_1  ),
\end{align*}
where we define $\gamma_i = c_i  \cos( x^{(i),T} w_1  )$. It is thus straightforward to find the maximum of the right-hand-side equation by inspecting the derivatives of the cosine function.

\vskip 0.2in
\bibliography{sample.bib}

\end{document}